\newcommand{\R}{{\mathbf{R}}}
\newcommand{\X}{{\mathbf{X}}}
\newcommand{\Y}{{\mathbf{Y}}}
\newcommand{\F}{{\mathbf{F}}}
\newcommand{\E}{{\mathbf{E}}}
\newcommand{\y}{{\mathbf{Y}}}
\newcommand{\p}{{\mathbf{P}}}
\newcommand{\W}{{\mathbf{W}}}
\newcommand{\Tr}{{\mathrm{Tr}}}
\newtheorem{theorem}{Theorem}[section]
\newtheorem{lemma}[theorem]{Lemma}
\newtheorem{corollary}[theorem]{Corollary}
\newenvironment{proof}[1][Proof]{\begin{trivlist}
\item[\hskip \labelsep {\bfseries #1}]}{\end{trivlist}}
\newcommand{\qed}{\nobreak \ifvmode \relax \else
      \ifdim\lastskip<1.5em \hskip-\lastskip
      \hskip1.5em plus0em minus0.5em \fi \nobreak
      \vrule height0.75em width0.5em depth0.25em\fi}
\renewcommand{\qed}{$\blacksquare$}
\title{Online Active Linear Regression via Thresholding}
\author{
Carlos Riquelme \\
  Stanford University \\
  \texttt{rikel@stanford.edu} \\
  \And
  Ramesh Johari \\
  Stanford University \\
  \texttt{rjohari@stanford.edu} \\
   \And
  Baosen Zhang \\
  University of Washington \\
  \texttt{zhangbao@uw.edu} \\
}
\begin{document}

\maketitle

\begin{abstract} 
We consider the problem of online active learning to collect data for regression modeling.
Specifically, we consider a decision maker with a limited experimentation budget who must efficiently learn an underlying linear population model.
Our main contribution is a novel threshold-based algorithm for selection of most informative observations; we characterize its performance and fundamental lower bounds.
We extend the algorithm and its guarantees to sparse linear regression in high-dimensional settings.
Simulations suggest the algorithm is remarkably robust: it provides significant benefits over passive random sampling in real-world datasets that exhibit high nonlinearity and high dimensionality --- significantly reducing both the mean and variance of the squared error.
\end{abstract}

\section{Introduction}
\label{intro}
This paper studies {\em online active learning} for estimation of linear models.  Active learning is motivated by the premise that in many sequential data collection scenarios, labeling or obtaining output from observations is costly.  Thus ongoing decisions must be made about whether to collect data on a particular unit of observation.
Active learning has a rich history; see, e.g., \cite{cohn1996active,cohn1994improving,castro2007minimax,koltchinskii2010rademacher,balcan2010true}.

As a motivating example, suppose that an online marketing organization plans to send display advertising promotions to a new target market.  Their goal is to estimate the revenue that can be expected for an individual with a given covariate vector.  Unfortunately, providing the promotion and collecting data on each individual is costly.  Thus the goal of the marketing organization is to acquire first the most ``informative'' observations.  They must do this in an online fashion: opportunities to display the promotion to individuals arrive sequentially over time.  In online active learning, this is achieved by selecting those observational units (target individuals in this case) that provide the most information to the model fitting procedure. 

Linear models are ubiquitous in both theory and practice---often used even in settings where the data may exhibit strong nonlinearity---in large part because of their interpretability, flexibility, and simplicity.
As a consequence, in practice, people tend to add a large number of features and interactions to the model, hoping to capture the right signal at the expense of introducing some noise.
Moreover, the input space can be updated and extended iteratively \emph{after} data collection if the decision maker feels predictions on a held-out set are not good enough.
As a consequence, often times the number of covariates becomes higher than the number of available observations.
In those cases, selecting the subsequent most informative data is even more critical.
Accordingly, our focus is on actively choosing observations for optimal \emph{prediction} of the resulting high-dimensional linear models.

Our main contributions are as follows.
We initially focus on standard linear models, and build the theory that we later extend to high dimensional settings. 
First, we develop an algorithm that sequentially selects observations if they have sufficiently large norm, in an appropriate space (dependent on the data-generating distribution).
Second, we provide a comprehensive theoretical analysis of our algorithm, including upper and lower bounds.  We focus on minimizing mean squared prediction error (MSE), and show a high probability upper bound on the MSE of our approach (cf.~Theorem \ref{th:main}).
In addition, we provide a lower bound on the best possible achievable performance in high probability and expectation (cf.~Section \ref{lower}).
In some distributional settings of interest we show that this lower bound structurally matches our upper bound, suggesting our algorithm is near-optimal.

The results above show that the improvement of active learning progressively weakens as the dimension of the data grows, and a new approach is needed.
To tackle our original goal and address this degradation, under standard sparsity assumptions, we design an adaptive extension of the thresholding algorithm that initially devotes some budget to learn the sparsity pattern of the model, in order to subsequently apply active learning to the relevant lower dimensional subspace.
We find that in this setting, the active learning algorithm provides significant benefit over passive random sampling.
Theoretical guarantees are given in Theorem \ref{th:lasso_algo}.

Finally, we empirically evaluate our algorithm's performance.
Our tests on real world data show our approach is remarkably robust: the gain of active learning remains significant even in settings that fall outside our theory. 
Our results suggest that the threshold-based rule may be a valuable tool to leverage in observation-limited environments, even when the assumptions of our theory may not exactly hold.

Active learning has mainly been studied for classification; see, e.g., \cite{balcan2006agnostic,dasgupta2007general,balcan2007margin,wang2014noise,dasgupta2008hierarchical}.  For regression, see, e.g., \cite{krause2007nonmyopic,sugiyama2009pool,cai2013maximizing} and the references within.  A closely related work to our setting is \cite{sabato2014active}: they study online or stream-based active learning for linear regression, with random design.
They propose a theoretical algorithm that partitions the space by stratification based on Monte-Carlo methods, where a recently proposed algorithm for linear regression \cite{hsu2014heavy} is used as a black box. 
It converges to the globally optimal oracle risk under possibly misspecified models (with suitable assumptions). Due to the relatively weak model assumptions, they achieve a \emph{constant} gain over passive learning.  As we adopt stronger assumptions (well-specified model), we are able to achieve larger than constant gains, with a computationally simpler algorithm.
Suppose covariate vectors are Gaussian with dimension $d$; the total number of observations is $n$; and the algorithm is allowed to label at most $k$ of them.
Then, we beat the standard $\sigma^2 d / k$ MSE to obtain $\sigma^2 d^2 / [kd + 2(\delta - 1) k \log k]$ when $n = k^\delta$, so active learning truly improves performance when $k = \Omega(\exp(d))$ or $\delta = \Omega(d)$.
While \cite{sabato2014active} does not tackle high-dimensional settings, we overcome the exponential data requirements via $l_1$-regularization.

The remainder of the paper is organized as follows.
We define our setting in Section \ref{problem}.
In Section \ref{algo}, we introduce the algorithm and provide analysis of a corresponding upper bound.
Lower bounds are given in Section \ref{lower}.  Simulations are presented in Section \ref{sim}, and Section \ref{conclusions} concludes.

\section{Problem Definition}
\label{problem}
The online active learning problem for regression is defined as follows.
We sequentially observe $n$ covariate vectors in a $d$-dimensional space $X^i \in \R^d$, which are i.i.d.
When presented with the $i$-th observation, we must choose whether we want to \emph{label} it or not, i.e., choose to observe the outcome.
If we decide to label the observation, then we obtain $Y^i \in \R$.
Otherwise, we do not see its label, and the outcome remains unknown.
We can label at most $k$ out of the $n$ observations.

We assume covariates are distributed according to some \emph{known} distribution $\mathbf{D}$, with zero mean $\E X = 0$, and covariance matrix $\Sigma = \E XX^T$.
We relax this assumption later.
In addition, we assume that $Y$ follows a linear model: $Y = X^T \beta^* + \epsilon$, where $\beta^* \in \R^d$ and $\epsilon \sim \mathcal{N}(0, \sigma^2)$ i.i.d.
We denote observations by $X, X^i \in \R^{d}$, components by $X_j \in \R$, and sets in boldface: $\X \in \R^{k \times d}, \Y \in \R^k$.

After selecting $k$ observations, $(\X, \Y)$, we output an estimate $\hat\beta_k \in \R^d$, with no intercept.\footnote{We assume covariates and outcome are centered.}
Our goal is to minimize the expected MSE of $\hat\beta_k$ in $\Sigma$ norm, i.e. $\E \| \hat\beta_k - \beta^* \|^2_{\Sigma}$, under random design; that is, when the $X_i$'s are random and the algorithm may be randomized.  This is related to the \emph{A}-optimality criterion, \cite{pukelsheim1993optimal}.
We use the experimentation budget to minimize the variance of $\hat\beta_k$ by sampling $\X$ from a different thresholded distribution.
Minimizing expected MSE is equivalent to minimizing the trace of the normalized inverse of the \emph{Fisher information matrix} $\X^T\X$,
\begin{align*}
\E[(Y - X^T \hat\beta_k)^2] &= \E[\| \hat\beta_k - \beta^* \|^2_\Sigma] + \sigma^2 \\
&= \sigma^2 \ \E\left[ \mathrm{Tr}(\Sigma (\X^T\X)^{-1}) \right] + \sigma^2
\end{align*}
where expectations are over all sources of randomness.  In this setting, the OLS estimator is the best linear unbiased estimator by the \emph{Gauss--Markov Theorem}.
Also, for any set $\X$ of $k$ i.i.d.\ observations, $\hat\beta_k := \hat\beta_k^{OLS}$ has sampling distribution $\hat\beta_k \mid \X \sim \mathcal{N}(\beta^*, \sigma^2 (\X^T\X)^{-1})$, \cite{hoerl1970ridge}.
In Section \ref{ridge}, we tackle high-dimensionality, where $k \le d$, via Lasso estimators within a two-stage algorithm.

\section{Algorithm and Main Results}
\label{algo}
In this section we motivate the algorithm, state the main result quantifying its performance for general distributions, and provide a high-level overview of the proof.
A corollary for the Gaussian distribution is presented, and we also extend the algorithm by making the threshold adaptive.
Finally, we show how to generalize the results to \emph{sparse} linear regression.
In Appendix E, we derive a CLT approximation with guarantees that is useful in complex or unknown distributional settings.

Without loss of generality, we assume that each observation is \emph{white}, that is, $\E[XX^T]$ is the identity matrix. For correlated observations $X^\prime$, we apply $X := D^{-1/2} U^T X^\prime$ to whiten them, $\Sigma = U D U^T$ (see Appendix A).
Note that $\mathrm{Tr}(\Sigma ({\X^\prime}^T\X^\prime)^{-1}) = \mathrm{Tr}((\X^T\X)^{-1})$.

We bound the whitened trace as
\begin{equation}\label{eq:trace_bound}
\frac{d}{\lambda_{\max}({\X}^T {\X})} \le \mathrm{Tr}(({\X}^T {\X})^{-1}) \le \frac{d}{\lambda_{\min}({\X}^T {\X})}.
\end{equation}
To minimize the expected MSE, we need to maximize the minimum eigenvalue of $\X^T \X$ with high probability.
The thresholding procedure in Algorithm \ref{alg:threshold} maximizes the minimum eigenvalue of $\X^T \X$ through two observations. \emph{First}, since the sum of eigenvalues of $\X^T\X$ is the trace of $\X^T\X$, which is in turn the sum of the norm of the observations, the algorithm chooses observations of large (weighted) norm. \emph{Second}, the eigenvalues of $\X^T \X$ should be balanced, that is, have similar magnitudes. This is achieved by selecting the appropriate weights for the norm.

Let $\xi \in \R^{d}_+$ be a vector of weights defining the norm $\| X \|_{\xi}^2 = {\sum_{j=1}^d \xi_j X_j^2}$. Let $\Gamma > 0$ be a threshold. Algorithm \ref{alg:threshold} simply selects the observations with $\xi$-weighted norm larger than $\Gamma$. The selected observations can be thought as i.i.d. samples from an induced distribution $\bar{\mathbf{D}}$: the original distribution conditional on $\|X \|_{\xi}\ge\Gamma $. Suppose $k$ observations are chosen and denoted by $\X \in \R^{k \times d}$. Then  $\E \X^T \X = \sum_{i=1}^k \E {X^i} {{X^i}}^T = \sum_{i=1}^k H^i = k H$, where $H$ is the covariance matrix with respect to $\bar{\mathbf{D}}$. This covariance matrix is diagonal under density symmetry assumptions, as thresholding preserves uncorrelation; its diagonal terms are
\begin{equation}
H_{jj} = \E_{\bar{\mathbf{D}}} {X}_{j}^2 = \E_{\mathbf{D}} [{X}_{j}^2 \mid \|  X \|_{\xi} \ge \Gamma] =: \phi_j.
\end{equation}
Hence,
$
\lambda_{\min}(\E \X^T\X) = k \min_j \phi_j,$ and $\lambda_{\max}(\E \X^T\X) = k \max_j \phi_j.
$
The main technical result in Theorem \ref{th:main} is to link the eigenvalues of the random matrix $\X^T\X$ to its deterministic counter part $\E \X^T \X$.
From the above calculations, the goal is to find $(\xi, \Gamma)$ such that $\min_j \phi_j \approx \max_j \phi_j$, and both are as large as possible. The first objective is achieved when there exists some $\phi$ such that
\begin{align}\label{eq:exp_threshold_cond}
\E_{\mathbf{D}} [ X_{j}^2 \mid \| X \|_{\xi} \ge \Gamma] = \phi_j = \phi, \text{ for all } j.
\end{align}
We note that if $X$ has independent components with the same marginal distribution (after whitening), then it suffices to choose $\xi_j=1$ for all $j$. It is necessary to choose unequal weights when the marginal distributions of the components are different, e.g., some are Gaussian and some are uniform, or components are dependent.
For joint Gaussian, whitening removes dependencies, so we set $\xi_j=1$.

\subsection{Thresholding Algorithm}
The algorithm is simple. For each incoming observation $X^i$ we compute its weighted norm $\| X^i \|_\xi$ (possibly after whitening if necessary).
If the norm is above the threshold $\Gamma$, then we select the observation, otherwise we ignore it.
We stop when we have collected $k$ observations. Note that \emph{random sampling} is equivalent to setting $\Gamma = 0$.

We want to catch the $k$ largest observations given our budget, therefore we require that $\Gamma$ satisfies
\begin{equation}\label{eq:prob_threshold_cond}
\p_{\mathbf{D}}\left( \| X \|_{\xi} \ge \Gamma \right) = k/n.
\end{equation}
If we apply this rule to $n$ independent observations coming from $\mathbf{D}$, on average we select $k$ of them: the $\xi-$largest.
If $(\xi, \Gamma)$ is a solution to \eqref{eq:exp_threshold_cond} and \eqref{eq:prob_threshold_cond}, then $(c \ \xi, \sqrt{c} \ \Gamma)$ is also a solution for any $c > 0$.
So we require $\sum_i \xi_i = d$.

\begin{algorithm}[ht]
\begin{algorithmic}[1]
\STATE Set $(\xi, \Gamma) \in \R^{d+1}$ satisfying \eqref{eq:exp_threshold_cond} and  \eqref{eq:prob_threshold_cond}.
\STATE Set $S = \emptyset$.
\FOR{observation $1 \le i \le n$}
\STATE Observe $X^i$.
\STATE Compute ${X^i} = {D}^{-1/2} U^T X^i$.
\IF{$\| {X^i} \|_{\xi} > \Gamma$ or $k - |S| = n - i + 1$}
\STATE Choose $X^i$: $S = S \cup X^i$.
\IF{$|S| = k$}
\STATE \textbf{break}.
\ENDIF
\ENDIF
\ENDFOR
\STATE Return OLS estimate $\hat\beta$ based on observations in $S$.
\end{algorithmic}
\caption{Thresholding Algorithm.}
\label{alg:threshold}
\end{algorithm}

Algorithm \ref{alg:threshold} can be seen as a regularizing process similar to ridge regression, where the amount of regularization depends on the distribution $\mathbf{D}$ and the budget ratio $k/n$; it improves the conditioning of the problem.

Guarantees when $\Sigma$ is unknown can be derived as follows: we allocate an initial sequence of points to estimation of the inverse of the covariance matrix, and the remainder to labeling (where we \emph{no} longer update our estimate).
In this manner observations remain independent.
Note that $O(d)$ observations are required for accurate recovery when $\mathbf{D}$ is subgaussian, and $O(d \log d)$ if subexponential, \cite{vershynin2010introduction}.
Errors by using the estimate to whiten and make decisions are bounded, small with high probability (via Cauchy--Schwarz), and the result is equivalent to using a slightly worse threshold.

\setcounter{algorithm}{0}
\begin{algorithm}[ht]
\begin{algorithmic}[1]
\STATE Set $S = \emptyset$.
\FOR{observation $1 \le i \le n$}
\STATE Observe $X^i$, estimate $\widehat{\Sigma}_i = \widehat{U}_i \widehat{D}_i \widehat{U}_i^T$.
\STATE Compute ${X^i} = \widehat{D}_i^{-1/2} \widehat{U}_i^T X_i$.
\STATE Let $(\xi_i, \Gamma_i)$ satisfy \eqref{eq:exp_threshold_cond} and \eqref{eq:prob_threshold_adaptive}.
\IF{$\| {X^i} \|_{\xi_i} > \Gamma_i$ or $k - |S|$=$n - i + 1$}
\STATE Choose $X^i$: $S = S \cup X^i$.
\IF{$|S| = k$}
\STATE \textbf{break}.
\ENDIF
\ENDIF
\ENDFOR
\STATE Return OLS estimate $\hat\beta$ based on observations in $S$.
\end{algorithmic}
\caption{\textbf{b} \ Adaptive Thresholding Algorithm.}
\label{alg:threshold_b}
\end{algorithm}

Algorithm \ref{alg:threshold} keeps the threshold fixed from the beginning, leading to a mathematically convenient analysis, as it generates i.i.d.\ observations.
However, Algorithm 1b, which is adaptive and updates its parameters after each observation, produces slightly better results, as we empirically show in Appendix K.
Before making a decision on ${X}^i$, Algorithm 1b finds $(\xi_i, \Gamma_i)$ satisfying \eqref{eq:exp_threshold_cond} and
\begin{equation}\label{eq:prob_threshold_adaptive}
\p_{\mathbf{D}}\left( \| X^i \|_{\xi_i} \ge \Gamma_i \right) = \frac{k - |S_{i-1}|}{n - i + 1},
\end{equation}
where $|S_{i-1}|$ is the number of observations already labeled.
The idea is identical: set the threshold to capture, on average, the number of observations still to be labeled, that is $k - |S_{i-1}|$, out of the number still to be observed, $n - i + 1$.

Importantly, active learning not only decreases the expected MSE, but also its variance.
Since the variance of the MSE for fixed $\X$ depends on $\sum_j 1/\lambda_j(\X^T\X)^2$ \cite{hoerl1970ridge}, it is also minimized by selecting observations that lead to large eigenvalues of $\X^T \X$.

\subsection{Main Theorem}
Theorem \ref{th:main} states that by sampling $k$ observations from $\bar{\mathbf{D}}$ where $(\xi, \Gamma)$ satisfy \eqref{eq:exp_threshold_cond}, the estimation performance is significantly improved, compared to randomly sampling $k$ observations from the original distribution. 
Section \ref{lower} shows the gain in Theorem \ref{th:main} essentially cannot be improved and Algorithm \ref{alg:threshold} is optimal.
A sketch of the proof is provided at the end of this section (see Appendix B).
\begin{theorem}\label{th:main}
Let $n > k > d$.
Assume observations $X \in \R^d$ are distributed according to subgaussian $\mathbf{D}$ with covariance matrix $\Sigma \in \R^{d \times d}$.
Also, assume marginal densities are symmetric around zero after whitening.
Let $\X$ be a $k \times d$ matrix with $k$ observations sampled from the distribution induced by the thresholding rule with parameters $(\xi, \Gamma) \in \R^{d+1}_+$ satisfying \eqref{eq:exp_threshold_cond}.
Let $\alpha > 0$, so that $t = \alpha \sqrt{k} - C \sqrt{d} > 0$, then, with probability at least $1 - 2 \exp(-c t^2)$
\begin{equation}\label{eq:th_conseq}
\mathrm{Tr}(\Sigma (\X^T\X)^{-1}) \le \frac{d}{(1 - \alpha)^2 \ \phi k},
\end{equation}
where constants $c, C$ depend on the subgaussian norm of $\bar{\mathbf{D}}$.
\end{theorem}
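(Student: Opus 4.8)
The plan is to reduce the trace bound to a lower bound on the smallest eigenvalue of the (whitened) Gram matrix $\X^T\X$, and then to obtain that eigenvalue bound from a non-asymptotic concentration inequality for the smallest singular value of a random matrix with independent subgaussian rows. First I would use the whitening identity recorded above, $\mathrm{Tr}(\Sigma(\X^T\X)^{-1}) = \mathrm{Tr}((\X^T\X)^{-1})$, to pass to whitened data, which I continue to denote $\X$; by the right-hand inequality in \eqref{eq:trace_bound} it then suffices to prove that, on the stated event, $\lambda_{\min}(\X^T\X) \ge (1-\alpha)^2 \phi k$.

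Next I would pin down the deterministic target. The rows of $\X$ are i.i.d.\ samples from $\bar{\mathbf{D}}$, so $\E \X^T\X = kH$ with $H = \E_{\bar{\mathbf{D}}}[XX^T]$. The symmetry-after-whitening assumption makes $H$ diagonal---the thresholding event $\{\|X\|_\xi \ge \Gamma\}$ depends on the coordinates only through the squares $X_j^2$ and is therefore invariant under sign flips, so each off-diagonal $\E[X_iX_j \mid \|X\|_\xi \ge \Gamma]$ is an odd, hence vanishing, expectation---while condition \eqref{eq:exp_threshold_cond} forces the diagonal to be the constant $\phi$. Thus $\E \X^T\X = k\phi\, \I$ with smallest eigenvalue exactly $k\phi$, and the entire difficulty is to show the random $\lambda_{\min}(\X^T\X)$ stays close to this value. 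To match the standard form of the concentration bound I would rescale, setting $\mathbf{A} = \X/\sqrt{\phi}$, so that the rows $A^i = X^i/\sqrt{\phi}$ are independent and \emph{isotropic}, $\E A^i(A^i)^T = \I$, and remain subgaussian (truncating a subgaussian vector to $\{\|X\|_\xi \ge \Gamma\}$ preserves the subgaussian norm up to a constant depending only on $k/n$ and the subgaussian norm of $\mathbf{D}$). Then $\lambda_{\min}(\X^T\X) = \phi\, s_{\min}(\mathbf{A})^2$.

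Finally I would invoke the non-asymptotic smallest-singular-value bound for matrices with independent isotropic subgaussian rows, \cite{vershynin2010introduction}: for every $t \ge 0$, with probability at least $1 - 2\exp(-ct^2)$, $s_{\min}(\mathbf{A}) \ge \sqrt{k} - C\sqrt{d} - t$, where $c,C$ depend only on the subgaussian norm of $\bar{\mathbf{D}}$. Taking $t = \alpha\sqrt{k} - C\sqrt{d}$, which is positive by hypothesis, collapses the right-hand side to $(1-\alpha)\sqrt{k}$, so on this event $s_{\min}(\mathbf{A})^2 \ge (1-\alpha)^2 k$ and hence $\lambda_{\min}(\X^T\X) \ge (1-\alpha)^2\phi k$; substituting into the reduction of the first paragraph gives \eqref{eq:th_conseq} with the claimed probability. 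The main obstacle is exactly this transfer from the spectrum of the deterministic mean $\E\X^T\X$ to that of the random Gram matrix: it is where the concentration inequality does its work, and its hypotheses must be checked carefully---the isotropy of the rescaled rows relies jointly on the symmetry assumption and on the design equation \eqref{eq:exp_threshold_cond}, and the claim that thresholding does not blow up the subgaussian norm must be verified. Everything else is bookkeeping.
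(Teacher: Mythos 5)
Your proposal is correct and follows essentially the same route as the paper's own proof: both reduce the trace to the smallest eigenvalue of the whitened Gram matrix, use symmetry plus condition \eqref{eq:exp_threshold_cond} to identify $\E\X^T\X = k\phi\,\mathrm{Id}$, rescale by $1/\sqrt{\phi}$ to obtain isotropic subgaussian rows, and invoke Vershynin's smallest-singular-value bound (Theorem 5.39 of \cite{vershynin2010introduction}) with $t = \alpha\sqrt{k} - C\sqrt{d}$. Your treatment is in fact slightly more careful than the paper's on two points it asserts without argument---the sign-flip justification that the conditional covariance is diagonal, and the flagging that subgaussianity of $\bar{\mathbf{D}}$ after thresholding needs verification.
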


While Theorem \ref{th:main} is stated in fairly general terms, we can apply the result to specific settings.
We first present the Gaussian case where white components are independent. The proof is in Appendix D.
\begin{corollary}\label{co:gaussian}
If the observations in Theorem \ref{th:main} are jointly Gaussian with covariance matrix $\Sigma \in \R^{d \times d}$, $\xi_j = 1$ for all $j = 1, \dots, d$, and $\Gamma = \bar{C} \sqrt{d + 2 \log(n / k)}$, for some constant $\bar{C}  \ge 1$, then with probability at least $1 - 2 \exp(-c t^2)$ we have that
\begin{equation}
\mathrm{Tr}(\Sigma(\X^T\X)^{-1}) \le \frac{d}{(1 - \alpha)^2 \left( 1 + \frac{2 \log(n/k)}{d} \right) \ k}.
\end{equation}
\end{corollary}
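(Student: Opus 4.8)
The plan is to apply Theorem \ref{th:main} verbatim and collapse the entire corollary to a single estimate: a lower bound on the common conditional second moment $\phi$. First I would check that all hypotheses of Theorem \ref{th:main} hold in the Gaussian case. After whitening, a jointly Gaussian $X$ has i.i.d.\ standard normal coordinates, so $\mathbf{D}$ is subgaussian and its marginals are symmetric about zero. With $\xi_j = 1$ the weighted norm reduces to the Euclidean norm, and the conditioning event $\{\|X\| \ge \Gamma\}$ is exchangeable in the coordinates; hence each $\phi_j = \E[X_j^2 \mid \|X\| \ge \Gamma]$ is the same, so \eqref{eq:exp_threshold_cond} holds with a common value $\phi$. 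This licenses the use of \eqref{eq:th_conseq} and leaves only the task of bounding $\phi$ from below.

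The key step is a symmetry-plus-conditioning observation. Because the coordinates are exchangeable,
\[
\phi = \frac{1}{d}\sum_{j=1}^d \E[X_j^2 \mid \|X\|^2 \ge \Gamma^2] = \frac{1}{d}\,\E[\|X\|^2 \mid \|X\|^2 \ge \Gamma^2].
\]
Since the conditioning forces $\|X\|^2 \ge \Gamma^2$ pointwise, its conditional expectation is at least $\Gamma^2$, giving the clean bound $\phi \ge \Gamma^2/d$. Substituting $\Gamma^2 = \bar{C}^2\bigl(d + 2\log(n/k)\bigr)$ and using $\bar{C} \ge 1$ then yields $\phi \ge 1 + 2\log(n/k)/d$.

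To finish, I would plug this into \eqref{eq:th_conseq}. The right-hand side of \eqref{eq:th_conseq} is decreasing in $\phi$, so replacing $\phi$ by the smaller quantity $1 + 2\log(n/k)/d$ only enlarges the bound, producing exactly the stated inequality; the high-probability guarantee $1 - 2\exp(-ct^2)$ and its constants transfer unchanged, since here $\bar{\mathbf{D}}$ is a norm-truncated Gaussian with controlled subgaussian norm.

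The main subtlety is not this computation but the compatibility of the prescribed $\Gamma$ with the selection rule \eqref{eq:prob_threshold_cond} --- this is precisely where $\bar{C} \ge 1$ earns its keep. One must argue that a threshold of the form $\bar{C}\sqrt{d + 2\log(n/k)}$ with $\bar{C}\ge 1$ can be made to capture an expected fraction $k/n$, which reduces to a $\chi^2$ tail estimate: it suffices that the exact threshold $\Gamma_*$ solving $\p(\chi^2_d \ge \Gamma_*^2) = k/n$ satisfies $\Gamma_*^2 \ge d + 2\log(n/k)$, equivalently $\p(\chi^2_d \ge d + 2\log(n/k)) \ge k/n$. In the regime $\log(n/k) = o(\sqrt{d})$ this follows from a Gaussian approximation to $\chi^2_d$, whereas the large-deviation regime requires a quantitative lower tail bound. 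If one wishes to sidestep the slack entirely, the exact identity $\phi = \p(\chi^2_{d+2} \ge \Gamma^2)/\p(\chi^2_d \ge \Gamma^2)$, obtained from $r\,f_{\chi^2_d}(r) = d\,f_{\chi^2_{d+2}}(r)$, computes $\phi$ outright and can replace the trivial bound.
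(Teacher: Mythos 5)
Your proposal is correct and follows essentially the same route as the paper's proof: equal weights by exchangeability, the bound $\phi \ge \Gamma^2/d$, and reduction to the $\chi^2_d$ quantile estimate $\p\left(\chi^2_d \ge d + 2\log(n/k)\right) \ge k/n$, which guarantees that the exact threshold solving \eqref{eq:prob_threshold_cond} indeed has $\bar{C} \ge 1$. The one step you sketch rather than prove --- the quantitative lower tail bound in the large-deviation regime --- is exactly what the paper supplies via Proposition 3.1 of \cite{inglot2010inequalities} (together with the Laurent--Massart inequality \cite{laurent2000adaptive} for the matching upper estimate on $\Gamma$), so your argument is complete modulo citing that inequality.
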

The MSE of random sampling for white Gaussian data is proportional to $d/(k-d-1)$, by the inverse Wishart distribution.
Active learning provides a gain factor of order $1/(1+2\log(n/k)/d)$ with high probability (a very similar $1-\alpha$ term shows up for random sampling).
Note that our algorithm may select \emph{fewer} than $k$ observations.
Then, when the number of observations yet to be seen equals the remaining labeling budget, we should select all of them (equivalent to random sampling).
The number of observations with $\| X \|_\xi > \Gamma$ has binomial distribution, is highly concentrated around its mean $k$, with variance ${k(1-k/n)}$.
By the Chernoff Bounds, the probability that the algorithm selects fewer than $k - C^{\prime} \sqrt{k}$ decreases exponentially fast in $C^{\prime}$.
Thus, these deviations are dominated in the bound of Theorem \ref{th:main} by the leading term.
In practice, one may set the threshold in \eqref{eq:prob_threshold_cond} by choosing $k (1+\epsilon)$ observations for some small $\epsilon > 0$, or use the adaptive threshold in Algorithm 1b.


\subsection{Sparsity and Regularization}
\label{ridge}
The gain provided by active learning in our setting suffers from the curse of dimensionality, as
it diminishes very fast when $d$ increases, and Section \ref{lower} shows the gain cannot be improved in general.
For high dimensional settings (where $k \le d$) we assume $s$-\emph{sparsity} in $\beta$, that is, we assume the support of $\beta$ contains at most $s$ non-zero components, for some $s \ll d$.
In Appendix J, we also provide related results for Ridge regression.

We state the two-stage \emph{Sparse Thresholding} Algorithm (see Algorithm \ref{alg:threshold_lasso}) and show this algorithm effectively overcomes the curse of dimensionality. For simplicity, we assume the data is Gaussian, $\mathbf{D} = \mathcal{N}(0, \Sigma)$.
Based, for example, on the results of \cite{tropp2005signal} and Theorem 1 in \cite{joseph2013variable}, we could extend our results to subgaussian data via the Orthogonal Matching Pursuit algorithm for recovery.
The two-stage algorithm works as follows.
First, we focus on recovering the true support, $S = S(\beta)$, by selecting the very first $k_1$ observations (without thresholding), and computing the Lasso estimator $\hat\beta_1$.
Second, we assign the weights $\xi$: for $i \in S(\hat\beta_1)$, we set $\xi_i = 1$, otherwise we set $\xi_i = 0$.
Then, we apply the thresholding rule to select the remaining $k_2 = k - k_1$ observations.
While observations are collected in all dimensions, our final estimate $\hat\beta_2$ is the OLS estimator computed only including the observations selected in the second stage, and exclusively in those dimensions in $S(\hat\beta_1)$.

Note that, in general, the points that end up being selected by our algorithm are informational outliers, while not necessarily geometric outliers in the original space.
After applying the whitening transformation, ignoring some dimensions based on the Lasso results, and then thresholding based on a weighted norm possibly learnt from data (say, if components are not independent, and we recover the covariance matrix in a online fashion), the algorithm is able to identify good points for the underlying data distribution and $\beta$.

\begin{algorithm}[ht]
\begin{algorithmic}[1]
\STATE Set $S_1 = \emptyset, S_2 = \emptyset$. Let $k = k_1 + k_2, n = k_1 + n_2$.
\FOR{observation $1 \le i \le k_1$}
\STATE Observe $X^i$. Choose $X^i$: $S_1 = S_1 \cup X^i$.
\ENDFOR
\STATE Set $\gamma = 1/2, \lambda = \sqrt{4 \sigma^2 \log(d) / \gamma^2 k_1}$.
\STATE Compute Lasso estimate $\hat\beta_1$ on $S_1$, regularization $\lambda$.
\STATE Set weights: $\xi_i = 1$ if $i \in S(\hat\beta_1)$, $\xi_i = 0$ otherwise.
\STATE Set $\Gamma = C \sqrt{s + 2 \log(n_2 / k_2)}$.
\STATE Factorize $\Sigma_{S(\hat\beta_1)S(\hat\beta_1)} = U D U^T$.
\FOR{observation $k_1 + 1 \le i \le n$}
\STATE Observe $X^i \in \R^d$. Restrict to $X^i_S := X^i_{S(\hat\beta_1)} \in \R^s$.
\STATE Compute ${X^i}_S = {D}^{-1/2} U^T X^i_S$.
\IF{$\| {X^i_S} \|_{\xi} > \Gamma$ or $k_2 - |S_2| = n - i + 1$}
\STATE Choose $X^i_S$: $S_2 = S_2 \cup X^i_S$.
\IF{$|S_2| = k_2$}
\STATE \textbf{break}.
\ENDIF
\ENDIF
\ENDFOR
\STATE Return OLS estimate $\hat\beta_2$ based on observations in $S_2$.
\end{algorithmic}
\caption{Sparse Thresholding Algorithm.}
\label{alg:threshold_lasso}
\end{algorithm}

Theorem \ref{th:lasso_algo} summarizes the performance of Algorithm \ref{alg:threshold_lasso}; it requires the standard assumptions on $\Sigma, \lambda$ and $\min_i |\beta_i|$ for support recovery (see Theorem 3 in \cite{wainwright2009sharp}).
\begin{theorem}
\label{th:lasso_algo}
Let $\mathbf{D} = \mathcal{N}(0, \Sigma)$.
Assume $\Sigma, \lambda$ and $\min_i |\beta_i|$ satisfy the standard conditions given in Theorem 3 of \cite{wainwright2009sharp}.
Assume we run the Sparse Thresholding algorithm with $k_1 = C^{\prime} s \log d$ observations to recover the support of $\beta$, for an appropriate $C^{\prime} \ge 0$.
Let $\X_2$ be $k_2 = k - k_1$ observations sampled via thresholding on $S(\hat\beta_1)$.
It follows that for $\alpha > 0$ such that $t = \alpha \sqrt{k_2} - C \sqrt{s} > 0$, there exist some universal constants $c_1, c_2$, and $c, C$ that depend on the subgaussian norm of $\bar{\mathbf{D}} \mid S(\hat{\beta}_1)$, such that with probability at least
$$1 - 2e^{- \min \left( c_2 \min(s, \log(d-s)) - \log(c_1), ct^2 - \log(2) \right)}$$
it holds that 
\begin{equation*}\label{th:lasso_algo_perf}
\mathrm{Tr}(\Sigma_{SS}(\X_2^T\X_2)^{-1}) \le \frac{s}{(1 - \alpha)^2 \left( 1 + \frac{2 \log\left({n_2}/{k_2}\right)}{s} \right) \ k_2}.
\end{equation*}
\end{theorem}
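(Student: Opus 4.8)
The plan is to decouple the analysis into the two stages of Algorithm \ref{alg:threshold_lasso} and recombine them with a union bound. The key structural observation is that the first $k_1$ observations (used for support recovery) and the remaining $n_2 = n - k_1$ observations (used for thresholded labeling) are \emph{disjoint}, hence independent, draws from $\mathbf{D}$. This independence is what lets me condition on the outcome of the first stage without altering the distribution of the second-stage data, and it is the reason the algorithm was designed to freeze its support estimate before thresholding begins.

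First I would control the support-recovery stage. Since $\mathbf{D} = \mathcal{N}(0,\Sigma)$ and $\Sigma$, $\lambda$, and $\min_i|\beta_i|$ satisfy the conditions of Theorem 3 in \cite{wainwright2009sharp}, running the Lasso with $k_1 = C' s \log d$ samples recovers the true support exactly, i.e.\ $S(\hat\beta_1) = S$, with probability at least $1 - c_1 e^{-c_2 \min(s, \log(d-s))}$. Call this good event $\mathcal{E}_1$. Second, I would condition on $\mathcal{E}_1$ and analyze the thresholding stage. On $\mathcal{E}_1$ the weights are $\xi_j = 1$ for $j \in S$ and $\xi_j = 0$ otherwise, so the algorithm effectively runs the thresholding rule on the $s$-dimensional marginal $X_S \sim \mathcal{N}(0, \Sigma_{SS})$. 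Because the second-stage observations are independent of the first stage, conditioning on $\mathcal{E}_1$ (which fixes $\xi$, the index set $S$, and the factorization $\Sigma_{SS} = UDU^T$) leaves them as fresh i.i.d.\ draws from $\mathcal{N}(0,\Sigma_{SS})$. The choice $\Gamma = C\sqrt{s + 2\log(n_2/k_2)}$ is exactly the Gaussian threshold of Corollary \ref{co:gaussian} with $(d, n, k)$ replaced by $(s, n_2, k_2)$, so applying Corollary \ref{co:gaussian} in this $s$-dimensional whitened space yields
\[
\mathrm{Tr}(\Sigma_{SS}(\X_2^T\X_2)^{-1}) \le \frac{s}{(1-\alpha)^2 \left(1 + \frac{2\log(n_2/k_2)}{s}\right) k_2}
\]
with probability at least $1 - 2e^{-ct^2}$, where $t = \alpha\sqrt{k_2} - C\sqrt{s}$ and $c, C$ depend on the subgaussian norm of $\bar{\mathbf{D}} \mid S(\hat\beta_1)$. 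Call this event $\mathcal{E}_2$.

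Finally, I would combine the two bounds. A union bound caps the failure probability by $c_1 e^{-c_2\min(s,\log(d-s))} + 2e^{-ct^2}$; rewriting $c_1 e^{-c_2\min(s,\log(d-s))} = e^{-(c_2\min(s,\log(d-s)) - \log c_1)}$ and $2e^{-ct^2} = e^{-(ct^2 - \log 2)}$, and then using $e^{-x}+e^{-y}\le 2e^{-\min(x,y)}$, collapses this exactly into the stated $2e^{-\min(\cdot,\cdot)}$ form, giving the claimed success probability.

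The main obstacle I anticipate is not the probabilistic bookkeeping but justifying the conditioning step cleanly. I must argue that conditioning on the first-stage event $\mathcal{E}_1$ does not bias the second-stage samples—this rests squarely on the disjointness, hence independence, of the two sample batches and the fact that all quantities defining the second-stage rule ($\xi$, $S$, $U$, $D$) are measurable with respect to the first batch. A secondary technical point, which I would handle exactly as in the discussion following Corollary \ref{co:gaussian}, is that the binomial count of second-stage observations exceeding $\Gamma$ may fall short of $k_2$; the fallback clause ``$k_2 - |S_2| = n - i + 1$'' absorbs this shortfall, and by the Chernoff concentration of the count around its mean $k_2$ the resulting deviation is dominated by the leading term and contributes only lower-order corrections to the bound.
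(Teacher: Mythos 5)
Your proposal is correct and follows essentially the same route as the paper's own proof in Appendix H: invoke Theorem 3 of \cite{wainwright2009sharp} for exact support recovery in the first stage, condition on that event (justified by the independence of the two disjoint batches), apply Corollary \ref{co:gaussian} with $(d,n,k)$ replaced by $(s,n_2,k_2)$ in the restricted $s$-dimensional space, and combine the two failure probabilities into the stated $2e^{-\min(\cdot,\cdot)}$ form. The only cosmetic difference is that the paper phrases the final combination via independence of the two stages rather than an explicit union bound, but the arithmetic is identical.
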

Performance for random sampling with the Lasso estimator is $O(s \log d/k)$.
A regime of interest is $s \ll d, k = C_1 s \log d$, and $n = C_2 \ d$, for large enough $C_1$, and $C_2 > 0$.
In that case, Algorithm \ref{alg:threshold_lasso} leads to a bound of order smaller than $1 / \log(d)$, as opposed to a weaker constant guarantee for random sampling.
The gain is at least a $\log d$ factor with high probability.
The proof is in Appendix H.
In practice, the performance of the algorithm is improved by using all the $k$ observations to fit the final estimate $\hat\beta_2$, as shown in simulations. However, in that case, observations are no longer i.i.d.\
Also, using thresholding to select the initial $k_1$ observations decreases the probability of making a mistake in support recovery.
In Section \ref{sim} we provide simulations comparing different methods.

\subsection{Proof of Theorem \ref{th:main}}
The complete proof of Theorem \ref{th:main} is in Appendix B.
We only provide a sketch here.
The proof is a direct application of spectral results in \cite{vershynin2010introduction}, which are derived via a covering argument using a discrete net $\mathcal{N}$ on the unit Euclidean sphere $S^{d-1}$, together with a Bernstein-type concentration inequality that controls deviations of $\| \X w \|_2$ for each element $w \in \mathcal{N}$ in the net.
Finally, a union bound is taken over the net.
Importantly, the proof shows that if our algorithm uses $(\xi, \Gamma)$ which are \emph{approximate} solutions to \eqref{eq:exp_threshold_cond},  then \eqref{eq:th_conseq} still holds with $\min_j \E_{\bar{\mathbf{D}}} X_j^2$ in the denominator of the RHS, instead of $\phi$.
This fact can be quite useful in practice, when $\F$ is unknown.
We can devote some initial budget $X_1, \dots, X_T$ to recover $\F$, and then find $(\xi, \Gamma)$ approximately solving \eqref{eq:exp_threshold_cond} and \eqref{eq:prob_threshold_cond} under $\hat\F$.
Note that no labeling is required.

Also, the result can be extended to subexponential distributions.
In this case, the probabilistic bound will be weaker (including a $d$ term in front of the exponential).
More generally, our probabilistic bounds are strongest when $k \ge C d \log d$ for some constant $C \ge 0$, a common situation in active learning \cite{sabato2014active}, where super-linear requirements in $d$ seem unavoidable in noisy settings.
A simple bound for the parameter $\phi$ can be calculated as follows.
Assume there exists $(\xi, \Gamma)$ such that $\phi_j = \phi$ and consider the weighted squared norm $Z_\xi = \sum_{j=1}^d \xi_j X_{j}^2$.
Then
$ \E_{\bar{D}} \left[ Z_\xi \right] = \sum_{j=1}^d \xi_j \E_{\bar{D}} \left[ X_{j}^2 \right] = \sum_{j=1}^d \xi_j \phi_j = d \phi,$ and
$
\phi = \E_{{D}} \left[ Z_\xi \mid Z_\xi \ge \Gamma^2 \right] / d \ge {\Gamma^2}/{d} = {F^{-1}_{Z_\xi}(1 - k/n)}/{d},
$
which implies that $1 / \lambda_{\min}(\E \X^T\X) = 1 / k \phi \le d / k \Gamma^2$.
For specific distributions, $\Gamma^2/d$ can be easily computed.
The last inequality is close to equality in cases where the conditional density decays extremely fast for values of $\sum_{j=1}^d \xi_j X_j^2$ above $\Gamma^2$.
Heavy-tailed distributions allocate mass to significantly higher values, and $\phi$ could be much larger than $\Gamma^2 / d$.

 \begin{figure*}[t!]
  \centering
  \subfigure[Zooming out.]{\includegraphics[width=0.49 \columnwidth]{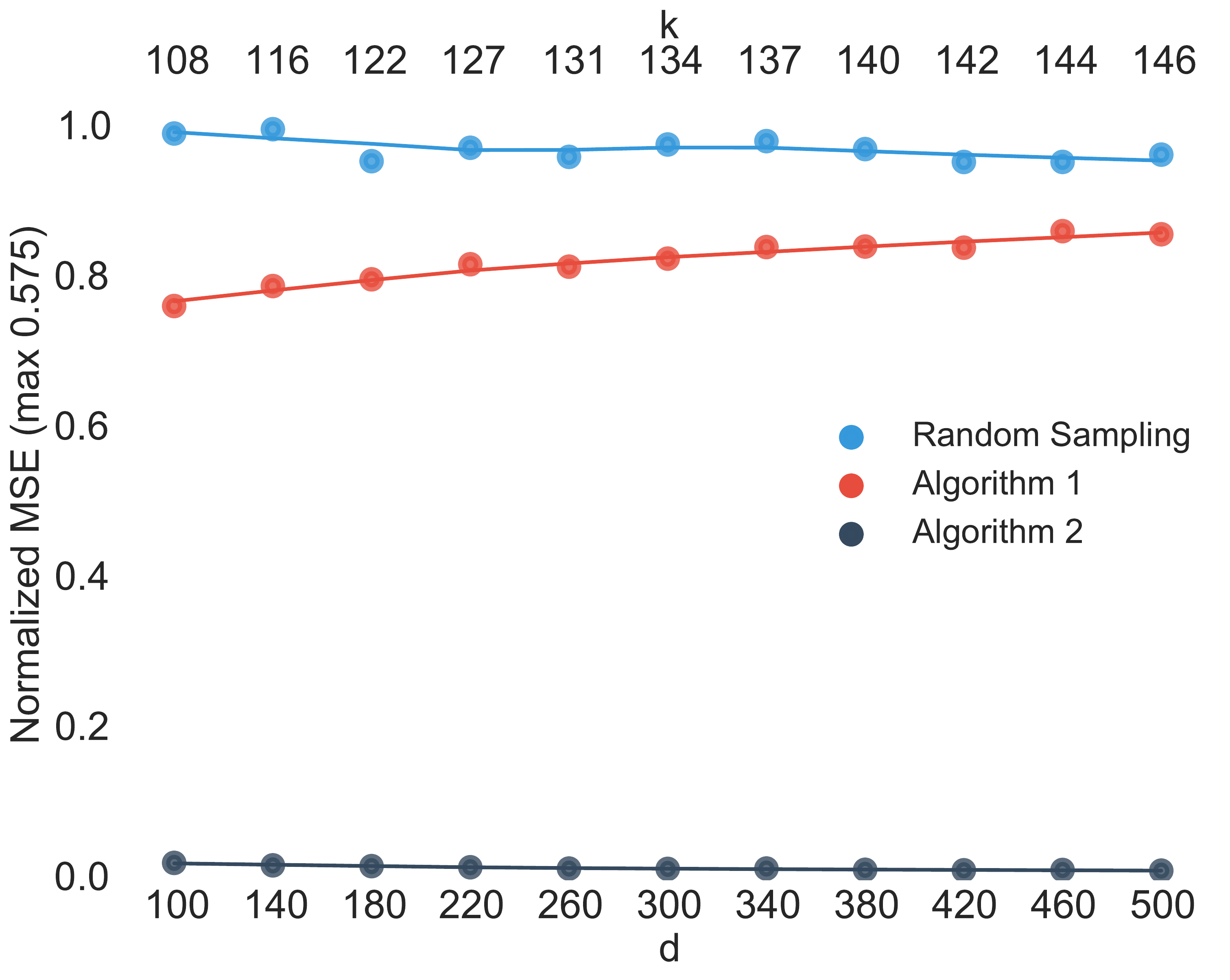}} 
  \subfigure[Zooming in.]{\includegraphics[width=0.49 \columnwidth]{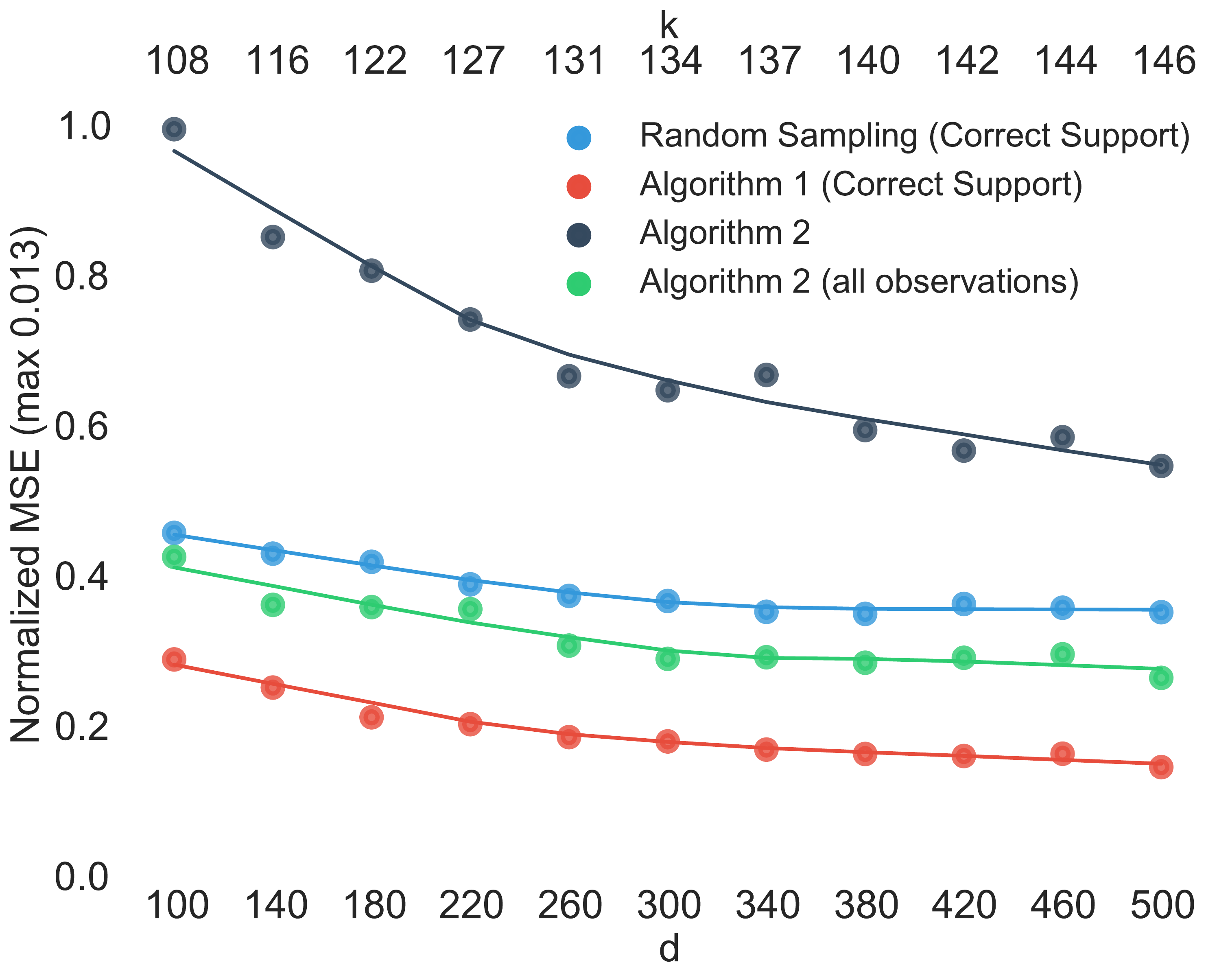}}
\caption{Sparse Linear Regression (700 iters). We fix the effective dimension to $s=7$, and increase the ambient dimension from $d=100$ to $d=500$. The budget scales as $k=C s \log d$ for $C \approx 3.4$, while $n=4d$. We set $k_1 = 2k/3$ and $k_2 = k/3$.}
 \label{lasso_plots}
\end{figure*}

\section{Lower Bound}
\label{lower}
In this section we derive a lower bound for the $k>d$ setting.
Suppose all the data are given. Again choose the $k$ observations with largest norms, denoted by $\X'$. To minimize the prediction error, 
 the best possible ${\X^{\prime}}^T\X^{\prime}$ is diagonal, with identical entries, and trace equal to the sum of the norms. No selection algorithm, online or offline, can do better.
Algorithm \ref{alg:threshold} achieves this by selecting observations with large norms and uncorrelated entries (through whitening if necessary).
Theorem \ref{th:lower} captures this intuition.
\begin{theorem}\label{th:lower}
Let $\mathbf{A}$ be an algorithm for the problem we described in Section 2.
Then,
\begin{align}\label{eq:lower_bound_th_white}
\E_{\mathbf{A}} \ \mathrm{Tr}(\Sigma (\X^T \X)^{-1}) &\ge \frac{d^2}{\E \left[ \sum_{i = 1}^k || X_{(i)}||^2 \right]} \\
&\ge \frac{d}{k \ \E \left[ \frac{1}{d} \ \max_{i \in [n]} || X_{i}||^2 \right]}, \nonumber
\end{align}
where $X_{(i)}$ is the white observation with the $i$-th largest norm.
Moreover, fix $\alpha \in (0, 1)$.
Let $\mathbf{F}$ be the cdf of $\max_{i \in [n]} ||X_{i}||^2$.
Then, $\mathrm{Tr}(\Sigma (\X^T \X)^{-1}) \ge {d^2}/{k \ \mathbf{F}^{-1}(1-\alpha)}$ with probability at least $1 - \alpha$. 
\end{theorem}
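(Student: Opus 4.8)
The plan is to reduce the whole statement to one deterministic inequality that holds pointwise on every realization of the selected observations, and then push it through an expectation (via Jensen) and through a quantile bound (for the high-probability part). First I would pass to whitened coordinates, using the identity $\mathrm{Tr}(\Sigma ({\X'}^T\X')^{-1}) = \mathrm{Tr}((\X^T\X)^{-1})$ recorded earlier, so that it suffices to lower bound the whitened trace. Writing $M = \X^T\X$ with eigenvalues $\lambda_1, \dots, \lambda_d > 0$, the crux is the arithmetic-mean/harmonic-mean (equivalently Cauchy--Schwarz) inequality
$$\mathrm{Tr}(M^{-1}) = \sum_{j=1}^d \frac{1}{\lambda_j} \ge \frac{d^2}{\sum_{j=1}^d \lambda_j} = \frac{d^2}{\mathrm{Tr}(M)} = \frac{d^2}{\sum_{i \in S} \|X_i\|^2},$$
where $S$ is the selected set and I use that $\mathrm{Tr}(\X^T\X) = \sum_{i} \mathrm{Tr}(X_i X_i^T)$ equals the sum of squared whitened norms of the selected observations. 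Equality holds exactly when all $\lambda_j$ coincide, matching the intuition in the paragraph preceding the theorem: the best possible $\X^T\X$ is a scaled identity.

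The step that makes this algorithm-independent is that the inequality holds for whatever set $S$ the algorithm returns, and the denominator is largest --- hence the bound smallest --- when $S$ consists of the $k$ observations of largest norm. Thus for any (possibly randomized, possibly adaptive) algorithm $\mathbf{A}$, pointwise on every realization,
$$\mathrm{Tr}((\X^T\X)^{-1}) \ge \frac{d^2}{\sum_{i \in S}\|X_i\|^2} \ge \frac{d^2}{\sum_{i=1}^k \|X_{(i)}\|^2},$$
the last inequality because any sum of $k$ norms is at most the sum of the $k$ largest. Taking expectations and applying Jensen to the convex map $y \mapsto 1/y$ (so $\E[1/Y] \ge 1/\E Y$) yields the first displayed bound $\E_{\mathbf{A}} \, \mathrm{Tr}(\Sigma(\X^T\X)^{-1}) \ge d^2 / \E[\sum_{i=1}^k \|X_{(i)}\|^2]$.

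For the second inequality I would bound each of the top-$k$ squared norms by the maximum, $\|X_{(i)}\|^2 \le \max_{i \in [n]} \|X_i\|^2$, so that $\sum_{i=1}^k \|X_{(i)}\|^2 \le k \max_{i \in [n]} \|X_i\|^2$; taking expectations and inverting gives exactly $d^2/\E[\sum_{i=1}^k\|X_{(i)}\|^2] \ge d^2/(k\,\E[\max_i \|X_i\|^2])$, which is the stated form after rewriting the factor $1/d$. Finally, for the high-probability claim I would return to the pointwise chain, use $\sum_{i=1}^k \|X_{(i)}\|^2 \le k \max_i \|X_i\|^2$, and invoke the definition of the quantile: since $\p(\max_i \|X_i\|^2 \le \mathbf{F}^{-1}(1-\alpha)) \ge 1-\alpha$, on this event $\mathrm{Tr}(\Sigma(\X^T\X)^{-1}) \ge d^2/(k \max_i \|X_i\|^2) \ge d^2/(k \, \mathbf{F}^{-1}(1-\alpha))$.

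I do not expect a genuine technical obstacle; the content is the pointwise AM--HM step together with the recognition that selecting the top-$k$ norms is optimal, both elementary. The only points requiring care are ensuring $\X^T\X$ is invertible (guaranteed almost surely for $k > d$ under the continuous subgaussian design, so the inverse and its trace are well defined), and being explicit that the deterministic bound holds on \emph{every} realization, so that it is inherited by $\E_{\mathbf{A}}$ regardless of how $\mathbf{A}$ exploits past observations or internal randomness. This last point is precisely what licenses the claim that no selection rule, online or offline, can do better.
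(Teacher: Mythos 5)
Your proposal is correct, and it reaches the theorem by a genuinely different (and more elementary) route than the paper for the expectation bound. The paper applies Jensen's inequality at the \emph{matrix} level, invoking convexity of $H \mapsto \mathrm{Tr}(H^{-1})$ over positive definite matrices to get $\E\,\mathrm{Tr}(S^{-1}) \ge \mathrm{Tr}((\E S)^{-1})$ for $S = \X^T\X$ (whitened), and then bounds the spectrum of the deterministic matrix $\E S$ by its trace, arguing that the minimum of $\sum_j 1/\lambda_j(\E S)$ under a trace constraint occurs at a flat spectrum. You instead apply AM--HM (Cauchy--Schwarz) pointwise on every realization, $\mathrm{Tr}(M^{-1}) \ge d^2/\mathrm{Tr}(M)$, and only afterwards invoke scalar Jensen, $\E[1/Y] \ge 1/\E[Y]$. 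Both chains pass through the same quantity $d^2/\E[\mathrm{Tr}(\X^T\X)]$ and then use the identical top-$k$ and maximum bounds, so the constants agree; what your version buys is that it dispenses with the matrix-convexity lemma entirely and makes explicit that the core inequality is deterministic, hence inherited by any selection rule regardless of adaptivity or internal randomization (the paper asserts this but routes it through the matrix Jensen step). For the high-probability claim your argument coincides with the paper's. In fact, your write-up is cleaner there: the paper's displayed intermediate step asserts $1/\lambda_i(\X^T\X) \ge d/\sum_j \|X_j\|^2$ term by term, i.e.\ $\lambda_i \le \mathrm{Tr}(\X^T\X)/d$ for \emph{every} $i$, which is false for the largest eigenvalue unless the spectrum is flat; the conclusion $\sum_i 1/\lambda_i \ge d^2/\mathrm{Tr}(\X^T\X)$ is nevertheless true, and your AM--HM derivation is the correct justification. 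Your closing caveats are also handled correctly: if $\X^T\X$ is singular or fewer than $k$ points are selected, the trace of the inverse is infinite (or the denominator only shrinks), so the pointwise inequality, and hence both conclusions, are unaffected.
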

The proof is in Appendix E.
The upper bound in Theorem \ref{th:main} has a similar structure, with denominator equal to $k \phi$.
By Theorem \ref{th:main}, $\phi = \E_{\mathbf{D}} [X_{j}^2 \mid \| X \|_{\xi}^2 \ge \Gamma^2]$ for every component $j$.
Hence, summing over all components: $k \phi = k \E_{\bar {\mathbf{D}}} \left[ \| X \|^2 / d \right]$.
The latter expectation is taken with respect to $\bar{\mathbf{D}}$, which only captures the $k$ expected $\xi$-largest observations out of $n$, as opposed to $k \ \E_{\mathbf{D}} [ (1/k) \sum_{i = 1}^k || X_{(i)}||^2 / d ]$ in \eqref{eq:lower_bound_th_white}.
The weights $\xi$ simply account for the fact that, in reality, we cannot make all components have equal norm, something we implicitly assumed in our lower bound.

We specialize the lower bound to the Gaussian setting, for which we computed the upper bound of Theorem \ref{th:main}.
The proofs are based on the Fisher-Tippett Theorem and the Gumbel distribution; see Appendix F. 
\begin{corollary}
For Gaussian observations $X^i \sim \mathcal{N}(0, \Sigma)$ and large $n$, for any algorithm $\mathbf{A}$
\begin{equation*}
\E_{\mathbf{A}} \  \mathrm{Tr}(\Sigma (\X^T\X)^{-1}) \ge \frac{d}{k \left( \frac{2 \log n}{d} + \log \log n \right)}.
\end{equation*}
Moreover, let $\alpha \in (0, 1)$.
Then, for any $\mathbf{A}$ with probability at least $1 - \alpha$ and $C = 2 \log \Gamma(d/2) / d$,
\begin{equation*}
\mathrm{Tr}(\Sigma (\X^T\X)^{-1}) \ge \frac{d / k}{\frac{2 \log n}{d} + \log \log n - \frac{1}{d} \log \log\frac{1}{1 - \alpha} - C}
\end{equation*}
\end{corollary}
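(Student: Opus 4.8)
The plan is to derive both inequalities directly from the two conclusions of Theorem~\ref{th:lower}, specialized to white Gaussian data, by computing the extremal behaviour of the squared norms via classical extreme value theory. After whitening we have $X_i \sim \mathcal{N}(0, \I_d)$, so each squared norm $\|X_i\|^2$ follows a chi-squared law with $d$ degrees of freedom; writing $M_n = \max_{i \in [n]} \|X_i\|^2$ with cdf $\mathbf{F}$, the expectation part of Theorem~\ref{th:lower} reduces to upper bounding $\E[M_n]$, while the high-probability part reduces to upper bounding the quantile $\mathbf{F}^{-1}(1-\alpha)$. Thus everything hinges on the law of the maximum of $n$ i.i.d.\ $\chi^2_d$ variables.

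The core step is to locate the Gumbel normalizing sequences. The $\chi^2_d$ law lies in the max-domain of attraction of the Gumbel distribution (Fisher--Tippett), so there exist $a_n, b_n$ with $\p(M_n \le a_n x + b_n) \to G(x) = e^{-e^{-x}}$. I would fix $b_n$ by the standard centering equation $\bar{\mathbf{F}}(b_n) = 1/n$, using the tail asymptotic $\bar{\mathbf{F}}(x) \sim x^{d/2-1} e^{-x/2} / (2^{d/2-1}\Gamma(d/2))$. Taking logarithms and solving by one fixed-point iteration around the leading term $b_n \approx 2\log n$ (so that $\log b_n \approx \log 2 + \log\log n$) yields $b_n \approx 2\log n + (d-2)\log\log n - 2\log\Gamma(d/2)$; dividing by $d$ and absorbing $(d-2)/d \to 1$ gives exactly $b_n/d \approx 2\log n / d + \log\log n - C$ with $C = 2\log\Gamma(d/2)/d$. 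The scale $a_n$ is governed by the reciprocal hazard rate at $b_n$: since the tail decays like $e^{-x/2}$ the hazard rate tends to a constant, so $a_n = O(1)$ and after normalization contributes the $\tfrac1d\log\log\frac{1}{1-\alpha}$ correction.

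With these constants the two bounds assemble quickly. For the expectation statement, $\E[M_n] = b_n + O(a_n)$ (the Gumbel limit contributes only an $O(a_n)$ mean shift, lower order than $b_n \sim d$); since the omitted corrections $-C$ and the $(d-2)/d$ factor are non-positive for large $d$, replacing $\E[\tfrac1d M_n]$ by the larger quantity $2\log n/d + \log\log n$ keeps the inequality of Theorem~\ref{th:lower} valid and produces the first display. For the high-probability statement I would invert the Gumbel limit at level $1-\alpha$: since $G^{-1}(1-\alpha) = -\log\log\frac{1}{1-\alpha}$, we get $\mathbf{F}^{-1}(1-\alpha) \approx b_n - a_n \log\log\frac{1}{1-\alpha}$, and substituting into $\Tr(\Sigma(\X^T\X)^{-1}) \ge d^2 / (k\, \mathbf{F}^{-1}(1-\alpha))$ and dividing through by $d$ reproduces the stated denominator $2\log n/d + \log\log n - \tfrac1d\log\log\frac{1}{1-\alpha} - C$.

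The main obstacle is that Fisher--Tippett convergence is classically an $n \to \infty$ statement for fixed $d$, whereas here $d$ is also large and enters the constants (the $\log\Gamma(d/2)$ term) explicitly. I therefore expect the delicate part to be making the tail expansion of $\chi^2_d$ and the fixed-point solution for $b_n$ sufficiently uniform that the $d$-dependent location and scale are legitimate, and --- for the expectation bound --- upgrading convergence in distribution of $(M_n - b_n)/a_n$ to convergence of its mean via a uniform integrability argument, so that $\E[M_n] \to b_n + a_n\gamma$ is justified rather than merely suggested. Verifying that all dropped lower-order terms carry the correct sign, so that the clean displays remain genuine lower bounds, is the final bookkeeping step.
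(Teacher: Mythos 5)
Your proposal follows essentially the same route as the paper's proof: both reduce the claim to Theorem \ref{th:lower}, bound the maximum of $n$ i.i.d.\ $\chi^2_d$ squared norms via the Fisher--Tippett/Gumbel limit with the same normalizing sequence $b_n \approx 2\log n + (d-2)\log\log n - 2\log\Gamma(d/2)$, obtain the expectation bound from the tail of the limiting law, and invert the Gumbel cdf at level $1-\alpha$ (giving $-\log\log\frac{1}{1-\alpha}$) for the high-probability bound. The only differences are presentational: you re-derive the normalizing constants from the centering equation $\bar{\mathbf{F}}(b_n)=1/n$ rather than citing them, and you explicitly flag the uniform-integrability step needed to convert convergence in distribution into convergence of means, a point the paper handles by simply assuming the asymptotic limit holds for large $n$.
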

The results from Corollary \ref{co:gaussian} have the same structure as the lower bound; hence in this setting our algorithm is near optimal.
Similar results and conclusions are derived for the CLT approximation in Appendix I.

\section{Simulations}
\label{sim}

 \begin{figure*}[t!]
  \centering
  \subfigure[Protein Structure; 150 iters.]{\includegraphics[width=0.32 \columnwidth]{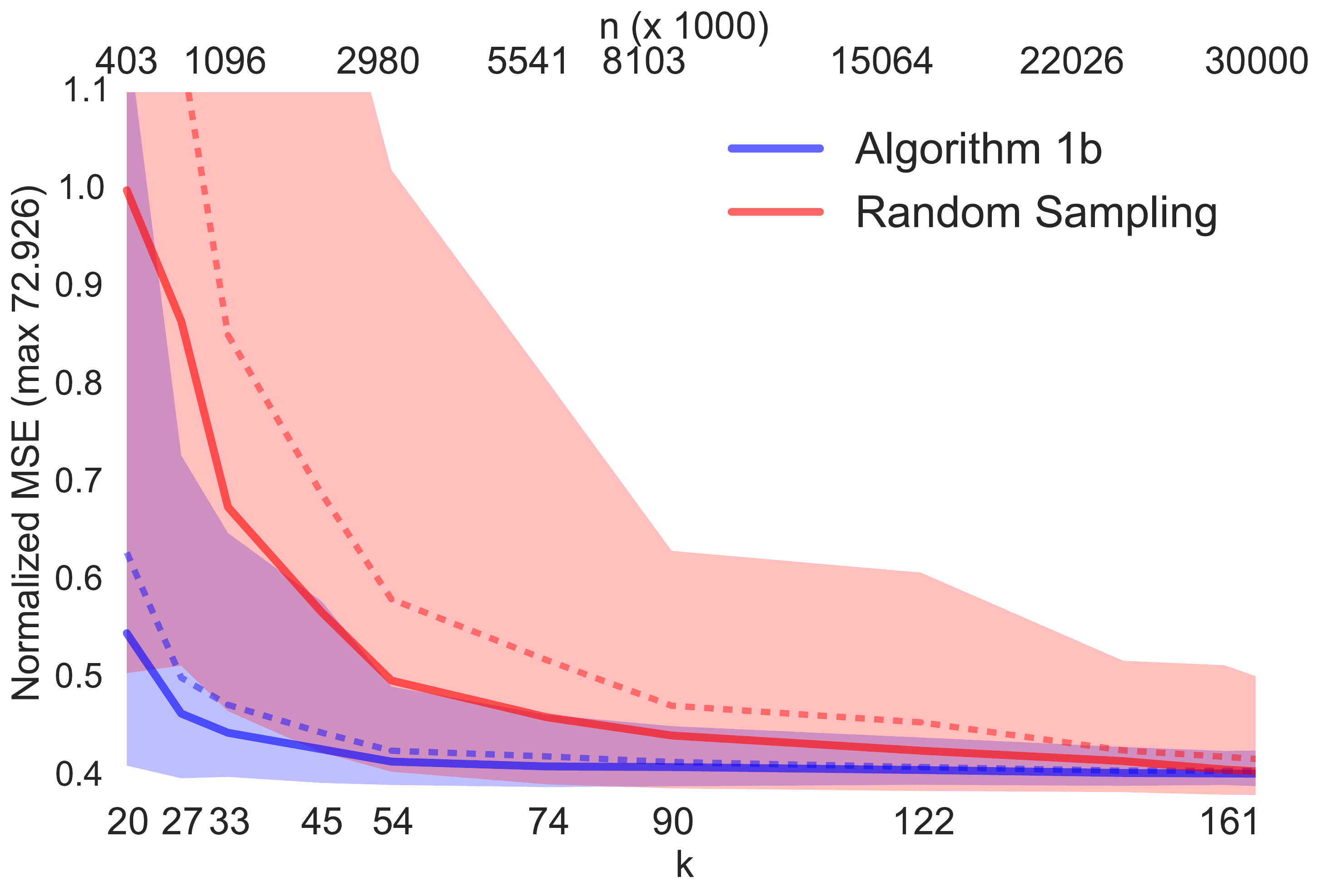}}\quad
  \subfigure[Bike Sharing; 300 iters.]{\includegraphics[width=0.32 \columnwidth]{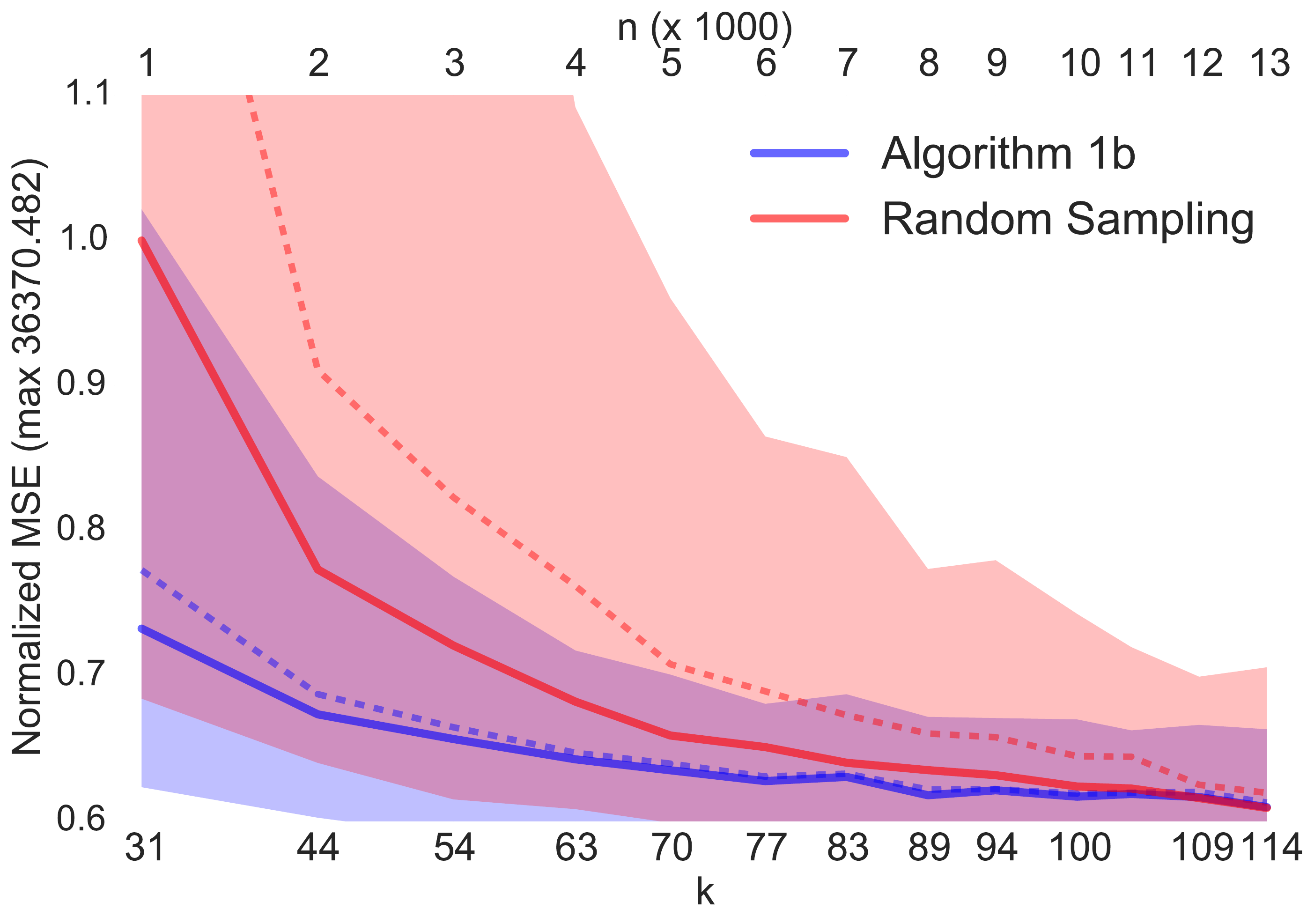}}
    \subfigure[YearPredictionMSD; 150 iters.]{\includegraphics[width=0.32 \columnwidth]{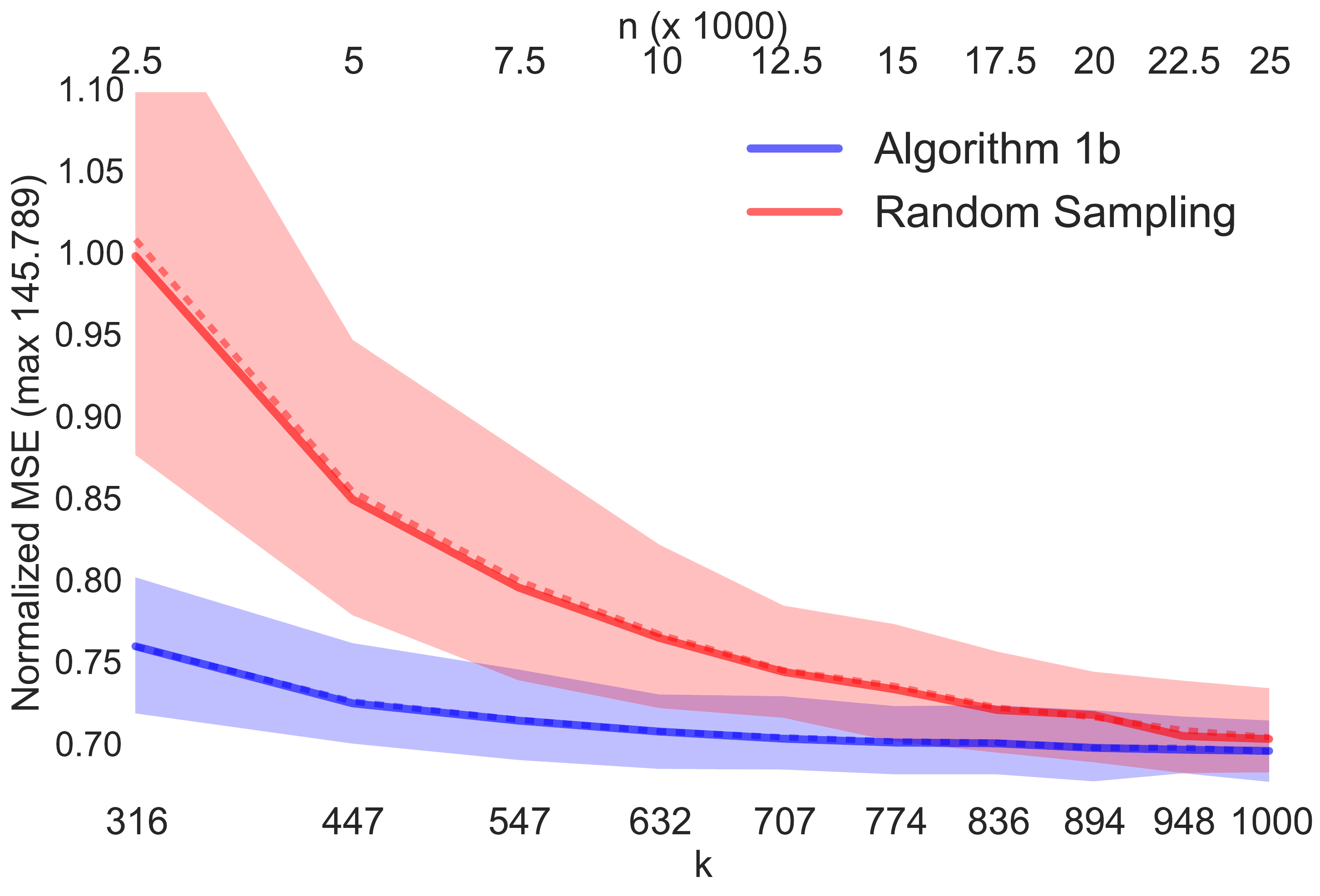}}
\caption{$\mathrm{MSE}$ of $\hat\beta_{OLS}$. The $(0.05, 0.95)$ quantile conf.\ int.\ displayed. Solid \emph{median}; Dashed \emph{mean}.}
 \label{realdata_plots}
\end{figure*}

We conducted experiments in various settings: regularized estimators in high-dimensions, and the basic thresholding approach in real-world data to explore its performance on strongly non-linear environments.

{\bf Regularized Estimators}.
We compare the performance in high-dimensional settings of random sampling and Algorithm \ref{alg:threshold} ---both with an appropriately adjusted Lasso estimator--- against Algorithm \ref{alg:threshold_lasso}, which takes into account the structure of the problem ($s \ll d$). 
For completeness, we also show the performance of Algorithm \ref{alg:threshold_lasso} when \emph{all} observations are included in the final OLS estimate, and that of random sampling (RS) and Algorithm \ref{alg:threshold} (Thr) when the true support $S$ is known in advance, and the OLS computed on $S$.
In Figure \ref{lasso_plots} (a), we see that Algorithm \ref{alg:threshold_lasso} dramatically reduces the MSE, while in Figure \ref{lasso_plots} (b) we zoom-in to see that, quite remarkably, Algorithm \ref{alg:threshold_lasso} using all observations for the final estimate outperforms random sampling that knows the sparsity pattern in hindsight.
We used $k_1 = (2/3) k$ for recovery.
More experiments are provided in Appendix K.

{\bf Real-World Data}.
We show the results of Algorithm 1b (online $\Sigma$ estimation)
with the simplest distributional assumption (Gaussian threshold, $\xi_j = 1$) versus random sampling on publicly available real-world datasets (UCI, \cite{Lichman:2013}), measuring test squared prediction error.
We fix a sequence of values of $n$, together with $k = \sqrt{n}$, and for each pair $(n, k)$ we run a number of iterations.
In each one, we randomly split the dataset in training ($n$ observations, random order), and test (rest of them).
Finally, $\hat\beta_{\mathrm{OLS}}$ is computed on selected observations, and the prediction error estimated on the test set.
All datasets are initially centered to have zero means (covariates and response).
Confidence intervals are provided.

We first analyze the Physicochemical Properties of Protein Tertiary Structure dataset (45730 observations), where we predict the size of the residue, based on $d = 9$ variables, including the total surface area of the protein and its molecular mass.
Figure \ref{realdata_plots} (a) shows the results; Algorithm \ref{alg:threshold}b outperforms random sampling for all values of $(n, k)$.
The reduction in variance is substantial.
In the Bike Sharing dataset \cite{bikedataset} we predict the number of hourly users of the service, given weather conditions, including temperature, wind speed, humidity, and temporal covariates.
There are 17379 observations, and we use $d = 12$ covariates.
Our estimator has lower mean, median and variance MSE than random sampling; Figure \ref{realdata_plots} (b).
Finally, for the YearPredictionMSD dataset \cite{songdataset}, we predict the year a song was released based on $d = 90$ covariates, mainly metadata and audio features.
There are 99799 observations.
The MSE and variance did strongly improve; Figure \ref{realdata_plots} (c).

In the examples we see that, while active learning leads to strong improvements in MSE and variance reduction for moderate values of $k$ with respect to $d$, the gain vanishes when $k$ grows large.
This was expected; the reason might be that by sampling so many outliers, we end up learning about parts of the space where heavy non-linearities arise, which may not be important to the test distribution. 
However, the motivation of active learning are situations of limited labeling budget, and
hybrid approaches combining random sampling and thresholding could be easily implemented if needed.

\section{Conclusion}
\label{conclusions}
Our paper provides a comprehensive analysis of thresholding algorithms for online active learning of linear regression models, which are shown to perform well both theoretically and empirically.  Several natural open directions suggest themselves.
Additional robustness could be guaranteed in other settings by combining our algorithm as a ``black box'' with other approaches: for example, some addition of random sampling or stratified sampling could be used to determine if significant nonlinearity is present, and to determine the fraction of observations that are collected via thresholding.

\section{Acknowledgments}
The authors would like to thank Sven Schmit for his excellent comments and suggestions, Mohammad Ghavamzadeh for fruitful discussions, and the anonymous reviewers for their valuable feedback.
We gratefully acknowledge support from the National Science Foundation under grants CMMI-1234955, CNS-1343253, and CNS-1544548.

\bibliography{online_active_learning}
\bibliographystyle{abbrv}


\

{\Large \textbf{Appendix}}

\setcounter{section}{0}
\renewcommand{\thesection}{\Alph{section}}

\section{Whitening}
\label{whitening}
Before thresholding the norm of incoming observations, it is useful to decorrelate and standardize their components, i.e., to \emph{whiten} the data.
Then, we apply the algorithm to uncorrelated covariates, with zero mean and unit variance (not necessarily independent).
The covariance matrix $\Sigma$ can be decomposed as  $\Sigma = U D U^T$, where $U$ is orthogonal, and $D$ diagonal with $d_{ii} = \lambda_i(\Sigma)$.
We whiten each observation to $\bar{X} = D^{-1/2} U^T X \in \R^{d \times 1}$ (while for $\X \in \R^{k \times d}$, $\bar\X = \X U D^{-1/2}$), so that $\E \bar X \bar X^T = \mathrm{Id}$.
We denote whitened observations by $\bar X$ and $\bar \X$ in the appendix.
After some algebra we see that,
\begin{equation}\label{eq:trace_bound}
\frac{d}{\lambda_{\max}(\bar{\X}^T \bar{\X})} \le \mathrm{Tr}(\Sigma(\X^T\X)^{-1}) = \mathrm{Tr}((\bar{\X}^T \bar{\X})^{-1}) \le \frac{d}{\lambda_{\min}(\bar{\X}^T \bar{\X})}.
\end{equation}
We focus on algorithms that maximize the minimum eigenvalue of $\bar{\X}^T \bar{\X}$ with high probability, or, in general, leading to large and even eigenvalues of $\bar{\X}^T \bar{\X}$.

\section{Proof of Theorem 3.1}
\label{s:mainth}
\begin{theorem}\label{app_th:main}
Let $n > k > d$.
Assume observations $X \in \R^d$ are distributed according to subgaussian $\mathbf{D}$ with covariance matrix $\Sigma \in \R^{d \times d}$.
Also, assume marginal densities are symmetric around zero after whitening.
Let $\X$ be a $k \times d$ matrix with $k$ observations sampled from the distribution induced by the thresholding rule with parameters $(\xi, \Gamma) \in \R^{d+1}_+$ satisfying \eqref{eq:exp_threshold_cond}.
Let $\alpha > 0$, so that $t = \alpha \sqrt{k} - C \sqrt{d} > 0$, then, with probability at least $1 - 2 \exp(-c t^2)$
\begin{equation}\label{eq:th_conseq}
\mathrm{Tr}(\Sigma (\X^T\X)^{-1}) \le \frac{d}{(1 - \alpha)^2 \ \phi k},
\end{equation}
where constants $c, C$ depend on the subgaussian norm of $\bar{\mathbf{D}}$.
\end{theorem}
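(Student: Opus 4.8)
The plan is to reduce the trace bound to a lower bound on the smallest eigenvalue of the whitened Gram matrix, and then to obtain that eigenvalue bound from a standard non-asymptotic spectral concentration inequality for matrices with independent subgaussian rows. First I would invoke the whitening identity, so that $\mathrm{Tr}(\Sigma(\X^T\X)^{-1}) = \mathrm{Tr}((\bar\X^T\bar\X)^{-1})$, and combine it with the right-hand inequality of \eqref{eq:trace_bound} to get $\mathrm{Tr}(\Sigma(\X^T\X)^{-1}) \le d/\lambda_{\min}(\bar\X^T\bar\X)$. It therefore suffices to show that $\lambda_{\min}(\bar\X^T\bar\X) \ge (1-\alpha)^2\,\phi k$ with the stated probability.

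For the core estimate I would use that the rows $\bar X^i$ of $\bar\X$ are i.i.d.\ samples from the induced distribution $\bar{\mathbf{D}}$ (this is precisely why the threshold is fixed in advance), with second-moment matrix $H = \E_{\bar{\mathbf{D}}}\bar X\bar X^T$. Under the marginal symmetry assumption $H$ is diagonal, and under the balance condition \eqref{eq:exp_threshold_cond} each diagonal entry equals $\phi$, so $H = \phi\,\mathrm{Id}$. Rescaling the rows by $1/\sqrt{\phi}$ then yields $\mathbf{W} := \bar\X/\sqrt{\phi}$, whose rows are independent, isotropic, and subgaussian. I would then apply the smallest-singular-value deviation bound of \cite{vershynin2010introduction} for a $k\times d$ matrix with independent isotropic subgaussian rows: for every $t\ge 0$, with probability at least $1-2\exp(-ct^2)$,
$$s_{\min}(\mathbf{W}) \ge \sqrt{k} - C\sqrt{d} - t,$$
where $c,C$ depend only on the subgaussian norm of $\bar{\mathbf{D}}$. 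Choosing $t = \alpha\sqrt{k} - C\sqrt{d}$ (positive by hypothesis) collapses the right-hand side to $(1-\alpha)\sqrt{k}$, and since $\lambda_{\min}(\bar\X^T\bar\X) = \phi\, s_{\min}(\mathbf{W})^2$, this gives $\lambda_{\min}(\bar\X^T\bar\X) \ge (1-\alpha)^2\phi k$. Substituting into the trace bound finishes the argument.

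The spectral concentration is essentially a black box; the hard part will be verifying its hypotheses for the conditioned law $\bar{\mathbf{D}}$. Three points need care: (i) that the selected rows are genuinely i.i.d., which relies on the threshold being fixed so that each observation is accepted independently of the others; (ii) that $H$ is diagonal with equal entries, i.e.\ that thresholding on a symmetric weighted norm preserves uncorrelatedness (from the symmetry assumption) together with the balance \eqref{eq:exp_threshold_cond}; and (iii) that conditioning $\mathbf{D}$ on the large-norm event $\{\|X\|_\xi \ge \Gamma\}$ leaves the distribution subgaussian. For (iii) one checks that truncating a subgaussian law to a tail event of probability $k/n$ only inflates its subgaussian norm by a controlled factor, which is absorbed into $c,C$. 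Finally I would note that if \eqref{eq:exp_threshold_cond} holds only approximately, the same argument runs after rescaling the rows by $H^{-1/2}$ rather than $\phi^{-1/2}$, yielding the bound with $\min_j \phi_j$ in place of $\phi$, since then $\lambda_{\min}(\bar\X^T\bar\X) \ge \min_j\phi_j\cdot s_{\min}(\mathbf{W})^2$.
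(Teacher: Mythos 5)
Your proposal is correct and follows essentially the same route as the paper's own proof: reduce to $\lambda_{\min}$ of the whitened Gram matrix, use symmetry plus the balance condition \eqref{eq:exp_threshold_cond} to get $H = \phi\,\mathrm{Id}$, rescale by $1/\sqrt{\phi}$ to obtain isotropic subgaussian rows, and invoke Vershynin's smallest-singular-value bound (Theorem 5.39) with $t = \alpha\sqrt{k} - C\sqrt{d}$. Your explicit verification of the hypotheses --- i.i.d.\ rows from the fixed threshold, diagonality of $H$, and subgaussianity of the conditioned law $\bar{\mathbf{D}}$ --- is in fact slightly more careful than the paper, which simply asserts these points.
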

\begin{proof}
We would like to choose $k$ out of $n$ observations $X_1, \dots, X_n \sim \F$ iid.
Assume our sampling induces a new distribution $\bar \F$.
The loss we want to minimize for our OLS estimate $\hat\beta = \hat\beta(\X, \Y)$ is
\begin{align}
\E_{\X, \Y \sim \bar \F, X \sim \F} \left[ \left( X^T \hat\beta - X^T \beta \right)^2 \right] = \sigma^2 \ \E_{\X, \Y \sim \bar \F} \left[ \Tr \left( \Sigma (\X^T \X)^{-1} \right) \right],
\end{align}
where we assumed Gaussian noise with variance $\sigma^2$.

Let us see how we construct $\bar \F$.
We sample $X \sim \F$, we whiten the observation $Z = \Sigma^{-1/2} X$, and then we select it or not according to a fixed thresholding rule.
If $\| Z \|_{\xi} \ge \Gamma$, then we keep $X = \Sigma^{1/2} Z$.

We choose $\xi$ and $\Gamma$ so that there exists $\phi > 0$, such that for all $i = 1, \dots, d$,
\begin{equation}
\E_{W \sim \bar \F} [W_{(i)}^2] = \phi,
\end{equation}
where $W_{(i)}$ denotes the $i$-th component of $W \sim \bar \F$.
Note that $\bar \F = \bar \F(\xi, \Gamma)$.

\

$Z$ is a linear transformation of $X$; $W$ is \textbf{not} a linear transformation of $Z$.

Moreover, the covariance matrix of $\bar \F$ is $\Sigma_{\bar \F} = \phi \ \mathrm{Id}$.
If $\F$ is a general subgaussian distribution, thresholding could change the mean away from zero.

Assume after running our algorithm, we end up with $\X \in \R^{k \times d}$.
We denote by $\W$ the observations after whitening, note that by design every $w \in \W$ passed our test: $\| w \|_{\xi} \ge \Gamma$.
In other words, $w \sim \bar \F$.
We see that $\W = \X \Sigma^{-1/2}$ or, alternatively, $\X = \W \Sigma^{1/2}$.

Now, we can derive
\begin{align}
\Tr \left( \Sigma (\X^T \X)^{-1} \right) &= \Tr \left( \Sigma \left( \Sigma^{1/2} \W^T \W \Sigma^{1/2} \right)^{-1}  \right) \\
&= \Tr \left( \left( \Sigma^{-1/2} \Sigma^{1/2} \W^T \W \Sigma^{1/2} \Sigma^{-1/2} \right)^{-1}  \right) \\
&= \Tr \left( \left( \W^T \W \right)^{-1}  \right) \\
&= \Tr \left( \Sigma_{\bar \F}^{1/2} \Sigma_{\bar \F}^{-1} \Sigma_{\bar \F}^{1/2} \left( \W^T \W \right)^{-1}  \right) \\
&= \Tr \left( \Sigma_{\bar \F}^{-1} \left( \bar \W^T \bar \W \right)^{-1}  \right) \\
&= \Tr \left( \frac{1}{\phi} \mathrm{Id} \left( \bar \W^T \bar \W \right)^{-1}  \right) \\
&= \frac{1}{\phi} \ \Tr \left( \left( \bar \W^T \bar \W \right)^{-1}  \right) \\
&\le \frac{d}{\phi} \ \frac{1}{\lambda_{\min} \left(\bar \W^T \bar \W \right)}
= \frac{d}{\phi \ k} \ \frac{1}{\lambda_{\min} \left( \frac{1}{k} \bar \W^T \bar \W \right)}.
\end{align}
where $\bar \W$ is actually white data.
Thus, note that $\bar \W^T \bar \W / k \to \mathrm{Id}$ as $k \to \infty$.

\

Assume that $\F$ is subgaussian such that if $k > d$, then $\X^T\X$ has full rank with probability one.
Thresholding will not change the shape of the tails of the distribution, $\bar \F$ will also be subgaussian.

At this point, we need to measure how fast $\lambda_{\min} \left( \bar \W^T \bar \W \right) / k$ goes to 1.
We can use Theorem 5.39 in \cite{vershynin2010introduction} which guarantees that, for $\alpha > 0$ such that $t = \alpha \sqrt{k} - C \sqrt{d} > 0$, with probability at least $1 - 2\exp(-c t^2)$ we have
\begin{align}
\lambda_{\min} \left( \frac{1}{k} \bar \W^T \bar \W \right) \ge (1-\alpha)^2,
\end{align}
as $\bar\W$ is white subgaussian.
It follows that for $\alpha > C \sqrt{d} / \sqrt{k}$, with probability at least $1 - 2\exp(-c t^2)$
\begin{align}
\Tr \left( \Sigma (\X^T \X)^{-1} \right) \le \frac{d}{(1-\alpha)^2 \ \phi \ k}.
\end{align}
Note that $1/(1-\alpha) \approx 1 + O(\sqrt{{d}/{k}})$.
\end{proof}

\section{Proof of $\Tr(X^{-1}) \ge \Tr(\mathrm{Diag}(X)^{-1})$}
\label{s:le_trace}
In order to justify that we want $S = \X^T \X$ to be as close as possible to diagonal, we show the following lemma.
Under our assumptions $S$ is symmetric positive definite with probability 1.
\begin{lemma}
Let $X$ be a $n \times n$ symmetric positive definite matrix. Then,
\begin{equation}
\Tr(X^{-1}) \ge \Tr(\mathrm{Diag}(X)^{-1}),
\end{equation}
where $\mathrm{Diag}(\cdot)$ returns a diagonal matrix with the same diagonal as the argument.
\end{lemma}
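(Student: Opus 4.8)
The plan is to reduce this matrix inequality to a coordinatewise comparison of diagonal entries. Writing the diagonal entries of $X$ as $X_{11},\dots,X_{nn}$ (all strictly positive, since $X$ is positive definite), we have $\Tr(\mathrm{Diag}(X)^{-1}) = \sum_{i=1}^n 1/X_{ii}$ and $\Tr(X^{-1}) = \sum_{i=1}^n (X^{-1})_{ii}$. Hence it suffices to establish the per-coordinate bound $(X^{-1})_{ii} \ge 1/X_{ii}$ for each $i$ and then sum over $i$. This decouples the problem entirely from the off-diagonal structure of $X$.

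To prove the per-coordinate bound I would apply the Cauchy--Schwarz inequality in its positive-definite form. Using the symmetric square root $X = X^{1/2}X^{1/2}$ and the $i$-th standard basis vector $e_i$, write $1 = e_i^T e_i = \langle X^{1/2} e_i,\, X^{-1/2} e_i \rangle$. Cauchy--Schwarz then gives $1 \le \|X^{1/2} e_i\|^2 \, \|X^{-1/2} e_i\|^2 = (e_i^T X e_i)(e_i^T X^{-1} e_i) = X_{ii}\,(X^{-1})_{ii}$, which rearranges immediately to $(X^{-1})_{ii} \ge 1/X_{ii}$. Summing over $i$ yields the lemma. An equivalent route uses the Schur complement: partitioning $X$ to isolate the $i$-th coordinate gives $(X^{-1})_{ii} = \bigl(X_{ii} - X_{i,\bar{i}}(X_{\bar{i},\bar{i}})^{-1} X_{\bar{i},i}\bigr)^{-1}$, and since $X_{\bar{i},\bar{i}}$ is positive definite the subtracted quadratic form is nonnegative, so the denominator is at most $X_{ii}$, giving the same bound.

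I do not expect a genuine obstacle here: the only substantive ingredient is the scalar inequality $(X^{-1})_{ii} \ge 1/X_{ii}$, and the single point requiring care is the correct use of positive definiteness, which simultaneously guarantees $X_{ii} > 0$ and the existence of $X^{1/2}$ and $X^{-1/2}$, so that every quantity above is well defined. As a consistency check, equality holds throughout precisely when $X^{1/2} e_i$ and $X^{-1/2} e_i$ are parallel for every $i$, i.e. when $X$ is diagonal and $\mathrm{Diag}(X) = X$; this matches the intuition in the main text that making $\X^T\X$ as close to diagonal as possible minimizes the trace of its inverse.
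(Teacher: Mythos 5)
Your proof is correct, but it takes a genuinely different route from the paper. The paper proves the lemma by induction on the dimension: it peels off the last row and column, applies the block-inverse formula $\Tr(X^{-1}) = \Tr(A^{-1}) + \frac{1}{k} + \frac{1}{k}\Tr(A^{-1}bb^TA^{-1})$ with Schur complement $k = c - b^TA^{-1}b$, and combines the induction hypothesis with $\frac{1}{k} \ge \frac{1}{c}$ and nonnegativity of the rank-one trace term. You instead prove the stronger \emph{entrywise} inequality $(X^{-1})_{ii} \ge 1/X_{ii}$ --- via Cauchy--Schwarz applied to $X^{1/2}e_i$ and $X^{-1/2}e_i$, or equivalently via the Schur complement formula $(X^{-1})_{ii} = \bigl(X_{ii} - X_{i,\bar{i}}(X_{\bar{i},\bar{i}})^{-1}X_{\bar{i},i}\bigr)^{-1}$ --- and then sum over $i$, with no induction needed. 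Your Schur-complement variant is essentially the paper's inductive step applied directly to each coordinate, which shows why the induction is avoidable: the per-coordinate bound decouples the problem completely. Your route is shorter, yields the stronger pointwise statement, and gives the equality characterization ($X$ diagonal) for free; the paper's induction is marginally more self-contained in that it needs only the block-inverse formula and no matrix square roots, though your Schur-complement phrasing matches that economy as well. Both arguments are sound.
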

In other words, we show that for all positive definite matrices with the same diagonal elements, the diagonal matrix (matrix with all off diagonal elements being 0) has the least trace after the inverse operation. 
\begin{proof}
We show this by induction. Consider a $2 \times 2$ matrix 
\begin{equation}
X=\begin{bmatrix}
a & b \\ b & c \end{bmatrix}
\end{equation}
and
\begin{equation}
\Tr(X^{-1})=\frac{1}{ac-b^2} (a+c)
\end{equation}
since $ac-b^2 >0$ ($X$ is positive definite), the above expression is minimized when $b^2=0$, that is, $X$ is diagonal.

\

Assume the statement is true for all $n \times n$ matrices. Let $X$ be a $(n+1) \times (n+1)$ positive definite matrix. Decompose it as
\begin{equation}
X=\begin{bmatrix}
A & b \\ b^T & c 
\end{bmatrix}.
\end{equation}
By the block inverse formula, (see for example \cite{petersen2008matrix})
\begin{equation}
\Tr(X^{-1})=\Tr(A^{-1})+\frac{1}{k}+\frac{1}{k} \Tr(A^{-1} b b^T A^{-1}),
\end{equation}
where $k=c-b^T A^{-1} b$.
Note $k >0$ by Schur's complement for positive definite matrices. Using the induction hypothesis, $\Tr(A^{-1})\geq \Tr(\mathrm{Diag}(A)^{-1})$.
By the positive definiteness of $A$, $b^T A^{-1} b \geq 0$, therefore $\frac{1}{k} \geq \frac{1}{c}$.

\

Also, $\Tr(A^{-1} b b^T A^{-1})\geq 0$.
Thus,
\begin{equation}
\Tr(X^{-1})\geq \Tr(A^{-1})+\frac{1}{c} = \Tr(\mathrm{Diag}(X)^{-1}),
\end{equation}
and the result follows.
\end{proof}

\section{Proof of Corollary 3.2}
\label{s:cogaussian}
\begin{corollary}\label{app_co:gaussian}
If the observations in Theorem \ref{th:main} are jointly Gaussian with covariance matrix $\Sigma \in \R^{d \times d}$, $\xi_j = 1$ for all $j = 1, \dots, d$, and $\Gamma = \bar{C} \sqrt{d + 2 \log(n / k)}$, for some constant $\bar{C}  \ge 1$, then with probability at least $1 - 2 \exp(-c t^2)$ we have that
\begin{equation}
\mathrm{Tr}(\Sigma(\X^T\X)^{-1}) \le \frac{d}{(1 - \alpha)^2 \left( 1 + \frac{2 \log(n/k)}{d} \right) \ k}.
\end{equation}
\end{corollary}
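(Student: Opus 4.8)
The plan is to derive the corollary as a direct specialization of Theorem \ref{app_th:main}: it suffices to exhibit a valid choice of $(\xi,\Gamma)$ meeting that theorem's hypotheses and to lower bound the associated constant $\phi$ by $1 + 2\log(n/k)/d$. First I would record what whitening does in the jointly Gaussian case. If $X \sim \mathcal{N}(0,\Sigma)$, then the whitened vector $Z = \Sigma^{-1/2} X \sim \mathcal{N}(0,\mathrm{Id})$ has i.i.d.\ standard normal components, so each marginal is symmetric about zero (one hypothesis of Theorem \ref{app_th:main}) and $\bar{\mathbf{D}}$ is manifestly subgaussian. With the prescribed weights $\xi_j = 1$ for all $j$, the weighted norm collapses to the plain Euclidean norm, $\|Z\|_\xi^2 = \sum_{j=1}^d Z_j^2 \sim \chi^2_d$, and the selection event is $\{\chi^2_d \ge \Gamma^2\}$.

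Next I would verify the expectation condition \eqref{eq:exp_threshold_cond}. Because $Z_1,\dots,Z_d$ are exchangeable and the event $\{\|Z\|^2 \ge \Gamma^2\}$ is invariant under permuting coordinates, the conditional second moments $\E[Z_j^2 \mid \|Z\|^2 \ge \Gamma^2]$ coincide across $j$; call the common value $\phi$. Hence $(\xi,\Gamma) = (\mathbf{1},\Gamma)$ satisfies \eqref{eq:exp_threshold_cond}, all hypotheses of Theorem \ref{app_th:main} hold, and consequently $\mathrm{Tr}(\Sigma(\X^T\X)^{-1}) \le d/[(1-\alpha)^2 \phi k]$ with the stated probability. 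It then remains only to lower bound $\phi$.

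The key step is the crude but sufficient bound $\phi \ge \Gamma^2/d$. Summing the identical conditional moments and using $\sum_j \xi_j = d$,
\begin{equation*}
d\phi = \sum_{j=1}^d \E[Z_j^2 \mid \|Z\|^2 \ge \Gamma^2] = \E\big[\|Z\|^2 \,\big|\, \|Z\|^2 \ge \Gamma^2\big] \ge \Gamma^2,
\end{equation*}
the last inequality being immediate since conditioning on $\|Z\|^2 \ge \Gamma^2$ forces the conditioned variable to be at least $\Gamma^2$. Substituting $\Gamma^2 = \bar{C}^2(d + 2\log(n/k))$ and using $\bar{C} \ge 1$ then gives
\begin{equation*}
\phi \ge \frac{\Gamma^2}{d} = \bar{C}^2\left(1 + \frac{2\log(n/k)}{d}\right) \ge 1 + \frac{2\log(n/k)}{d},
\end{equation*}
and plugging this into the bound above yields exactly the claimed inequality.

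I do not expect a genuine obstacle: the content lies entirely in recognizing that the prescribed threshold $\Gamma^2 \approx d + 2\log(n/k)$ is calibrated precisely so that the trivial bound $\phi \ge \Gamma^2/d$ already delivers the advertised $1 + 2\log(n/k)/d$ gain factor. The one point demanding mild care is the exchangeability argument certifying \eqref{eq:exp_threshold_cond} with equal weights; this is where joint Gaussianity (equivalently, i.i.d.\ standard-normal whitened coordinates) is essential, and it would fail for general subgaussian laws with heterogeneous marginals. If a sharper constant were desired, one could replace the crude bound by the exact identity $\E[\chi^2_d \mid \chi^2_d \ge \Gamma^2] = d\,\p(\chi^2_{d+2} \ge \Gamma^2)/\p(\chi^2_d \ge \Gamma^2)$, which follows from $q f_d(q) = d\, f_{d+2}(q)$, but this refinement is unnecessary for the stated result.
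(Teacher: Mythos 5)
Your proof is correct as a proof of the literal statement, and its endgame coincides with the paper's: both justify the unit weights $\xi_j = 1$ by the fact that whitened jointly Gaussian coordinates are i.i.d.\ (your exchangeability phrasing of \eqref{eq:exp_threshold_cond} is clean and sound), and both finish with the elementary bound $d\phi = \E\left[\|Z\|^2 \mid \|Z\|^2 \ge \Gamma^2\right] \ge \Gamma^2$ together with $\bar{C} \ge 1$. Where you diverge is that you treat the form $\Gamma = \bar{C}\sqrt{d + 2\log(n/k)}$ purely as a hypothesis, which makes the corollary nearly immediate; the paper instead \emph{derives} this form. Its proof takes $\Gamma$ to be the threshold Algorithm \ref{alg:threshold} actually uses, namely the quantile $\Gamma^2 = F^{-1}_{\chi^2_d}(1 - k/n)$ forced by the budget condition \eqref{eq:prob_threshold_cond}, and establishes the two-sided estimate
\begin{equation*}
\sqrt{d + 2\log\left(\frac{n}{k}\right)} \;\le\; \Gamma \;\le\; \sqrt{d + 2\log\left(\frac{n}{k}\right) + 2\sqrt{d\log\left(\frac{n}{k}\right)}},
\end{equation*}
the upper half via the Laurent--Massart concentration inequality \cite{laurent2000adaptive} and the lower half via Inglot's lower bound on chi-squared tails \cite{inglot2010inequalities}. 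That sandwich is the bulk of the paper's proof, and it is precisely what certifies that your hypothesis is non-vacuous: a threshold of the stated form with $\bar{C} \ge 1$ exists \emph{and} is consistent with selecting $k$ of the $n$ observations. This consistency matters because Theorem \ref{th:main} models the selected rows as i.i.d.\ draws from the induced distribution $\bar{\mathbf{D}}$; if one posited a $\bar{C}$ exceeding the quantile value, fewer than $k$ points would clear the threshold on average, Algorithm \ref{alg:threshold} would be forced to accept sub-threshold observations, and the i.i.d.-from-$\bar{\mathbf{D}}$ sampling model underlying the theorem would no longer describe its output. So your argument is a valid specialization of the theorem, but a self-contained proof of the corollary as the paper uses it---a performance guarantee for the algorithm run with budget $k/n$---still requires the quantile estimates you bypassed; your closing identity $\E[\chi^2_d \mid \chi^2_d \ge \Gamma^2] = d\,\p(\chi^2_{d+2} \ge \Gamma^2)/\p(\chi^2_d \ge \Gamma^2)$ could serve as a starting point for an alternative derivation of the lower half, but as written it is only a remark.
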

\begin{proof}
We have to show that $\xi_j = 1$ for all $j$, and $\Gamma = C \sqrt{d + 2 \log(n / k)}$ satisfy the equations
\begin{equation}\label{app_cor:prob_threshold_cond}
\p_{\mathbf{D}}\left( \| \bar X \|_{\xi} \ge \Gamma \right) = \alpha = \frac{k}{n},
\end{equation}
\begin{align}\label{app_cor:exp_threshold_cond}
\E_{D} [\bar X_{j}^2 \mid \| \bar X \|_{\xi}^2 \ge \Gamma^2] = \phi, \quad \text{ for all } j,
\end{align}
and $\phi > \left( 1 + {2 \log(n/k)}/{d} \right)$.
The components of $\bar X$ are independent, as observations are jointly Gaussian.
It immediately follows that $\xi_j = 1$, for all $1 \le j \le d$. Thus,
\begin{equation}
Z_{\xi} = \sum_{j=1}^d \bar X_j \sim \chi^2_d, \qquad \Gamma^2 = F^{-1}_{\chi^2_d}\left(1 - \frac{k}{n}\right).
\end{equation}
The value of $Z_{\xi}$ is strongly concentrated around its mean, $\E Z_{\xi}  = d$.
We now use two tail approximations to obtain our desired result.

By \cite{laurent2000adaptive}, we have that
\begin{equation}
\p(Z_{\xi} - d \ge 2 \sqrt{dx} + 2x) \le \exp(-x).
\end{equation}
If we take $\exp(-x) = \alpha$, then $x = \log(n/k)$.
In this case, we conclude that
\begin{equation}
\p\left(Z_{\xi} \ge d + 2 \log \left( \frac{n}{k} \right) + 2 \sqrt{d \log \left( \frac{n}{k} \right)} \right) \le \alpha = \frac{k}{n}.
\end{equation}
Note that $\p(\| \bar X \|_{\xi} > \Gamma) = \p(Z_{\xi} > \Gamma^2) = \alpha$.
Therefore, by definition
\begin{equation}
\Gamma \le \sqrt{d + 2 \log \left( \frac{n}{k} \right) + 2 \sqrt{d \log \left( \frac{n}{k} \right)}}.
\end{equation}

On the other hand, we would like to show that
\begin{equation}
\p\left(Z_{\xi} \ge d + 2 \log\left(\frac{n}{k}\right) \right) \ge \alpha,
\end{equation}
as that would directly imply that $\Gamma \ge \sqrt{d + 2 \log\left({n}/{k}\right)}$.

We can use Proposition 3.1 of \cite{inglot2010inequalities}.
For $d > 2$ and $x > d - 2$,
\begin{equation*}
\p(Z_{\xi} \ge x) \ge \frac{1-e^{-2}}{2} \frac{x}{x - d + 2\sqrt{d}} \exp \left\{ - \frac{1}{2} \left( x - d - (d-2) \log\left(\frac{x}{d}\right) + \log d \right) \right\}.
\end{equation*}
Take $x = d + 2 \psi$, where $\psi = \log(n/k)$.
It follows that
\begin{align}\label{app_eq:inglot}
\p(Z_{\xi} \ge d + 2\psi) &\ge \frac{1-e^{-2}}{2} \frac{d + 2 \psi}{2\sqrt{d} + 2 \psi} \exp \left\{ - \frac{1}{2} \left( 2 \psi - (d-2) \log\left(1 + \frac{2\psi}{d} \right) + \log d  \right) \right\} \nonumber \\
&= \frac{1-e^{-2}}{2} \frac{d + 2 \psi}{2\sqrt{d} + 2 \psi}  \exp \left\{ \frac{d-2}{2} \log\left(1 + \frac{2\psi}{d} \right) -\frac{1}{2} \log d \right\} \exp \left\{ - \psi \right\} \nonumber \\
&\ge \exp \left\{ - \psi \right\},
\end{align}
where we assumed, for example, $d \ge 9$ and $n / k > 17$ (as in Proposition 5.1 of \cite{inglot2010inequalities}).
In any case, in those rare cases (in our context) where $d < 9$ and $n / k$ very small, the previous bound still holds if we subtract a small constant $C \in [0, 5/2]$ from the LHS: $ \p(Z_{\xi} \ge d + 2\psi - C)$.

\

Equivalently, from \eqref{app_eq:inglot}
\begin{equation}
\p(Z_{\xi} \ge d + 2 \log(n/k)) \ge k/n = \alpha.
\end{equation}
We conclude that
\begin{equation}
\sqrt{d + 2 \log \left( \frac{n}{k} \right)} \le \Gamma \le \sqrt{d + 2 \log \left( \frac{n}{k} \right) + 2 \sqrt{d \log \left( \frac{n}{k} \right)}}.
\end{equation}
Finally, we have that
\begin{align}
\phi \ge \frac{\Gamma^2}{d} \ge 1 + \frac{2 \log \left( {n}/{k} \right)}{d}.
\end{align}
By Theorem \ref{app_th:main}, the corollary follows.
\end{proof}

\section{CLT Approximation}
\label{s:coclt}

As we explain in the main text, it is sometimes difficult to directly compute the distribution of the $\xi$-norm of a white observation, given by $Z_\xi$.
Recall that $\Gamma^2 = F^{-1}_{Z_\xi}(1 - k / n)$.
Fortunately, $Z_\xi$ is the sum of $d$ random variables, and, in high-dimensional spaces, a CLT approximation can help us to choose a good threshold.
In this section we derive some theoretical guarantees.

\

The CLT is a good idea for bounded variables (as the square is still bounded, and therefore subgaussian), but if the underlying components $X_{j}$ are unbounded subgaussian, $Z_\xi$ will be at least subexponential---as the square of a subgaussian random variable is subexponential, \cite{vershynin2010introduction}---, and a higher threshold ---like that coming from chi-squared--- is more appropriate.

\

In addition, in the context of heavy-tails, \emph{catastrophic} effects are expected, as $\p( \max_j X_j > t ) \sim \p( \sum_j X_j > t ) $, leading to observations dominated by single dimensions. 

\

Assume that components $\bar X_j$ are independent (while not necessarily identically distributed).
By Lyapunov's CLT, one can show that\footnote{Some mild additional moment/regularity conditions on each $\bar X_j$ are required to satisfy Lyapunov's Condition.}
\begin{equation*}
Z_\xi = \sum_{j=1}^d \xi_j \bar X_j^2 \approx \mathcal{N}\left(d, \sum_{j=1}^d \xi_j^2 \left( \E[\bar{X}_j^4] - 1 \right) \right).
\end{equation*}
It follows that $\Gamma$ satisfies $\p_{\mathbf{D}}\left( \| \bar X_i \|_{\xi} \ge \Gamma \right) = {k}/{n}$ if
\begin{equation*}
\Gamma^2 \approx d + \Phi^{-1} \left( 1 - \frac{k}{n} \right) \ \sqrt{\sum_{i=1}^d \xi_i^2 \left( \E[\bar X_i^4] - 1 \right)}.
\end{equation*}
In the sequel, assume $d$ is large enough, and the approximation error is negligible.

Define $\gamma = \sqrt{\sum_{i=1}^d \xi_i^2 \left( \E[\bar X_i^4] - 1 \right)}$.
\begin{corollary}\label{app_co:clt}
Assume $Z_\xi = \mathcal{N}\left(d, \gamma^2 \right)$ and $\Gamma^2 = d + \gamma \ \Phi^{-1} \left( 1 - {k}/{n} \right)$, with $\xi_j$ satisfying \eqref{eq:exp_threshold_cond}.
Let $\alpha > 0$, so that $t = \alpha \sqrt{k} - C \sqrt{d} > 0$. then with probability at least $1 - 2 \exp(-ct^2)$ we have that
\begin{equation}
\mathrm{Tr}(\Sigma(\X^T\X)^{-1}) \le \frac{d}{(1 - \alpha)^2 \ k \left( 1 + \frac{\gamma \sqrt{2 \log(n/k)}}{d} - O \left( \frac{\gamma \log\log(n/k) }{d \sqrt{\log(n/k)}} \right) \right)}.
\end{equation}
\end{corollary}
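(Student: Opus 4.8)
The plan is to treat this as the CLT analogue of the Gaussian Corollary \ref{co:gaussian} and reuse exactly the same three-step recipe: (i) check that the prescribed threshold satisfies the selection condition \eqref{eq:prob_threshold_cond} under the stated approximation; (ii) produce a lower bound on $\phi$ of the form $\phi \ge \Gamma^2/d$; and (iii) feed that bound into Theorem \ref{th:main}. The only genuinely new ingredient compared to the chi-squared computation is that the relevant quantile is now the \emph{Gaussian} quantile $\Phi^{-1}(1 - k/n)$ rather than the chi-squared quantile, so the analytic work reduces to an asymptotic expansion of $\Phi^{-1}$ near the upper tail.

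First I would verify the threshold condition. Under the hypothesis $Z_\xi = \mathcal{N}(d, \gamma^2)$ and the prescribed $\Gamma^2 = d + \gamma\,\Phi^{-1}(1 - k/n)$, we have directly
\begin{equation*}
\p_{\mathbf{D}}\big(\|\bar X\|_\xi \ge \Gamma\big) = \p\big(Z_\xi \ge \Gamma^2\big) = 1 - \Phi\!\left(\frac{\Gamma^2 - d}{\gamma}\right) = 1 - \Phi\!\big(\Phi^{-1}(1 - k/n)\big) = \frac{k}{n},
\end{equation*}
so \eqref{eq:prob_threshold_cond} holds by construction, and $\xi$ is assumed to satisfy \eqref{eq:exp_threshold_cond}. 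Next I would invoke the elementary lower bound on $\phi$ already derived in the proof sketch of Theorem \ref{th:main}, namely $\phi = \E_{\mathbf{D}}[Z_\xi \mid Z_\xi \ge \Gamma^2]/d \ge \Gamma^2/d$, since conditioning on the event $\{Z_\xi \ge \Gamma^2\}$ forces the conditional mean to be at least $\Gamma^2$. This gives immediately
\begin{equation*}
\phi \ge \frac{\Gamma^2}{d} = 1 + \frac{\gamma\,\Phi^{-1}(1 - k/n)}{d}.
\end{equation*}

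The substantive step is the tail expansion of the Gaussian quantile. Writing $p = k/n$ and $L = \log(1/p) = \log(n/k)$, I would apply the inverse Mills-ratio asymptotic $\bar\Phi(z) \sim \phi(z)/z$, which upon taking logarithms yields $z^2 = 2L - 2\log z - \log(2\pi)$; iterating once from the leading term $z_0 = \sqrt{2L}$ gives the standard expansion
\begin{equation*}
\Phi^{-1}(1 - k/n) = \sqrt{2\log(n/k)} - \frac{\log\log(n/k) + \log(4\pi)}{2\sqrt{2\log(n/k)}} + o\!\left(\frac{1}{\sqrt{\log(n/k)}}\right) = \sqrt{2\log(n/k)} - O\!\left(\frac{\log\log(n/k)}{\sqrt{\log(n/k)}}\right).
\end{equation*}
Multiplying by $\gamma/d$ and combining with the previous display produces exactly the claimed lower bound on $\phi$, and then Theorem \ref{th:main} supplies the bound $\mathrm{Tr}(\Sigma(\X^T\X)^{-1}) \le d/[(1-\alpha)^2 \phi k]$ together with the probability $1 - 2\exp(-ct^2)$ for $t = \alpha\sqrt{k} - C\sqrt{d} > 0$, completing the argument.

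I expect the main obstacle to be twofold. The honest analytic difficulty is controlling the error terms in the quantile expansion so that the subleading corrections collapse cleanly into the stated $O(\gamma\log\log(n/k)/(d\sqrt{\log(n/k)}))$ form uniformly over the relevant small-$p$ regime; in particular one must check that the constant $\log(4\pi)$ term, which is $O(1/\sqrt{\log(n/k)})$, is dominated by the $\log\log$ correction. The second, more conceptual, caveat is that $Z_\xi = \sum_j \xi_j \bar X_j^2$ is a nonnegative sum of squares and hence cannot be exactly Gaussian; the corollary is therefore stated in the approximation regime where $d$ is large and the CLT (Lyapunov) error is negligible, so a fully rigorous version would require quantifying a Berry--Esseen-type deviation between the true quantile $F^{-1}_{Z_\xi}(1 - k/n)$ and its Gaussian surrogate and absorbing it into $\Gamma$ (equivalently, into a slightly worse effective threshold, as noted for the estimated-$\Sigma$ case). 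For subgaussian components this error is controlled, and since Theorem \ref{th:main} already tolerates approximate solutions to \eqref{eq:exp_threshold_cond}, the approximation merely perturbs the $\phi$ bound without changing its leading order.
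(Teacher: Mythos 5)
Your proposal is correct and follows essentially the same route as the paper: take the hypothesis that $(\xi,\Gamma)$ solve the threshold equations, lower-bound $\phi \ge \Gamma^2/d = 1 + \tfrac{\gamma}{d}\Phi^{-1}(1-k/n)$, expand the Gaussian quantile as $\sqrt{2\log(n/k)} - O\bigl(\log\log(n/k)/\sqrt{\log(n/k)}\bigr)$, and plug into Theorem \ref{th:main}. The only difference is cosmetic: the paper invokes the non-asymptotic two-sided quantile inequality of Inglot (Theorem 2.1 of \cite{inglot2010inequalities}) where you derive the same expansion asymptotically from the Mills ratio, and your closing caveat about the Gaussian surrogate for $Z_\xi$ matches the paper's explicit standing assumption that the CLT approximation error is negligible.
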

\begin{proof}
Note that, by definition, $\| X \|_{\xi}^2 \sim Z_\xi$ and $\Gamma$ jointly solve the equations required by Theorem \ref{app_th:main}.
In order to apply the theorem, all we need to do is to estimate the magnitude of 
\begin{equation}
\phi = \E_{D} [X_{j}^2 \mid \| X \|_{\xi}^2 \ge \Gamma^2] \ge \frac{\Gamma^2}{d} = 1 + \frac{\gamma}{d} \ \Phi^{-1} \left( 1 - {k}/{n} \right).
\end{equation}
Therefore, we want to find bounds on tail probabilities of the normal distribution.
By Theorem 2.1 of \cite{inglot2010inequalities}, we have that for small $k/n$
\begin{align}
\sqrt{2 \log (n/k)} - \frac{\log(4 \log(n/k)) + 2}{2 \sqrt{2 \log(n/k)}} &\le \Phi^{-1} \left( 1 - \frac{k}{n} \right) \\
&\le \sqrt{2 \log (n/k)} - \frac{\log(2 \log(n/k)) + 3/2}{2 \sqrt{2 \log(n/k)}},
\end{align}
and the result follows.
\end{proof}

We can also show how to apply the previous result to independent uniform distributions centered around zero.
In that case, we have that the fourth moment is $\E[\bar X_j^4] = 9/5$, so $\gamma = \sqrt{\frac{4}{5} d}$, leading to a gain factor
$$\phi = \left(1 + \sqrt{\frac{8 \log(n/k)}{5d}} - o \left( \frac{\log\log(n/k)}{\sqrt{d \log(n/k)}} \right)\right).$$

\section{Proof of Theorem 4.1}
\label{s:lw_theorem}
\begin{theorem}\label{app_th:lower}
Let $\mathbf{A}$ be an algorithm for the problem we described in Section 2.
Then,
\begin{align}\label{app_eq:lower_bound_th}
\E_{\mathbf{A}} \ \mathrm{Tr}(\Sigma (\X^T \X)^{-1}) &\ge \frac{d^2}{\E \left[ \sum_{i = 1}^k || \bar{X}_{(i)}||^2 \right]} \\
&\ge \frac{d}{k \ \E \left[ \frac{1}{d} \ \max_{i \in [n]} || \bar{X}_{i}||^2 \right]}, \nonumber
\end{align}
where $\bar{X}_{(i)}$ is the white observation with the $i$-th largest norm.
Moreover, fix $\alpha \in (0, 1)$.
Let $\mathbf{F}$ be the cdf of $\max_{i \in [n]} ||X_{i}||^2$.
Then, with probability at least $1 - \alpha$
\begin{equation}\label{app_th:lower_hp}
\mathrm{Tr}(\Sigma (\X^T \X)^{-1}) \ge {d^2}/{k \ \mathbf{F}^{-1}(1-\alpha)}.
\end{equation} 
\end{theorem}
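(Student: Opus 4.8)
The plan is to reduce everything to a deterministic spectral inequality for the whitened selection matrix, and only at the very end take expectations or quantiles. By the whitening identity \eqref{eq:trace_bound}, $\Tr(\Sigma(\X^T\X)^{-1}) = \Tr((\bar\X^T\bar\X)^{-1})$, so it suffices to lower bound the trace of the inverse of $\bar\X^T\bar\X$, whose $k$ rows are whatever whitened observations the algorithm $\mathbf{A}$ happened to label. The decisive feature is that this bound should hold \emph{pathwise}, for every realization of the data and of the algorithm's internal randomness, so that neither adaptivity nor randomization can help.

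First I would bound $\Tr((\bar\X^T\bar\X)^{-1})$ from below using only $\Tr(\bar\X^T\bar\X)$. Writing $\lambda_1,\dots,\lambda_d$ for the eigenvalues of the $d\times d$ matrix $\bar\X^T\bar\X$ (positive, and full rank with probability one when $k>d$), the Cauchy--Schwarz / AM--HM inequality gives
\begin{equation*}
\Tr((\bar\X^T\bar\X)^{-1}) = \sum_{j=1}^d \frac{1}{\lambda_j} \ge \frac{d^2}{\sum_{j=1}^d \lambda_j} = \frac{d^2}{\Tr(\bar\X^T\bar\X)},
\end{equation*}
with equality exactly when all $\lambda_j$ coincide, i.e. $\bar\X^T\bar\X \propto \mathrm{Id}$, matching the informal claim that the optimal Gram matrix is a scaled identity. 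Since $\Tr(\bar\X^T\bar\X)$ equals the sum of squared norms of the selected rows, and any $k$-subset of the $n$ whitened observations has a sum of squared norms no larger than that of the $k$ largest, I obtain pathwise
\begin{equation*}
\Tr((\bar\X^T\bar\X)^{-1}) \ge \frac{d^2}{\sum_{i=1}^k \|\bar X_{(i)}\|^2} \ge \frac{d^2}{k\,\max_{i\in[n]}\|\bar X_i\|^2}.
\end{equation*}

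To recover \eqref{app_eq:lower_bound_th} I take $\E_{\mathbf{A}}$ of the first of these inequalities and push the expectation through the convex map $x\mapsto d^2/x$ via Jensen, which yields $\E_{\mathbf{A}}\Tr(\Sigma(\X^T\X)^{-1}) \ge d^2/\E[\sum_{i=1}^k \|\bar X_{(i)}\|^2]$; here the top-$k$ sum depends only on the data, not on $\mathbf{A}$, so the denominator is well defined. Bounding the sum of the top $k$ norms by $k$ times the maximum then gives the second line. For the high-probability statement I use the pathwise bound $\Tr(\Sigma(\X^T\X)^{-1}) \ge d^2/(k M)$ with $M := \max_{i\in[n]}\|\bar X_i\|^2$: on the event $\{M \le \mathbf{F}^{-1}(1-\alpha)\}$, which has probability at least $1-\alpha$ since $\mathbf{F}$ is the cdf of $M$, the quantity $1/M$ is at least $1/\mathbf{F}^{-1}(1-\alpha)$, delivering \eqref{app_th:lower_hp}.

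The proof is short; the only real content is the pathwise reduction, and the main thing to get right is that the bound is genuinely algorithm-independent. This rests on the fact that the top-$k$ sum of squared norms dominates the selected sum for every sample path, regardless of how $\mathbf{A}$ chooses, so that no online or even offline rule can beat greedily taking the largest-norm points and arranging them to be orthogonal. Minor care is needed with the degenerate case where $\mathbf{A}$ labels fewer than $k$ points (or $\bar\X^T\bar\X$ is singular), but then $\Tr((\bar\X^T\bar\X)^{-1}) = +\infty$ and the lower bound holds trivially, and with the direction of Jensen's inequality, which is valid because $d^2/x$ is convex on $(0,\infty)$.
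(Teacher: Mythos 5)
Your proof is correct, and for the expectation bound it takes a genuinely different (and cleaner) route than the paper. The paper proves \eqref{app_eq:lower_bound_th} by applying Jensen's inequality at the \emph{matrix} level — using convexity of $H \mapsto \Tr(H^{-1})$ on positive definite matrices to get $\E\,\Tr(S^{-1}) \ge \Tr((\E S)^{-1})$ with $S = \bar\X^T\bar\X$ — and then bounding $\Tr(\E S)$ by the expected top-$k$ norm sum and arguing that equal eigenvalues minimize $\sum_j 1/\lambda_j(\E S)$ under that trace constraint. You instead establish the single pathwise inequality $\Tr(S^{-1}) \ge d^2/\Tr(S) \ge d^2/\sum_{i=1}^k \|\bar X_{(i)}\|^2$ via scalar AM--HM, and then obtain the expectation statement by scalar Jensen applied to $x \mapsto d^2/x$; this needs only elementary convexity, makes the algorithm-independence manifest (the top-$k$ sum dominates any selection on every sample path), and unifies both conclusions, since the high-probability claim falls out of the very same pathwise bound — exactly as in the paper, whose proof of \eqref{app_th:lower_hp} is essentially your chain. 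Indeed, your explicit AM--HM step is tighter bookkeeping than the paper's own write-up of that chain, which as written replaces each eigenvalue $\lambda_i$ by $\Tr(S)/d$ termwise (not valid for individual eigenvalues) even though the aggregate inequality it asserts is precisely AM--HM and hence correct. Your handling of the degenerate/singular case and of the quantile event $\{M \le \mathbf{F}^{-1}(1-\alpha)\}$ is also sound, so there is no gap; the two proofs differ only in where convexity is invoked (matrix-level before the eigenvalue bound, versus scalar-level after the pathwise bound).
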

\begin{proof}
We want to minimize $\mathrm{Tr}(\Sigma (\X^T \X)^{-1}) = \mathrm{Tr}((\bar \X^T \bar \X)^{-1})$.
Let us define $S = \bar \X^T \bar \X$.
One can prove that $H \to \text{Tr}(H^{-1})$ is \emph{convex} for symmetric positive definite matrices $H$.
It then follows by Jensen's Inequality (assuming $k > d$, so $S$ is symmetric positive definite with high probability)
\begin{equation}\label{app_eq:lower_bound_tr_inv}
\E  \text{Tr}(S^{-1}) \ge  \text{Tr}((\E S)^{-1}) = \sum_{j=1}^d \frac{1}{\lambda_j(\E S)}.
\end{equation}
Let $\E S$ be the expected value of $S$ for an \emph{arbitrary} algorithm $\mathbf{A}$ that selects its observations sequentially.
We want to understand what is the \emph{minimum} possible value the RHS of \eqref{app_eq:lower_bound_tr_inv} can take.
The sum of eigenvalues is upper bounded by
\begin{align*}
\sum_{j=1}^d \lambda_j(\E S) = \text{Tr}(\E S) = \sum_{j=1}^d \E(S_{jj}) &= \sum_{j=1}^d \sum_{i=1}^k \E[\bar X_{ij}^2] \\
&= {\sum_{i=1}^k \E[||\bar X_{i}||^2]} \\
&\le {\E \left[ \sum_{i = 1}^k || \bar X_{(i)}||^2 \right]} \\
&\le {k \ \E \left[ \max_{i \in [n]} || \bar X_{i}||^2 \right]},
\end{align*}
where $\bar X_{(i)}$ denotes the observation with the $i$-th largest norm.
Because $\E S$ is symmetric positive definite, its eigenvalues are real non-negative, so that
\begin{align*}
0 < \lambda_{\min}(\E S) \le \frac{\mathrm{Tr}(\E S)}{d} \le \frac{{\E \left[ \sum_{i = 1}^k ||\bar X_{(i)}||^2 \right]}}{d} \le \frac{k \ \E \left[ \max_{i \in [n]} ||\bar {X}_{i}||^2 \right]}{d}.
\end{align*}
We conclude that the \emph{solution} to the minimization problem of \eqref{app_eq:lower_bound_tr_inv} ---that is, when all eigenvalues are equal--- is lower bounded by
\begin{equation*}
\E \mathrm{Tr}(S^{-1}) \ge \sum_{j=1}^d \frac{1}{\lambda_j(\E S)} \ge \frac{d^2}{{\E \left[ \sum_{i = 1}^k || \bar X_{(i)}||^2 \right]}} \ge \frac{d^2}{k \ \E \left[ \max_{i \in [n]} ||\bar {X}_{i}||^2 \right]},
\end{equation*}
which proves \eqref{app_eq:lower_bound_th}.

\

In order to prove the high-probability statement \eqref{app_th:lower_hp}, note that
\begin{align}
\mathrm{Tr}(\Sigma (\X^T\X)^{-1}) = \mathrm{Tr}((\bar \X^T\bar \X)^{-1}) &= \sum_{i = 1}^d \frac{1}{\lambda_{i} (\bar \X^T\bar \X)} \nonumber \\
&\ge \sum_{i = 1}^d \frac{1}{\sum_{j = 1}^k \| \bar X_j \| ^2 / d} \nonumber \\
&\ge \frac{d^2}{\sum_{j = 1}^k \| \bar X_{(j)} \| ^2} \ge \frac{d^2}{k \max_{i \in [n]} \| \bar X_{i} \| ^2}.
\end{align}
We directly conclude that with probability at least $1 - \alpha$,
\begin{equation}
\max_{i \in [n]} \| \bar X_{i} \| ^2 \le \mathbf{F}^{-1}(1-\alpha)
\end{equation}
as $\mathbf{F}$ is the cdf of $\max_{i \in [n]} \| \bar X_{i} \| ^2$.
It follows that with probability at least $1 - \alpha$,
\begin{equation}
\mathrm{Tr}(\Sigma (\X^T\X)^{-1}) \ge \frac{d^2}{k \ \mathbf{F}^{-1}(1-\alpha)}.
\end{equation}
\end{proof}

\section{Proof of Corollary 4.2}
\label{s:lw_guassian}
\begin{corollary}
For Gaussian observations $X_i \sim \mathcal{N}(0, \Sigma)$ and large $n$, for any algorithm $\mathbf{A}$
\begin{equation}\label{eq_exp_gauss_lwbound}
\E_{\mathbf{A}} \  \mathrm{Tr}(\Sigma (\X^T\X)^{-1}) \ge \frac{d}{k \left( \frac{2 \log n}{d} + \log \log n \right)}.
\end{equation}
Moreover, let $\alpha \in (0, 1)$.
Then, for any $\mathbf{A}$ with probability at least $1 - \alpha$ and $C = 2 \log \Gamma(d/2) / d$,
\begin{equation}\label{eq_hp_gauss_lwbound}
\mathrm{Tr}(\Sigma (\X^T\X)^{-1}) \ge \frac{d}{k \left( \frac{2 \log n}{d} + \log \log n - \frac{1}{d} \log \log\frac{1}{1 - \alpha} - C \right)}.
\end{equation}
\end{corollary}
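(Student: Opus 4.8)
The plan is to specialize the two bounds of Theorem~\ref{app_th:lower} to the Gaussian case, where all the work reduces to understanding the extreme-value behaviour of the largest squared norm among the $n$ incoming observations. After whitening, $\bar X_i = \Sigma^{-1/2} X_i \sim \mathcal{N}(0, \mathrm{Id})$, so the quantities $\|\bar X_i\|^2 \sim \chi^2_d$ are i.i.d. Writing $M_n = \max_{i \in [n]} \|\bar X_i\|^2$ and letting $G$ denote the $\chi^2_d$ c.d.f.\ with survival function $\bar G = 1-G$ and density $g$, \eqref{app_eq:lower_bound_th} gives $\E_{\mathbf A}\,\Tr(\Sigma(\X^T\X)^{-1}) \ge d^2 / (k\, \E[M_n])$, while \eqref{app_th:lower_hp} gives the high-probability statement in terms of the quantile $\mathbf F^{-1}(1-\alpha) = G^{-1}\big((1-\alpha)^{1/n}\big)$ of $M_n$. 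Thus it suffices to produce a sharp \emph{upper} bound on $\E[M_n]$ and on $\mathbf{F}^{-1}(1-\alpha)$.

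First I would set up the extreme-value approximation. The $\chi^2_d$ upper tail is of exponential type, with hazard rate $g(x)/\bar G(x) \to 1/2$, so $\chi^2_d$ lies in the Gumbel max-domain of attraction and the Fisher--Tippett theorem applies: $(M_n - b_n)/a_n$ converges to a standard Gumbel law, with scale $a_n = 1/h(b_n) \to 2$ and location $b_n$ defined by $\bar G(b_n) = 1/n$. Plugging the explicit tail $\bar G(x) \approx x^{d/2-1} e^{-x/2} / (2^{d/2-1}\Gamma(d/2))$ into $\bar G(b_n) = 1/n$ and solving by taking logarithms yields, to leading order,
\begin{equation*}
b_n \approx 2\log n + (d-2)\log\log n - 2\log\Gamma(d/2),
\end{equation*}
where the final term equals $dC$, with $C = 2\log\Gamma(d/2)/d$ as in the statement; this is the precise point at which the $\chi^2_d$ normalizing constant $2^{d/2}\Gamma(d/2)$ enters the bound.

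With the normalizing constants in hand, the two conclusions follow by dividing by $d$. For the expectation bound I would use $\E[M_n] = b_n + a_n \gamma_{\mathrm{EM}} + o(a_n)$, where $\gamma_{\mathrm{EM}}$ is the Euler--Mascheroni mean of the Gumbel limit; since we only need an \emph{upper} bound on $\E[M_n]$, dropping the negative $-2\log\Gamma(d/2)$ contribution and rounding $(d-2)\log\log n \le d\log\log n$ gives $\E[M_n]/d \le \frac{2\log n}{d} + \log\log n$, which is \eqref{eq_exp_gauss_lwbound}. For the high-probability bound I would invert the limiting Gumbel law, whose $(1-\alpha)$-quantile is $b_n - a_n\log\log\frac{1}{1-\alpha}$; equivalently one solves $\bar G(x) \approx \frac1n\log\frac{1}{1-\alpha}$ directly, which repeats the computation of $b_n$ with $\log n$ shifted by a $\log\log\frac{1}{1-\alpha}$ term. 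Dividing by $d$, the location again contributes $\frac{2\log n}{d} + \log\log n - C$ while the Gumbel scale yields the $-\frac1d\log\log\frac{1}{1-\alpha}$ correction; substituting into \eqref{app_th:lower_hp} gives \eqref{eq_hp_gauss_lwbound}.

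The main obstacle is rigor in the extreme-value step rather than the algebra. Convergence in distribution of $(M_n-b_n)/a_n$ does not by itself yield convergence of $\E[M_n]$, so I would need a uniform-integrability (domination) argument---the $\chi^2_d$ upper tail is light enough to control $\sup_n \E\big[((M_n-b_n)/a_n)^{1+\epsilon}\big]$---to justify replacing $\E[M_n]$ by its Gumbel surrogate. The second delicate point is the bookkeeping of sub-leading corrections in the inversion of $\bar G$: one must pin down the scale constant $a_n$ and verify that the neglected lower-order terms (the $(d-2)$ versus $d$ discrepancy and the iterated-logarithm corrections) are genuinely dominated for ``large $n$'' with $d$ fixed, so that the resulting inequalities hold in the claimed direction.
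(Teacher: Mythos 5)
Your proposal follows essentially the same route as the paper's own proof: specialize Theorem~\ref{app_th:lower} to the whitened $\chi^2_d$ norms, invoke the Fisher--Tippett theorem with the Gamma distribution in the Gumbel max-domain of attraction, extract the normalizing constants $b_n \approx 2\log n + (d-2)\log\log n - 2\log\Gamma(d/2)$ (whence the constant $C$), bound $\E[M_n]$ via the Gumbel mean, and invert the Gumbel quantile for the high-probability statement. The paper likewise treats the asymptotic Gumbel law as exact for large $n$ (computing $\E[M_n]$ by integrating the approximated survival function, which is equivalent to your $b_n + a_n\gamma_{\mathrm{EM}}$ surrogate), so the uniform-integrability caveat you raise is a gap shared with, not resolved by, the original argument.
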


\begin{proof}
In order to apply Theorem \ref{app_th:lower}, we need to upper bound $\E \left[ \max_{i \in [n]} ||\bar X_{i}||^2 \right]$, where $\bar X_{i}$ is a $d$-dimensional gaussian random variable with identity covariance matrix.
In other words, we need to upper bound the expected maximum of $n$ chi-squared random variables with $d$ degrees of freedom.

\

Let us start by proving \eqref{eq_exp_gauss_lwbound}.
We can use \emph{extreme value theory} to find the limiting distribution of the maximum of $n$ random variables.
Firstly, note that the chi-squared distribution is a particular case of the Gamma distribution.
More specifically, $\chi^2_d \sim \Gamma(d/2, 2)$.
If we parameterize the $\Gamma$ distribution by $\alpha$ (shape) and $\beta$ (rate), then $\alpha = d/2$ and $\beta = 1/2$.

\

By the \emph{Fisher-Tippett Theorem} we know that there are only \emph{three} limiting distributions for $\lim_{n \to \infty}  X_{(n)} = \lim_{n \to \infty} \max_{i \le n} X_i$, where the $X_i$ are iid random variables, namely, Frechet, Weibull and Gumbel distributions.
It is known that the Gamma distribution is in the max-domain of attraction of the Gumbel distribution.
Further, the normalizing constants are known (see Chapter 3 of \cite{embrechts1997modelling}).
In particular, we know that if $X_{(n)} := \max_{i \in [n]} ||\bar X_{i}||^2$
\begin{equation}\label{eq:asymptotic_extreme_value}
\lim_{n \to \infty} \p \left( X_{(n)} \le 2x + 2 \ln n + 2 (d/2 - 1) \ln \ln n - 2 \ln \Gamma(d/2) \right) = \Lambda(x) = e^{-e^{-x}}.
\end{equation}
We can \emph{assume} that the asymptotic limit holds, as $n$ is in practice very large, and compute the mean value of $X_{(n)}$.
As $X_{(n)}$ is a positive random variable,
\begin{align}
\E[X_{(n)}] &= \int_{0}^{\infty} \p \left( X_{(n)} \ge t \right) \ dt \\
&= \int_{0}^{\infty} (1 - \p \left( X_{(n)} \le t \right) ) \ dt
\end{align}
We make the change of variables $t = 2x + C$, where $C = 2 \ln n + (d - 2) \ln \ln n - 2 \ln \Gamma(d/2)$.
Then,
\begin{align}
\E[X_{(n)}] &= \int_{0}^{\infty} \p \left( X_{(n)} \ge t \right) \ dt \\
&= \int_{-C/2}^{\infty} 2 (1 - \p \left( X_{(n)} \le 2x + C \right) ) \ dx \\
&\approx \int_{-C/2}^{\infty} 2 (1 - e^{-e^{-x}}) \ dx \\
&= \int_{-C/2}^{0} 2 (1 - e^{-e^{-x}}) \ dx + \int_{0}^{\infty} 2 (1 - e^{-e^{-x}}) \ dx \\
&\le \int_{-C/2}^{0} 2 \ dx + 2 \gamma = C + 2 \gamma,
\end{align}
where $\gamma$ is the Euler--Mascheroni constant. We conclude that
\begin{equation}
\E[X_{(n)}] \le C + 2 \gamma \le 2 \ln n + (d - 2) \ln \ln n.
\end{equation}
If we take the largest $k$ observations, and assume we could split the weight equally among all dimensions (which is desirable), we see that the best we can do in expectation is upper bounded by
\begin{equation}
\frac{k}{d} \ \E[X_{(n)}] \le k \left( \frac{2 \ln n}{d} + \ln \ln n \right).
\end{equation}

Now, let us prove \eqref{eq_hp_gauss_lwbound}.
The following inequalities simplify our task to finding a high-probability upper bound on $\max_{i \in [n]} \| \bar X_i \|^2$.
We have that
\begin{align}
\mathrm{Tr}(\Sigma (\X^T\X)^{-1}) = \mathrm{Tr}((\bar \X^T\bar \X)^{-1}) &= \sum_{i = 1}^d \frac{1}{\lambda_{i} (\bar \X^T\bar \X)} \nonumber \\
&\ge \sum_{i = 1}^d \frac{1}{\sum_{j = 1}^k \| \bar X_j \| ^2 / d} \nonumber \\
&\ge \frac{d^2}{\sum_{j = 1}^k \| \bar X_{(j)} \| ^2} \ge \frac{d^2}{k \max_{i \in [n]} \| \bar X_{i} \| ^2}.
\end{align}
Fix $\alpha \in [0,1]$.
We need to find a constant $Q$ such that with probability at least $1 - \alpha$, $Q \ge \max_{i \in [n]} \| \bar X_{i} \| ^2$, so that we conclude that $\mathrm{Tr}(\Sigma (\X^T\X)^{-1}) \ge d^2 / Qk$ with high probability.
By \eqref{eq:asymptotic_extreme_value} we know that
\begin{equation}
\lim_{n \to \infty} \p \left( X_{(n)} \le 2x + 2 \ln n + 2 (d/2 - 1) \ln \ln n - 2 \ln \Gamma(d/2) \right) = \Lambda(x) = e^{-e^{-x}}.
\end{equation}
For large $n$, we assume the previous upper bound for $X_{(n)}$ is exact.
We want to find $Q > 0$ such that $\p \left( X_{(n)} \le Q \right) = 1 - \alpha$.
Note that if $1 - \alpha = e^{-e^{-x}}$, then
\begin{equation}
x = - \log \log \frac{1}{1 - \alpha}.
\end{equation}
It follows that $Q = 2 \ln n + 2 (d/2 - 1) \ln \ln n - \log \log (1 - \alpha)^{-1} - 2 \ln \Gamma(d/2)$.
Finally, \eqref{eq_hp_gauss_lwbound} follows as
\begin{equation}
\frac{Q}{d} = \frac{2 \log n}{d} + \log \log n - \frac{\log \log (1 - \alpha)^{-1} + 2 \ln \Gamma(d/2)}{d}.
\end{equation}
\end{proof}

\section{Proof of Theorem 3.3}
\label{s:ridge}
Recall the Sparse Thresholding Algorithm below.
\begin{algorithm}[ht]
\begin{algorithmic}[1]
\STATE Set $S_1 = \emptyset, S_2 = \emptyset$. Let $k = k_1 + k_2, n = k_1 + n_2$.
\FOR{observation $1 \le i \le k_1$}
\STATE Observe $X^i$. Choose $X^i$: $S_1 = S_1 \cup X^i$.
\ENDFOR
\STATE Set $\gamma = 1/2, \lambda = \sqrt{4 \sigma^2 \log(d) / \gamma^2 k_1}$.
\STATE Compute Lasso estimate $\hat\beta_1$ based on $S_1$, with regularization $\lambda$.
\STATE Set weights: $\xi_i = 1$ if $i \in S(\hat\beta_1)$, $\xi_i = 0$ otherwise.
\STATE Set $\Gamma = C \sqrt{s + 2 \log(n_2 / k_2)}$. Factorize $\Sigma_{S(\hat\beta_1)S(\hat\beta_1)} = U D U^T$.
\FOR{observation $k_1 + 1 \le i \le n$}
\STATE Observe $X^i \in \R^d$. Restrict to $X^i_S := X^i_{S(\hat\beta_1)} \in \R^s$.
\STATE Compute $\overline{X^i}_S = {D}^{-1/2} U^T X^i_S$.
\IF{$\| \overline{X^i_S} \|_{\xi} > \Gamma$ or $k_2 - |S_2| = n - i + 1$}
\STATE Choose $X^i_S$: $S_2 = S_2 \cup X^i_S$.
\IF{$|S_2| = k_2$}
\STATE \textbf{break}.
\ENDIF
\ENDIF
\ENDFOR
\STATE Return OLS estimate $\hat\beta_2$ with observations in $S_2$.
\end{algorithmic}
\caption{Sparse Thresholding Algorithm.}
\label{app_alg:lasso_threshold}
\end{algorithm}
We show the following theorem.
\begin{theorem}
\label{app_th:lasso_algo}
Let $\mathbf{D} = \mathcal{N}(0, \Sigma)$.
Assume $\Sigma, \lambda$ and $\min_i |\beta_i|$ satisfy the standard conditions given in Theorem 3 of \cite{wainwright2009sharp}.
Assume we run the Sparse Thresholding algorithm with $k_1 = C^{\prime} s \log d$ observations to recover the support of $\beta$, for an appropriate $C^{\prime} \ge 0$.
Let $\X_2$ be $k_2 = k - k_1$ observations sampled via thresholding on $S(\hat\beta_1)$.
It follows that for $\alpha > 0$ such that $t = \alpha \sqrt{k_2} - C \sqrt{s} > 0$, there exist some universal constants $c_1, c_2$, and $c, C$ that depend on the subgaussian norm of $\bar{\mathbf{D}} \mid S(\hat{\beta}_1)$, such that with probability at least
$$1 - 2e^{- \min \left( c_2 \min(s, \log(d-s)) - \log(c_1), ct^2 - \log(2) \right)}$$
it holds that 
\begin{equation*}\label{th:lasso_algo_perf}
\mathrm{Tr}(\Sigma_{SS}(\X_2^T\X_2)^{-1}) \le \frac{s}{(1 - \alpha)^2 \left( 1 + \frac{2 \log\left({n_2}/{k_2}\right)}{s} \right) \ k_2}.
\end{equation*}
\end{theorem}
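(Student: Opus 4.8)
The plan is to decompose the analysis into two phases that mirror the two stages of the algorithm: a \emph{support-recovery} phase and a \emph{thresholding} phase in the recovered subspace. The key structural observation is that, conditional on the event that the Lasso recovers the true support exactly, the second stage is \emph{identical} to running Algorithm~\ref{alg:threshold} (in its Gaussian instantiation, Corollary~\ref{co:gaussian}) on an $s$-dimensional Gaussian with covariance $\Sigma_{SS}$. Thus the target bound is precisely the conclusion of Corollary~\ref{co:gaussian} under the substitutions $d \mapsto s$, $n \mapsto n_2$, $k \mapsto k_2$, and the extra probability factor will account only for the chance that support recovery fails.

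First I would invoke Theorem~3 of \cite{wainwright2009sharp}: under the stated assumptions on $\Sigma$, $\lambda$, and $\min_i |\beta_i|$, using $k_1 = C' s \log d$ i.i.d.\ Gaussian observations with the prescribed regularization, the Lasso recovers the exact support, $S(\hat\beta_1) = S$, with probability at least $1 - c_1 e^{-c_2 \min(s, \log(d-s))}$. Call this event $\mathcal{E}_1$. Second, I would exploit independence: because the algorithm uses the first $k_1$ observations exclusively for stage one and a \emph{disjoint} block of subsequent observations for stage two, and all observations are i.i.d., the stage-two data is independent of the stage-one data. Hence conditioning on $\mathcal{E}_1$, an event measurable with respect to stage-one data alone, leaves the stage-two observations i.i.d.\ $\mathcal{N}(0, \Sigma)$.

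On $\mathcal{E}_1$ the weights satisfy $\xi_j = 1$ for $j \in S$ and $0$ otherwise, and the threshold is frozen at $\Gamma = C\sqrt{s + 2\log(n_2/k_2)}$; restricting each observation to the coordinates in $S$ yields an $s$-dimensional Gaussian $\mathcal{N}(0, \Sigma_{SS})$, whose whitened components are independent. This is exactly the setting of Corollary~\ref{co:gaussian} in dimension $s$ with budget $k_2$ out of $n_2$. Applying it, for $\alpha > 0$ with $t = \alpha\sqrt{k_2} - C\sqrt{s} > 0$, I would obtain, conditional on $\mathcal{E}_1$ and with conditional probability at least $1 - 2e^{-ct^2}$,
\begin{equation*}
\mathrm{Tr}(\Sigma_{SS}(\X_2^T\X_2)^{-1}) \le \frac{s}{(1-\alpha)^2 \left(1 + \frac{2\log(n_2/k_2)}{s}\right) k_2},
\end{equation*}
where $c, C$ now depend on the subgaussian norm of the thresholded distribution $\bar{\mathbf{D}} \mid S$.

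Finally I would combine the two failure probabilities. Writing $m = \min(s, \log(d-s))$, the bound fails only if $\mathcal{E}_1$ fails or the eigenvalue estimate fails on $\mathcal{E}_1$, so the total failure probability is at most $c_1 e^{-c_2 m} + 2e^{-ct^2}$. Using $a + b \le 2\max(a,b) = 2e^{\max(\log a,\, \log b)}$ with $a = c_1 e^{-c_2 m}$ and $b = 2e^{-ct^2}$ gives the stated form $2e^{-\min(c_2 m - \log c_1,\, ct^2 - \log 2)}$, completing the argument. The main obstacle, and the step deserving the most care, is the independence/conditioning argument: one must verify that conditioning on the support-recovery event does not distort the distribution of the stage-two observations, so that Corollary~\ref{co:gaussian} may be applied as a black box with the data-dependent parameters $(\xi, \Gamma)$ frozen to their correct-support values. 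The regime hypotheses $n_2 > k_2 > s$ needed for the corollary follow from $t > 0$ together with the stated scaling of $k_1$, $k_2$, and $n_2$.
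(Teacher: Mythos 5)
Your proposal is correct and follows essentially the same route as the paper's own proof: invoke Theorem 3 of \cite{wainwright2009sharp} for exact support recovery in stage one, condition on that event (using independence of the two disjoint observation blocks) to apply Corollary \ref{co:gaussian} in dimension $s$ with budget $k_2$ out of $n_2$, and combine the two failure probabilities via the same $a+b \le 2\max(a,b)$ trick to obtain the stated exponent. Your explicit treatment of the conditioning step is in fact slightly more careful than the paper's one-line appeal to independence, but it is the same argument.
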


For support recovery, we use Theorem 3 from \cite{wainwright2009sharp}:
\begin{theorem}
Consider the linear model with random Gaussian design
\begin{equation}
Y = \X \beta^* + \epsilon, \qquad \text{ with } k \text{ i.i.d.\ rows } x_i \sim \mathcal{N}(0, \Sigma) \in \R^d,
\end{equation}
with noise $\epsilon \sim \mathcal{N}(0, \sigma^2 \ \mathrm{Id}_{k \times k})$.
Assume the covariance matrix $\Sigma$ satisfies
\begin{equation}\label{eq:mutual_inc}
\| \Sigma_{S^CS}(\Sigma_{SS})^{-1} \|_{\infty} \le (1-\gamma), \quad \text{ for some } \gamma \in (0,1],
\end{equation}
\begin{equation}\label{eq:min_eig}
\lambda_{\min}(\Sigma_{SS}) \ge C_{\min} > 0.
\end{equation}
Let $|S| = s$.
Consider the family of regularization parameters for $\phi_d \ge 2$
\begin{equation}\label{eq:regularization}
\lambda_k(\phi_d) = \sqrt{\frac{\phi_d \ \rho_u(\Sigma_{S^CS})}{\gamma^2} \frac{2 \sigma^2 \log(d)}{k}}.
\end{equation}
If for some fixed $\delta > 0$, the sequence $(k, d, s)$ and regularization sequence $\{ \lambda_k \}$ satisfy
\begin{equation}
\frac{k}{2s \log(d-s)} \ge (1+\delta) \ \theta_u(\Sigma) \left( 1 + \frac{\sigma^2 C_{\min}}{\lambda_k^2 s} \right),
\end{equation}
then the following holds with prob at least $1 - c_1 \exp(- c_2 \min \{ s, \log(d-s) \} )$:
\begin{enumerate}
\item The Lasso has a unique solution $\hat\beta$ with support in $S$ (i.e.\, $S(\hat\beta) \subset S(\beta^*)$).
\item Define the gap
\begin{equation}\label{eq:min_beta}
g(\lambda_k) := c_3 \lambda_k \| \Sigma_{SS}^{-1/2} \|_\infty^2 + 20 \sqrt{\frac{\sigma^2 \log s}{C_{\min} \ k}}.
\end{equation}
Then, if $\beta_{\min} := \min_{i \in S} |\beta_i^*| > g(\lambda_k)$, the signed support $S_{\pm}(\hat\beta)$ is identical to $S_{\pm}(\beta^*)$, and moreover $\| \hat\beta_S - \beta^*_S \|_{\infty} \le g(\lambda_k)$.
\end{enumerate}
\end{theorem}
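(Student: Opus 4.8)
The plan is to prove this support-recovery guarantee by the \emph{primal-dual witness} (PDW) construction, the canonical technique for $\ell_1$-regularized $M$-estimators. The key idea is to build, by hand, a candidate primal-dual pair $(\hat\beta, \hat z)$ that is forced to be supported on the true set $S$; if this pair can be shown to satisfy the Lasso Karush--Kuhn--Tucker (KKT) conditions together with a strict-feasibility margin, then standard convex-duality arguments certify simultaneously that (i) the pair is the \emph{unique} Lasso optimum and (ii) its support is contained in $S$, giving the first conclusion of the theorem.

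First I would write the subgradient optimality conditions for the objective $\frac{1}{2k}\| \Y - \X\beta\|_2^2 + \lambda \|\beta\|_1$: a vector $\hat\beta$ is optimal iff $\frac{1}{k}\X^T(\X\hat\beta - \Y) + \lambda \hat z = 0$ for some $\hat z \in \partial\|\hat\beta\|_1$, i.e.\ $\hat z_j = \mathrm{sign}(\hat\beta_j)$ wherever $\hat\beta_j \ne 0$ and $|\hat z_j| \le 1$ elsewhere. The construction then proceeds by fiat: set $\hat\beta_{S^C} = 0$, solve the \emph{restricted} (oracle) Lasso on $S$ to obtain $\hat\beta_S$ and $\hat z_S = \mathrm{sign}(\hat\beta_S)$, and use the remaining stationarity equations on $S^C$ to \emph{define} $\hat z_{S^C}$. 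The minimum-eigenvalue condition \eqref{eq:min_eig} guarantees that $\frac1k\X_S^T\X_S$ is invertible with high probability, so the oracle subproblem is well posed.

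The crux of the argument, and what I expect to be the main obstacle, is verifying \emph{strict dual feasibility}, namely $\|\hat z_{S^C}\|_\infty < 1$. Substituting $\Y = \X_S\beta^*_S + \epsilon$ and eliminating $\hat\beta_S$ gives an explicit expression for $\hat z_{S^C}$ as a deterministic term $\frac1k\X_{S^C}^T\X_S(\frac1k\X_S^T\X_S)^{-1}\mathrm{sign}(\beta^*_S)$ plus a stochastic term linear in the noise $\epsilon$. Passing the empirical Gram matrices to their population counterparts, the deterministic part is controlled by $\|\Sigma_{S^CS}(\Sigma_{SS})^{-1}\|_\infty \le 1-\gamma$ via the mutual-incoherence condition \eqref{eq:mutual_inc}; it then remains to show the stochastic part is at most $\gamma/2$ with high probability. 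This is where the Gaussian design and noise enter: conditioning on $\X_S$, I would use Gaussian concentration for the noise-driven term and sub-Gaussian/Wishart concentration to bound the deviation of $\frac1k\X_{S^C}^T\X_S$ and $\frac1k\X_S^T\X_S$ from their population versions, which is precisely where $\rho_u(\Sigma_{S^CS})$, $\theta_u(\Sigma)$, and the sample-size lower bound of order $s\log(d-s)$ are consumed. A union bound over the $d-s$ off-support coordinates yields the $c_1\exp(-c_2\min\{s,\log(d-s)\})$ failure probability.

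For the second conclusion (signed-support recovery and the $\ell_\infty$ bound), I would analyze the oracle error directly: the restricted stationarity condition gives $\hat\beta_S - \beta^*_S = (\frac1k\X_S^T\X_S)^{-1}(\frac1k\X_S^T\epsilon - \lambda\,\mathrm{sign}(\hat\beta_S))$, and bounding this in $\ell_\infty$ — splitting into a noise term controlled by Gaussian tail bounds (contributing the $\sqrt{\sigma^2\log s/(C_{\min}k)}$ piece) and a bias term of order $\lambda\|\Sigma_{SS}^{-1/2}\|_\infty^2$ — produces exactly the gap $g(\lambda_k)$ of \eqref{eq:min_beta}. Provided $\beta_{\min} > g(\lambda_k)$, every coordinate of $\hat\beta_S$ stays on the correct side of zero, so $\mathrm{sign}(\hat\beta_S) = \mathrm{sign}(\beta^*_S)$; this both legitimizes the sign assumed in the construction and delivers signed-support recovery. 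Combining strict dual feasibility with the invertibility from \eqref{eq:min_eig} shows the constructed pair is the unique Lasso optimum, completing the proof.
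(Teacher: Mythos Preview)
Your outline is a faithful sketch of the primal--dual witness argument, and it is essentially how this result is actually established in the literature. However, the paper you are working with does \emph{not} prove this statement at all: the theorem is quoted verbatim as ``Theorem 3 from \cite{wainwright2009sharp}'' and used as a black box inside the proof of the Sparse Thresholding guarantee (Theorem 3.3). The paper's own contribution is to invoke this result to certify that the first stage of Algorithm~\ref{alg:threshold_lasso} recovers the support with high probability, and then to apply Corollary~\ref{app_co:gaussian} on the recovered $s$-dimensional subspace.

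So there is no gap in your reasoning, but there is a mismatch of scope: you have proposed to reprove Wainwright's theorem, whereas the paper simply cites it. If your intent was to supply the missing proof for completeness, your plan is the right one (PDW construction, strict dual feasibility via mutual incoherence plus Gaussian concentration, then the $\ell_\infty$ bound on the oracle error). If your intent was to match the paper, the correct ``proof'' is a one-line citation.
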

The required definitions to apply the previous theorem are
\begin{align}
\rho_l(\Sigma) &= \frac{1}{2} \min_{i \neq j} \left( \Sigma_{ii} + \Sigma_{jj} - 2 \Sigma_{ij} \right), &\quad \rho_u(\Sigma) = \max_{i} \Sigma_{ii}, \\
\theta_l(\Sigma) &= \frac{\rho_l(\Sigma_{S^C|S})}{C_{\max} (2 - \gamma(\Sigma))^2}, &\quad
\theta_l(\Sigma) = \frac{\rho_u(\Sigma_{S^C|S})}{C_{\min} \gamma^2(\Sigma)}.
\end{align}

\
\begin{proof}{\emph{(Theorem H1)}}

Let $X \sim \mathcal{N}(0, \Sigma)$ with $\Sigma$ satisfying \eqref{eq:mutual_inc} and \eqref{eq:min_eig}.
Let $\lambda_k(\phi_d)$ be like in \eqref{eq:regularization}, for some $\phi_d > 2$.
Assume we choose the number of observations $k_1$ in the first stage to be at least
\begin{align}
k_1 &\ge 2 (1+\delta) \ \theta_u(\Sigma) \left( 1 + \frac{\sigma^2 C_{\min}}{\lambda_k^2 s} \right) s \log(d-s) \\
&= C(\Sigma, d, s) \ s \log(d-s),
\end{align}
and that $\beta_{\min}$ is greater than \eqref{eq:min_beta}.
Then, with probability at least
$$1 - c_1 \exp(- c_2 \min \{ s, \log(d-s) \} ),$$ we recover the right support $S(\beta^*) = S(\hat\beta)$ in the first stage of the algorithm.

\

Conditional on this event, we apply our algorithm on the remaining observations.
In the second stage, we only look at those dimensions in $S(\hat\beta)$, by setting weights $\xi_{S(\hat\beta)} = 1$, and zero otherwise.
Finally, we run OLS along the dimensions in the recovered support, and using the observations collected during the second stage.
Importantly, note that the new observations are $\mathcal{N}(0, \Sigma_{SS})$.

\

We can now apply our original results.
Denote by $\X_2 \in \R^{k_2 \times s}$ the set of observations collected in the second stage of the algorithm.

In particular, by Corollary \ref{app_co:gaussian}, we conclude that for $\alpha > 0$ such that $t = \alpha \sqrt{k_2} - C \sqrt{s} > 0$, the following holds with probability at least $1 - 2 \exp(-ct^2)$
\begin{equation}\label{eq_final_sparse_guarantee}
\mathrm{Tr}(\Sigma_{SS}(\X_2^T\X_2)^{-1}) \le \frac{s}{(1 - \alpha)^2 \left( 1 + \frac{2 \log\left({n_2}/{k_2}\right)}{s} \right) \ k_2}.
\end{equation}

Under the event that the recovery is correct, the contribution to the MSE of the components of $\beta$ that are not in its support is zero.
In other words,
\begin{align}
\| \beta - \hat\beta_2 \|_{\Sigma}^2 &= (\beta - \hat\beta_2)^T \Sigma (\beta - \hat\beta_2) \\
&= (\beta_S - \hat\beta_{2S})^T \Sigma_{SS} (\beta_S - \hat\beta_{2S}) = \| \beta_S - \hat\beta_{2S} \|_{\Sigma_{SS}}^2.
\end{align}

As the events that the first and second stages succeed are independent, we conclude \eqref{eq_final_sparse_guarantee} holds with probability at least
\begin{align}
1 - c_1 e^{- c_2 \min \{ s, \log(d-s) \}} &- 2 e^{-c t^2} \ge \\
&1 - 2 e^{- \min \left( c_2 \min(s, \log(d-s)) - \log(c_1), ct^2 - \log(2) \right)}.
\end{align}
\end{proof}

\section{Proof of CLT Lower Bound}
\label{s:lw_clt}
\begin{corollary}
Assume the norm of white observations is distributed according to $Z_\xi = \mathcal{N}\left(d, \gamma^2 \right)$.
Then, we have that for any algorithm $\mathbf{A}$
\begin{equation}
\E_{\mathbf{A}} \  \mathrm{Tr}(\Sigma(\X^T\X)^{-1}) \ge \frac{d}{\left( 1 + \frac{\gamma}{d} \ \sqrt{2 \log n} \right) k }.
\end{equation}
\end{corollary}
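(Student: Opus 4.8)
The plan is to derive this corollary directly from the second inequality of Theorem \ref{app_th:lower}, specializing the expected-maximum term to the Gaussian approximation. Theorem \ref{app_th:lower} already supplies, for \emph{any} algorithm $\mathbf{A}$,
\begin{equation*}
\E_{\mathbf{A}} \mathrm{Tr}(\Sigma(\X^T\X)^{-1}) \ge \frac{d}{k \ \E\left[\frac{1}{d}\max_{i\in[n]}\|\bar{X}_i\|^2\right]},
\end{equation*}
so the entire task reduces to upper bounding $\E[\max_{i\in[n]}\|\bar{X}_i\|^2]$. Under the stated CLT approximation, the relevant squared norms $\|\bar{X}_i\|^2$ are i.i.d.\ copies of $Z_\xi \sim \mathcal{N}(d,\gamma^2)$, so I must control the expected maximum of $n$ i.i.d.\ Gaussians.

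First I would reduce to the standard normal: writing $Z_\xi^{(i)} = d + \gamma G_i$ with $G_i \sim \mathcal{N}(0,1)$ i.i.d., linearity gives $\E[\max_i Z_\xi^{(i)}] = d + \gamma\,\E[\max_i G_i]$. The key lemma is then the classical sub-Gaussian maximal inequality $\E[\max_{i\in[n]}G_i]\le\sqrt{2\log n}$. I would prove it with the Chernoff/MGF trick: for any $s>0$, Jensen's inequality gives $\exp(s\,\E[\max_i G_i])\le\E[\exp(s\max_i G_i)]\le\sum_{i=1}^n\E[e^{sG_i}]=n\,e^{s^2/2}$; taking logarithms yields $\E[\max_i G_i]\le (\log n)/s + s/2$, and optimizing at $s=\sqrt{2\log n}$ gives the bound. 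Consequently $\E[\max_i Z_\xi^{(i)}]\le d+\gamma\sqrt{2\log n}$.

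Substituting this into the denominator,
\begin{equation*}
\E\left[\frac{1}{d}\max_{i\in[n]}\|\bar{X}_i\|^2\right]\le\frac{1}{d}\left(d+\gamma\sqrt{2\log n}\right)=1+\frac{\gamma}{d}\sqrt{2\log n},
\end{equation*}
which, plugged back into Theorem \ref{app_th:lower}, yields exactly the claimed inequality.

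The computation is almost entirely routine; the only genuine subtlety---the main obstacle---is that the statement $Z_\xi\approx\mathcal{N}(d,\gamma^2)$ is an approximation rather than an identity, so strictly speaking the maximal inequality is being applied to variables that are only approximately Gaussian. Making this fully rigorous would require either invoking a quantitative (Berry--Esseen-type) CLT to control the error in the upper-tail region that drives the maximum, or observing that each $\|\bar{X}_i\|^2$ is genuinely sub-exponential (a sum of squares of sub-Gaussians) and applying a sub-exponential maximal inequality, which reproduces the same leading-order term $d+\gamma\sqrt{2\log n}$. I would favor the latter route, treating the Gaussian form as the intended large-$d$ asymptotic and noting the correction is of lower order, consistent with how the companion upper bound in Corollary \ref{app_co:clt} is phrased.
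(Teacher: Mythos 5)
Your proposal is correct and follows essentially the same route as the paper: both invoke the second inequality of Theorem \ref{app_th:lower} and then bound $\E\left[\max_{i\in[n]}\|\bar{X}_i\|^2\right]$ by $d+\gamma\sqrt{2\log n}$ after centering and scaling to standard normals. The only differences are that you spell out the standard sub-Gaussian maximal inequality (which the paper uses without proof) and flag the approximation issue in treating $Z_\xi$ as exactly Gaussian, which the paper, like your preferred resolution, simply takes as the working assumption.
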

\begin{proof}
By Theorem \ref{app_th:lower}, we need to compute $\E \left[ \max_{i \in [n]} ||X_{i}||^2 \right]$.

By assumption $\| X_{i} \|^2 \sim \mathcal{N}\left(d, \gamma^2 \right)$ for each $i$, which implies
\begin{align}
\E \left[ \max_{i \in [n]} ||X_{i}||^2 \right] &= \E \left[ d + \max_{i \in [n]} {\gamma} \ \frac{ ||X_{i}||^2 - d }{\gamma} \right] \\
&\le d + \gamma \ \E \left[ \max_{i \in [n]} \mathcal{N}(0, 1) \right] \\
&\le d + \gamma \sqrt{2 \log n},
\end{align}
and the result follows.
\end{proof}

\section{Ridge Regression}
\label{s:ridge}
Regularized linear estimators also benefit from large and balanced observations.
We show that, under mild assumptions, the performance of the ridge regression is directly aligned with that of previous sections.

\

The ridge estimator is $\hat\beta_\lambda = \left( \X^T \X + \lambda I \right)^{-1} \ \X^T \y$, given $(\X, \y)$ and $\lambda > 0$.
The following result shows how large values of $\lambda_{\min}(\X^T\X)$ help to control the MSE of $\hat\beta_\lambda$.
As the optimal penalty parameter $\lambda^*$ is unknown until the end of the data collection process, we assume it is \emph{uniformly} random in a small interval.
\begin{theorem}
Let $R > 0$. Assume the penalty parameter for ridge regression is chosen uniformly at random $\lambda^* \sim U[0, R]$.
Then, the MSE of $\hat\beta_{\lambda^*}$ is upper bounded by
\begin{equation}
\E_{\lambda^*, \X} \ \| \hat\beta_{\lambda^*} - \beta^* \|^2 \le \E_{\X} \ f\left( \lambda_{\min}(\X^T\X) \right),
\end{equation}
where $f$ is the following decreasing function of $\lambda_{\min}$:
\begin{equation}
f(\lambda_{\min}) = \frac{\sigma^2 \ d}{\lambda_{\min} + R} + \| \beta^* \|^2_2 \left( 1 - \frac{2 \lambda_{\min}}{R} \log\left( 1 + \frac{R}{\lambda_{\min}} \right) + \frac{\lambda_{\min}}{\lambda_{\min} + R} \right).
\end{equation}
\end{theorem}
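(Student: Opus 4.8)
The plan is to fix the design $\X$ and penalty $\lambda$ first, perform the standard bias--variance decomposition of the ridge MSE over the noise $\epsilon$, then average over $\lambda^* \sim U[0,R]$ in closed form, and finally bound the resulting expression pointwise in $\X$ by $f(\lambda_{\min}(\X^T\X))$ before taking $\E_\X$. Writing $A = \X^T\X$ and $\y = \X\beta^* + \epsilon$, the identity $(A+\lambda I)^{-1}A - I = -\lambda(A+\lambda I)^{-1}$ identifies the bias as $-\lambda(A+\lambda I)^{-1}\beta^*$, so that
\begin{equation}
\E_\epsilon \| \hat\beta_\lambda - \beta^* \|^2 = \lambda^2\, {\beta^*}^T (A+\lambda I)^{-2} \beta^* + \sigma^2\, \Tr\big( (A+\lambda I)^{-1} A (A+\lambda I)^{-1} \big).
\end{equation}
Diagonalizing $A = \sum_i \mu_i v_i v_i^T$ with $\mu_i = \lambda_i(\X^T\X)$ turns this into $\sigma^2 \sum_i \mu_i/(\mu_i+\lambda)^2 + \lambda^2 \sum_i ({\beta^*}^T v_i)^2/(\mu_i+\lambda)^2$, a sum of decoupled per-eigenvalue contributions.

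Next I would integrate each contribution against the uniform density $1/R$ on $[0,R]$. Both integrals are elementary: the variance kernel yields $\tfrac{1}{R}\int_0^R \mu_i/(\mu_i+\lambda)^2\, d\lambda = 1/(\mu_i+R)$, while the substitution $u = \mu_i + \lambda$ gives
\begin{equation}
g(\mu_i) := \frac{1}{R}\int_0^R \frac{\lambda^2}{(\mu_i+\lambda)^2}\, d\lambda = 1 - \frac{2\mu_i}{R}\log\!\Big(1 + \frac{R}{\mu_i}\Big) + \frac{\mu_i}{\mu_i+R}.
\end{equation}
Hence $\E_{\lambda^*}\E_\epsilon \|\hat\beta_{\lambda^*}-\beta^*\|^2 = \sigma^2\sum_i 1/(\mu_i+R) + \sum_i ({\beta^*}^T v_i)^2\, g(\mu_i)$, which is already in the shape of $f$, with $g$ supplying exactly the bias bracket appearing in the statement.

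To pass from the eigenvalue sums to $f(\lambda_{\min})$, I would use $\mu_i \ge \lambda_{\min}$ for every $i$. For the variance term this immediately gives $\sigma^2\sum_i 1/(\mu_i+R) \le \sigma^2 d/(\lambda_{\min}+R)$, matching the first term of $f$. For the bias term I would invoke $\sum_i ({\beta^*}^T v_i)^2 = \|\beta^*\|_2^2$ (the $v_i$ are orthonormal) together with the monotonicity of $g$, so that $g(\mu_i)\le g(\lambda_{\min})$ and the whole bias contribution is at most $\|\beta^*\|_2^2\, g(\lambda_{\min})$. Taking $\E_\X$ then yields the claimed bound, and the same monotonicity shows $f$ is a decreasing function of $\lambda_{\min}$.

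The main obstacle is proving that $g$ is decreasing in $\mu$. Substituting $x = R/\mu$ reduces this to showing $h(x) := 1 - 2\log(1+x)/x + 1/(1+x)$ is increasing on $(0,\infty)$, and a short computation reduces $h'(x)\ge 0$ to the scalar inequality $\log(1+x) \ge x(3x+2)/[2(1+x)^2]$. I expect to close this by setting $q(x) = \log(1+x) - x(3x+2)/[2(1+x)^2]$, noting $q(0)=0$, and differentiating to find $q'(x) = x^2/(1+x)^3 > 0$, so $q \ge 0$ throughout. This monotonicity is precisely what allows the per-eigenvalue bias weights to be replaced by their worst case at $\lambda_{\min}$, completing both the inequality and the assertion that $f$ is decreasing.
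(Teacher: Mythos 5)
Your proposal is correct and follows essentially the same route as the paper's proof: bias--variance decomposition of the ridge MSE, spectral decomposition of $\X^T\X$, closed-form integration of both terms against the uniform prior on $[0,R]$, and a worst-case bound at $\lambda_{\min}$. The only deviation is in the bias term, where the paper bounds $\lambda^2/(\lambda_j+\lambda)^2 \le \lambda^2/(\lambda_{\min}+\lambda)^2$ pointwise in $\lambda$ \emph{before} integrating (so no monotonicity lemma is ever needed), whereas you integrate first and then prove monotonicity of $g$ by calculus --- your lemma and its verification ($q'(x) = x^2/(1+x)^3$) are correct, but you could have avoided it entirely by noting that $g(\mu) = \E_{\lambda}[\lambda^2/(\mu+\lambda)^2]$ is an average of functions decreasing in $\mu$ and hence decreasing.
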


\begin{proof}
The SVD decomposition of $\X = U S V^T$ implies that
$\X^T \X = V S U^T U S V^T = V S^2 V^T$, where $U$ and $V$ are orthogonal matrices.

We define $W = \left( \X^T \X + \lambda I \right)^{-1}$, and see that
\begin{align*}
W = \left( V (S^2 +  \lambda I ) V^T \right)^{-1} = V \ \mathrm{Diag} \left( \frac{1}{s_{jj}^2 + \lambda} \right)_{j=1}^d V^T.
\end{align*}
In this case, the MSE of $\hat\beta_\lambda$ has two sources: squared bias and the trace of the covariance matrix.
The covariance matrix of $\hat\beta_\lambda$ is
$\text{Cov}(\hat\beta_\lambda) = \sigma^2 \ W \X^T \X W,$
while its bias is given by $- \lambda W \beta^*$ (see \cite{hoerl1970ridge}).
Thus, 
\begin{equation}
\text{Cov}(\hat\beta_\lambda) = \sigma^2 \ V \ \mathrm{Diag} \ \left( \frac{{s_{jj}^2}}{{(s_{jj}^2 + \lambda)^2}} \right)_{j=1}^d V^T.
\end{equation}
Note that $s_{jj}^2 = \lambda_j$, where $s_{jj}$'s are the singular values of $\X$, and $\lambda_j$'s the eigenvalues of $\X^T\X$.
As $V$ is orthogonal, $\text{Tr} \left[ \text{Cov}(\hat\beta_\lambda) \right] = \sigma^2 \ \sum_{j=1}^d {\lambda_j}/{(\lambda_j + \lambda)^2}.$

\

Unfortunately, in practice, the value of $\lambda$ is unknown before collecting the data.
A common technique consists in using an additional \emph{validation} set to choose the optimal regularization parameter $\lambda^*$.
Generally, in supervised learning, the validation set comes from the same distribution as the test set, while in active learning it does not.
As in the unregularized case, we want to \emph{train} on unlikely data, but we want to \emph{test} on likely data.
We achieve robustness against this fact as follows.
We fix some fairly large $R > 0$ such that we assume $\lambda^* \in (0, R)$.
We treat $\lambda^*$ as a random variable, and we impose a \emph{uniform} prior $D_\lambda$ over $(0, R)$.

Then, we see that
\begin{align}
\E_{\lambda^* \sim D_\lambda} \left[ \text{Tr} \left[ \text{Cov}(\hat\beta_{\lambda^*}) \right]  \right] 
&= \sigma^2 \ \sum_{j=1}^d \lambda_j \int_0^R  \frac{1}{(\lambda_j + \lambda)^2} \frac{1}{R} \ d\lambda \nonumber \\
&= \sigma^2 \ \sum_{j=1}^d \frac{1}{\lambda_j + R} \le \frac{\sigma^2 \ d}{\lambda_{\min} + R}.
\end{align}
The squared bias can be upper bounded by
\begin{align}\label{app_ridge:squared_bias}
\lambda^2 \ {\beta^*}^T W^T W \beta^* &= {\beta^*}^T V \ \mathrm{Diag} \left[ \frac{\lambda^2}{(\lambda_j + \lambda)^2} \right]_j V^T \beta^* \nonumber \\
&\le \| \beta^* \|^2_2 \ \max_i \left( \frac{\lambda}{\lambda_j + \lambda} \right)^2 \nonumber \\
&= \| \beta^* \|^2_2 \ \left( \frac{\lambda}{\lambda_{\min} + \lambda} \right)^2.
\end{align}
for every $\lambda > 0$, as $\lambda_j \ge 0$ for all $j$.
Taking expectations on both sides of \eqref{app_ridge:squared_bias} with respect to $\lambda^* \sim D_\lambda$, and after some algebra
\begin{align}
\frac{\E_{D_\lambda} \mathrm{Bias}^2 (\hat\beta_{\lambda^*})}{\| \beta^* \|^2_2} \le 1 - \frac{2 \lambda_{\min}}{R} \log\left( 1 + \frac{R}{\lambda_{\min}} \right) + \frac{\lambda_{\min}}{\lambda_{\min} + R},
\end{align}
where the RHS is a decreasing function of $\lambda_{\min}$ that tends to zero as $\lambda_{\min}$ grows.
\end{proof}
It follows that $\E \ \| \hat\beta_{\lambda^*} - \beta^* \|^2$ can be controlled by minimizing $\lambda_{\min}(\X^T\X)$, and we can focus on minimizing $\lambda_{\min}(\bar\X^T\bar\X)$ by the equivalence shown in the \emph{Problem Definition} section of the main paper.

\section{Simulations}
\label{s:sims}
We conducted several experiments in various settings.
We present here some experiments that complement those showed in the main paper.
In particular, we show experiments for linear models, synthetic linear models, synthetic non-linear data, and additional regularized and real-world datasets.

\subsection{Linear Models}
We first empirically show the results proved in Theorem \ref{app_th:main}.
For a sequence of values of $n$, we choose $k = \sqrt{n}$ observations in $\R^d$, with fixed $d = 10$.
The observations are generated according to $\mathcal{N}(0, \mathrm{Id})$, and $y$ follows a linear model with $\beta_i \sim U(-5, 5)$.
For each tuple $(n, k)$ we repeat the experiment 200 times, and compute the squared error ($\beta^*$ is known).
The results in Figure \ref{linear_plots} (a) show the {average} MSE of Algorithm \ref{alg:threshold} significantly outperforms that of random sampling.
We also see a strong \emph{variance reduction}.
Figure \ref{linear_plots} (b) restricts the comparison to fixed and adaptive threshold algorithms; while the latter outperforms the former, the difference is small.
In Figure \ref{linear_plots} (c) we keep $n$ and $d$ fixed, and vary $k$.
Finally, in Figure \ref{nw_linear_plots} (a) we show the case where $\Sigma \neq \mathrm{Id}$.

\begin{figure*}[htp]
  \centering
  \subfigure[With $k = \sqrt{n}$, $d = 10$.]{\includegraphics[width=0.45 \columnwidth]{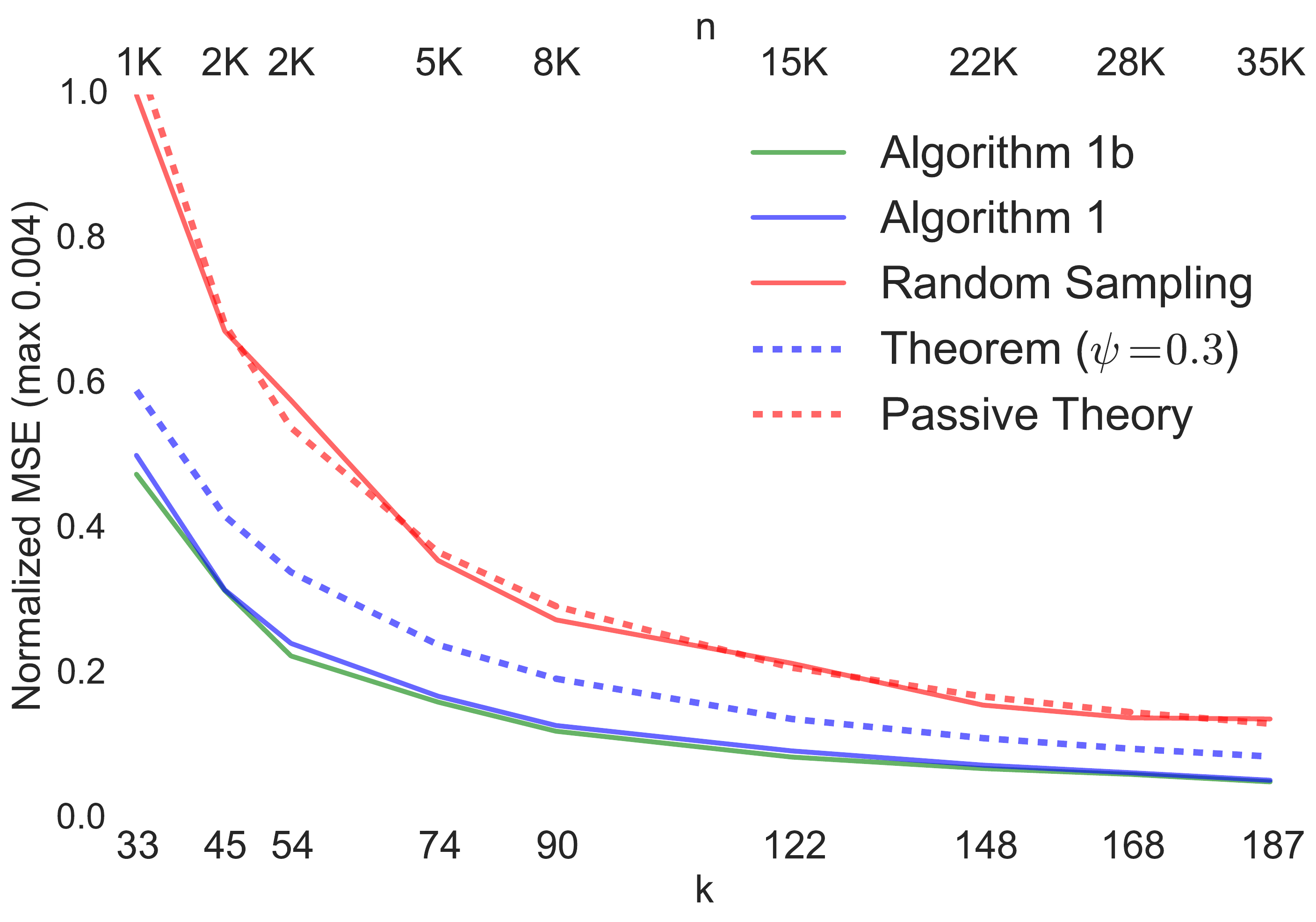}}
  \quad
      \subfigure[With $k = \sqrt{n}$, $d = 10$.]{\includegraphics[width=0.45 \columnwidth]{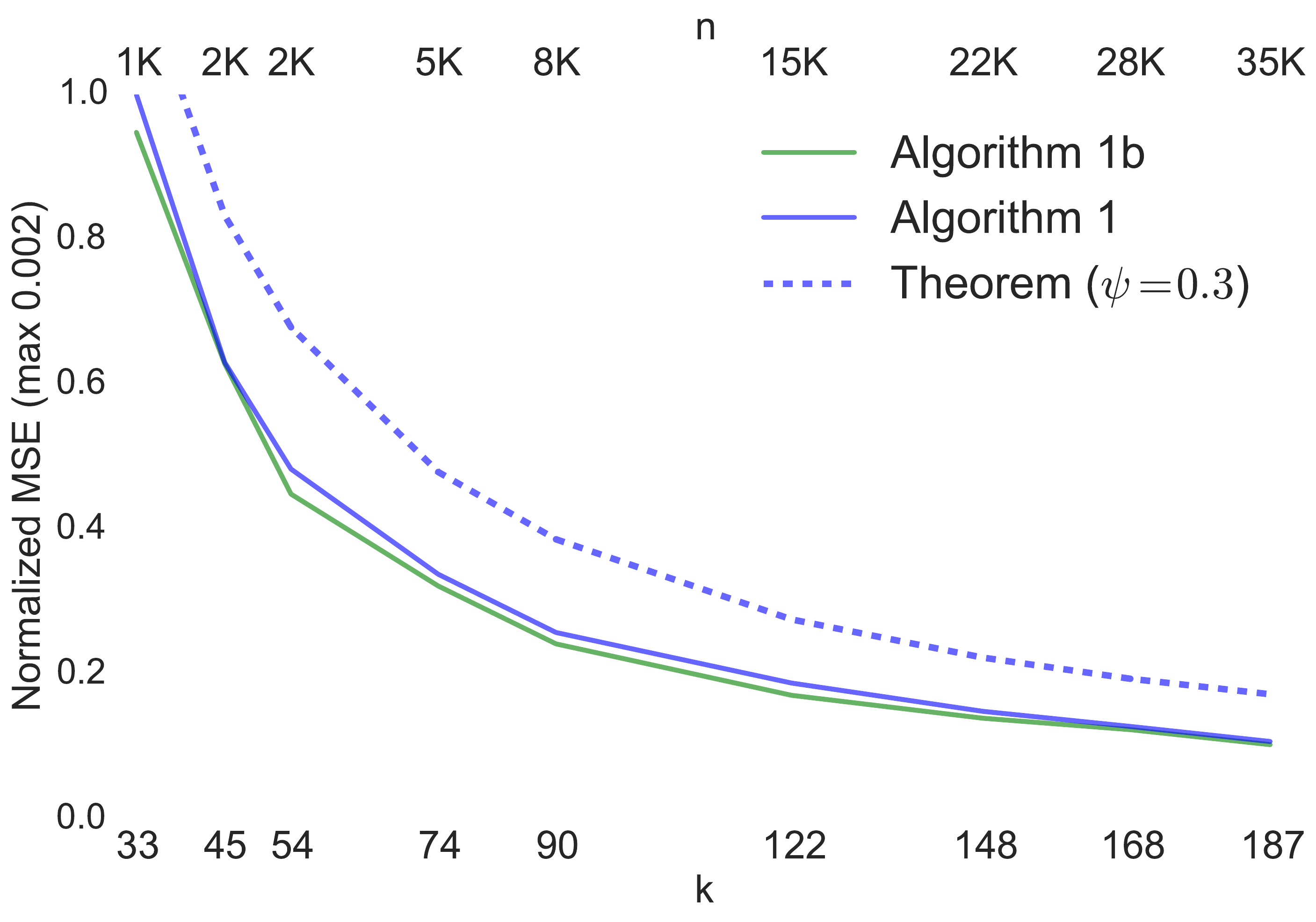}}
  \subfigure[With $n = 3000$, $d = 10$.]{\includegraphics[width=0.45 \columnwidth]{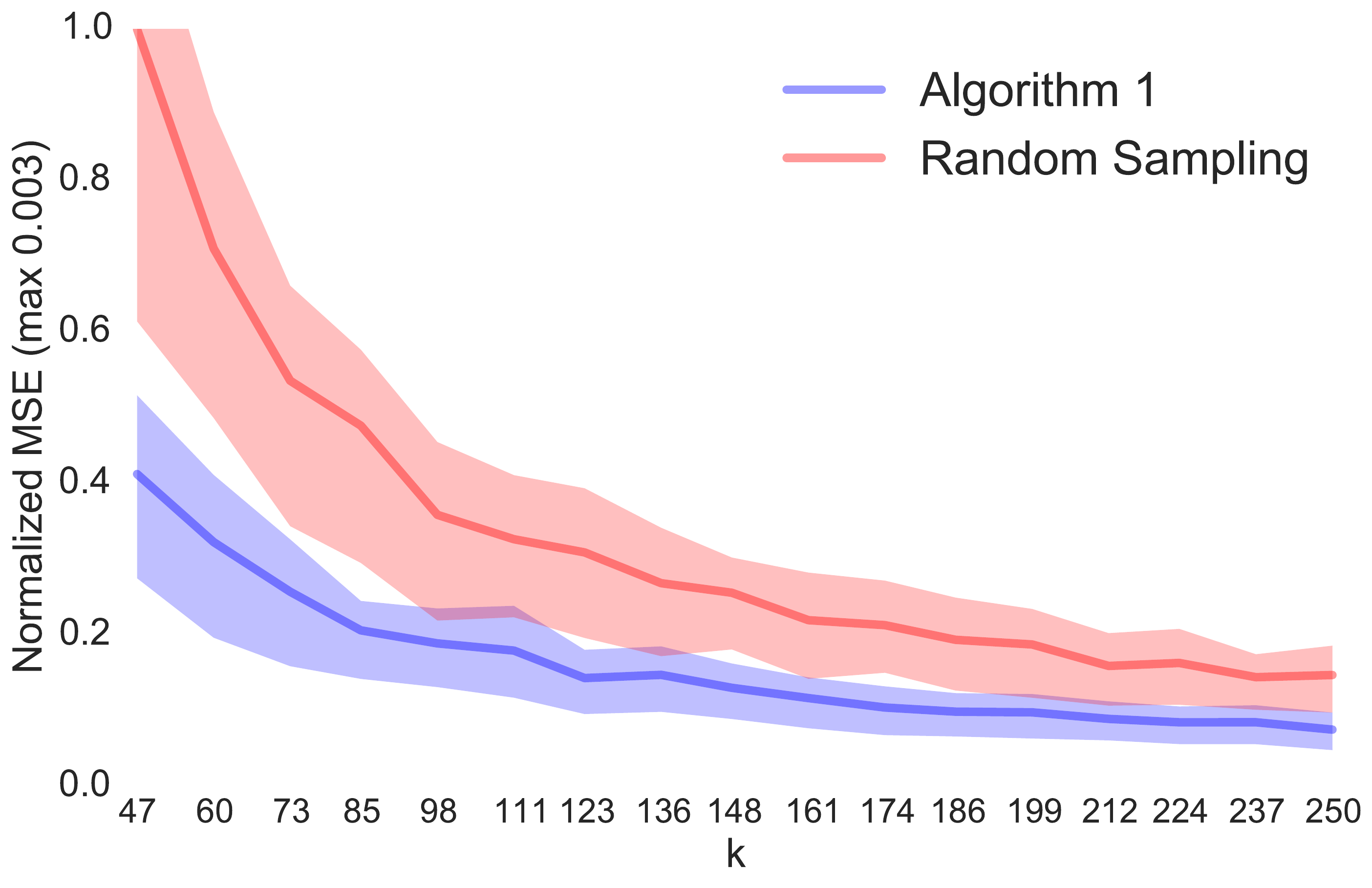}}
 \caption{$\mathrm{MSE}$ of $\hat\beta_{OLS}$; white Gaussian obs, $(0.25, 0.75)$ quantile confidence intervals displayed in (a), (c).}
 \label{linear_plots}
\end{figure*}

For completeness, we repeated the simulation with observations generated according to a joint Gaussian distribution with a random covariance matrix that had $\mathrm{Tr}(\Sigma) = 21.59$, $\lambda_{\min} = 0.65$, and $\lambda_{\max} = 3.97$.
Figure \ref{nw_linear_plots} (a) shows that thresholding algorithms outperform random sampling in a similar way as in the white case presented in the paper.
Also, Figure \ref{nw_linear_plots} (b) shows how the adaptive threshold slightly beats the fixed one.

\

Finally, in Figure \ref{nw_linear_plots} (c), we show the results of simulations when observations are sampled from Laplace correlated marginals (through a Gaussian Copula).
We compare random sampling to two versions of the thresholding algorithm.
The most simple one, denoted by Unif-Weig Algorithm, assigns uniform weights (i.e., $\xi_i = 1$ for all $i$).
On the other hand, denote by Opt-Weig Algorithm the algorithm that uses the optimal weights (previously pre-computed, in this case $\max_i \xi_i / \min_i \xi_i \approx 7$, independent variables tend to require higher weights).
As one would expect, the latter does better than the former.
However, it is remarkable that the difference between random and thresholding is way more substantial than the difference between optimal and approximate thresholding, an observation that can be very useful in practice.
\begin{figure*}[htp]
  \centering
  \subfigure[With $k = \sqrt{n}$, $d = 10$.]{\includegraphics[width=0.45 \columnwidth]{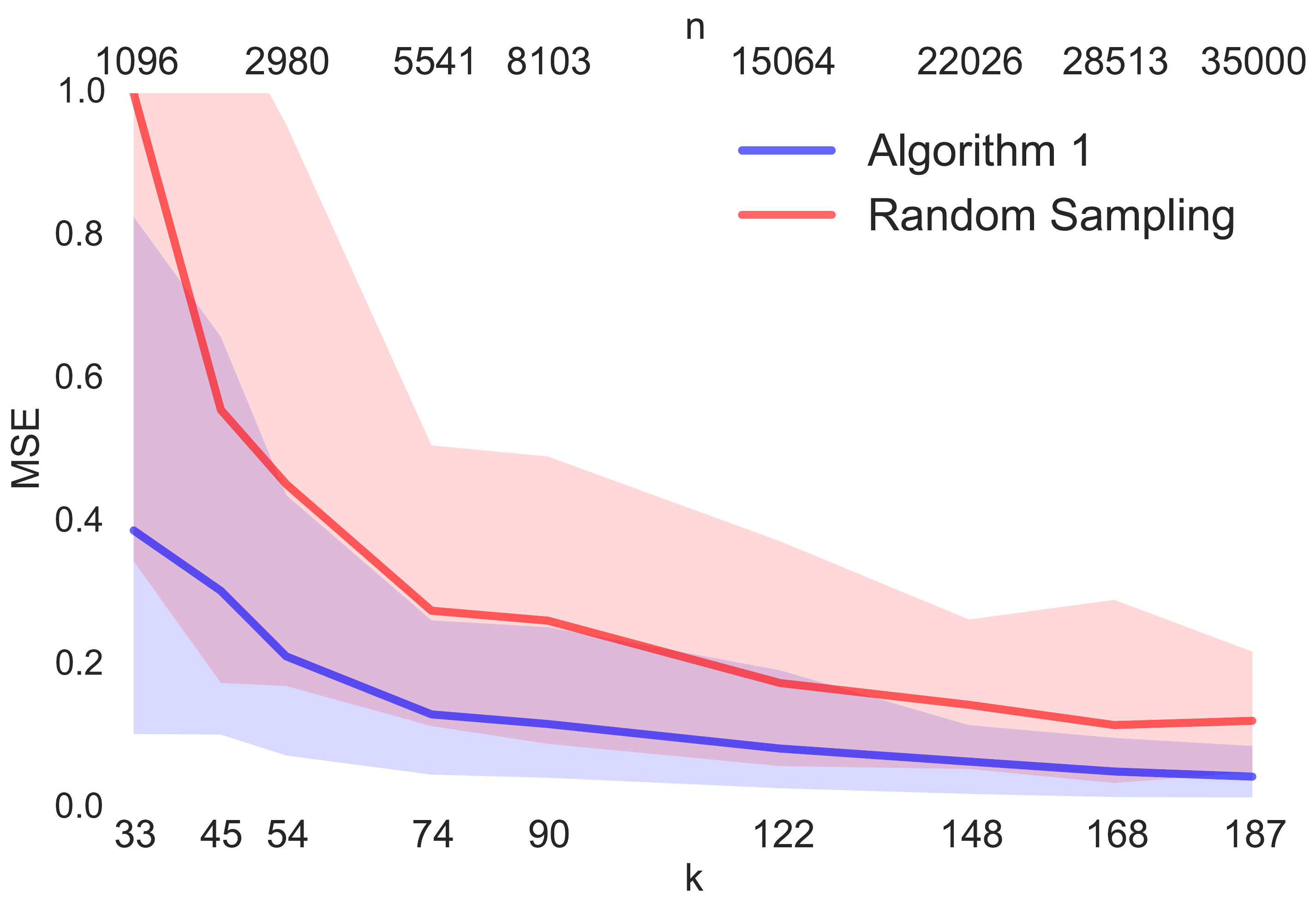}}
  \quad
      \subfigure[With $k = \sqrt{n}$, $d = 10$.]{\includegraphics[width=0.45 \columnwidth]{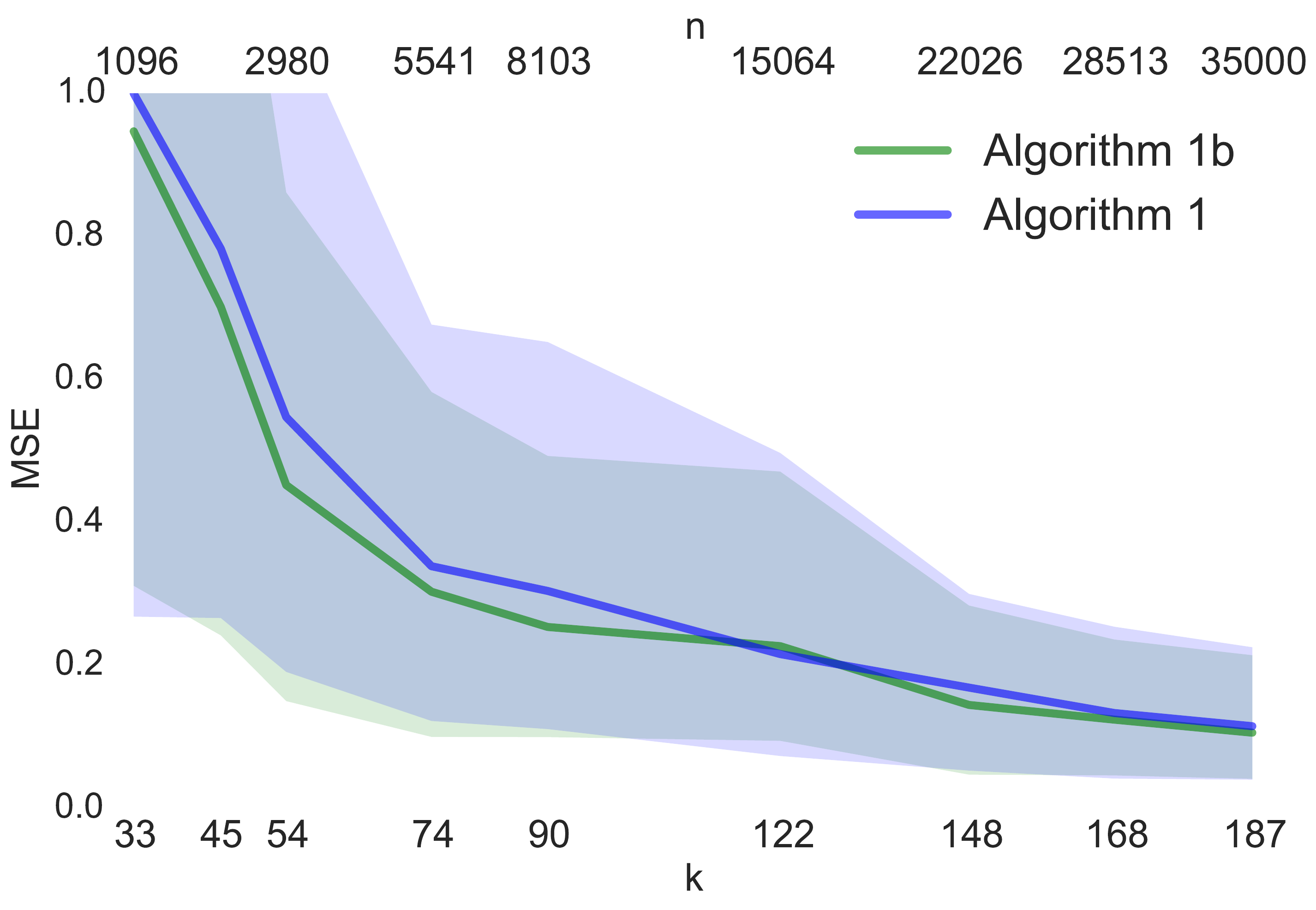}}
      \subfigure[Laplace Correlated Data via Gaussian Copula; Uniform vs.\ Optimal Weights.]{\includegraphics[width=0.45 \columnwidth]{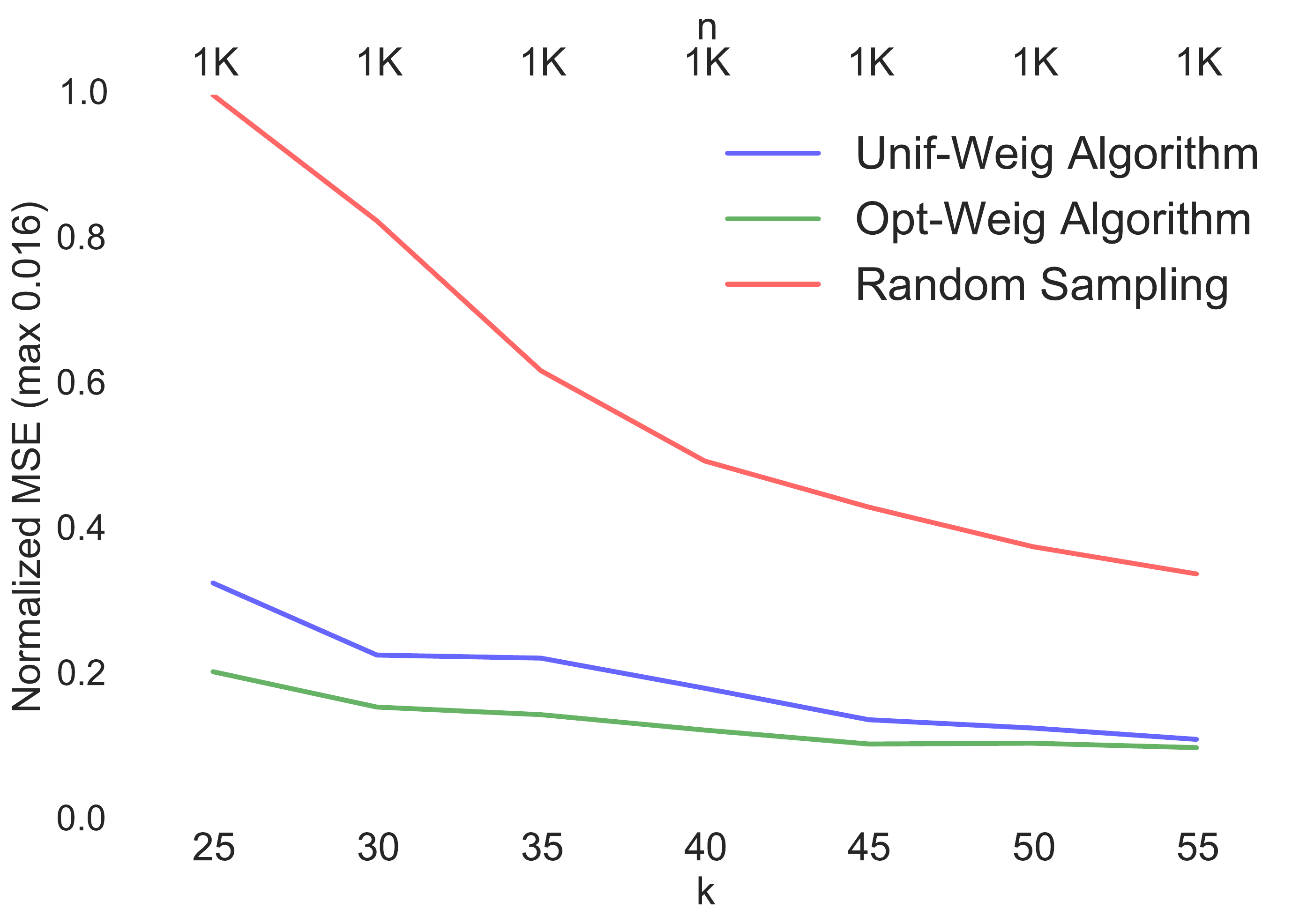}}
 \caption{In (a), (b), $\mathrm{MSE}$ of $\hat\beta_{OLS}$; $\mathcal{N}(0, \Sigma)$ data, $(0.05, 0.95)$ conf.\ intervals.}
 \label{nw_linear_plots}
\end{figure*}

\subsection{Synthetic Non-Linear Data}
The theory and algorithms presented in this paper are based on the linearity of the model. To understand the impact of this assumption, 
we perform an experiment where the response model was $y = x^T\beta + \psi x^Tx$ for various values of $\psi$, and $\beta_i \sim U(-5, 5)$.
Note that high-order terms and transformations can easily be included in the design matrix (not done in this case).
As expected, the results in Figure \ref{nonlinear-sim1} show an intersection point.
The active learning algorithms are robust to some level of non-linearity but, at some point, random sampling becomes more effective.

\begin{figure}[ht]
\vskip 0.2in
\begin{center}
\centerline{\includegraphics[width=0.6 \columnwidth]{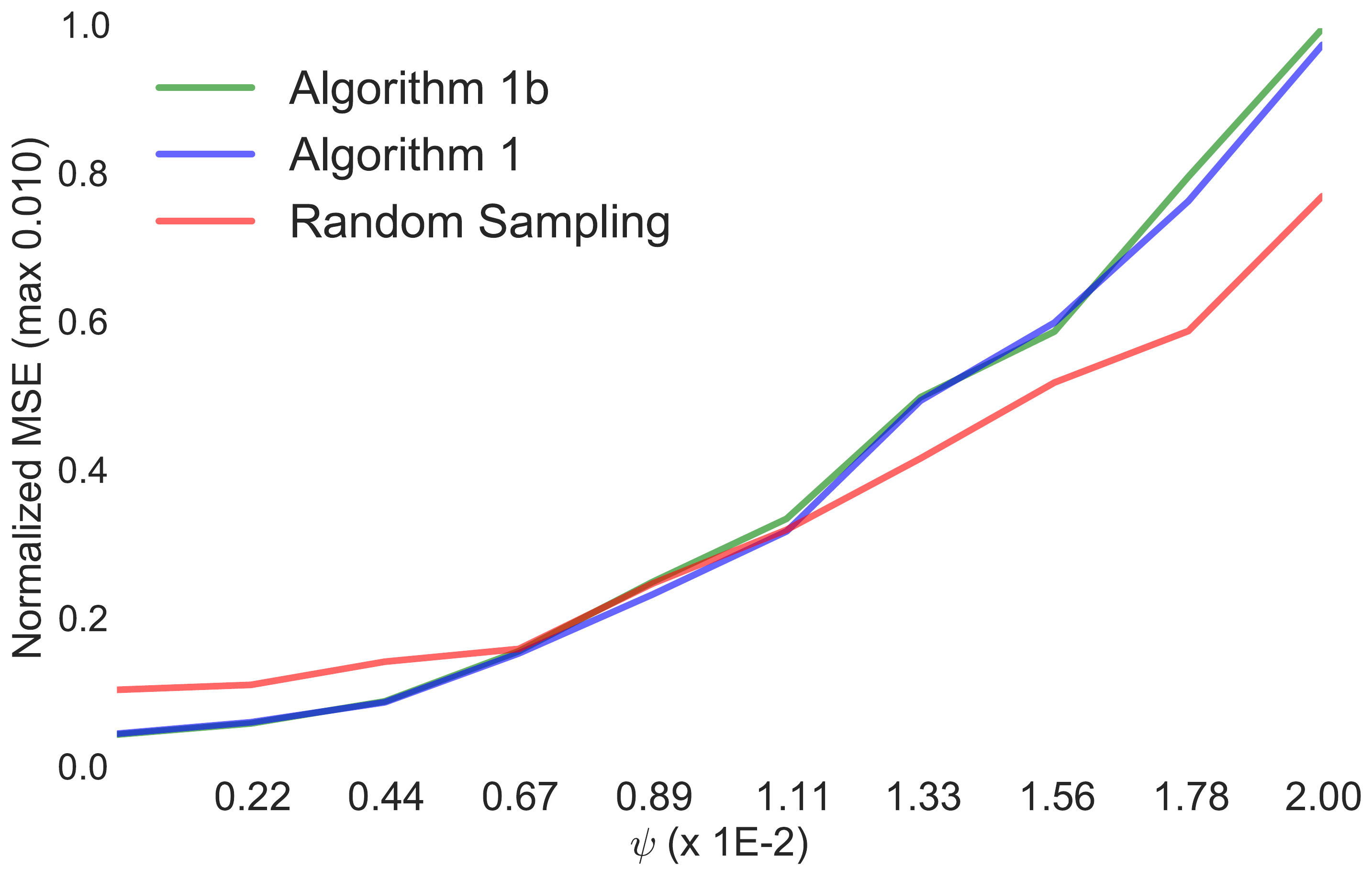}}
\caption{Model is $y = \sum_i \beta_i x_i + \psi \sum_{i} x_i^2$.}
\label{nonlinear-sim1}
\end{center}
\vskip -0.2in
\end{figure}

\subsection{Regularization}
An appealing property of the proposed algorithms is that their gain is preserved under regularized estimators such as ridge and lasso.
This is specially relevant as it allows for higher dimensional models where transformations and interactions of the original variables are added to better capture non-linearities in the data and regularization is used to avoid overfitting.
In fact, our algorithm can be thought of as a type of regularizing process.

\

\begin{figure*}[htp]
  \centering
  \subfigure[Ridge; $d = 10, k = \sqrt{n}$.]{\includegraphics[width=0.45 \columnwidth]{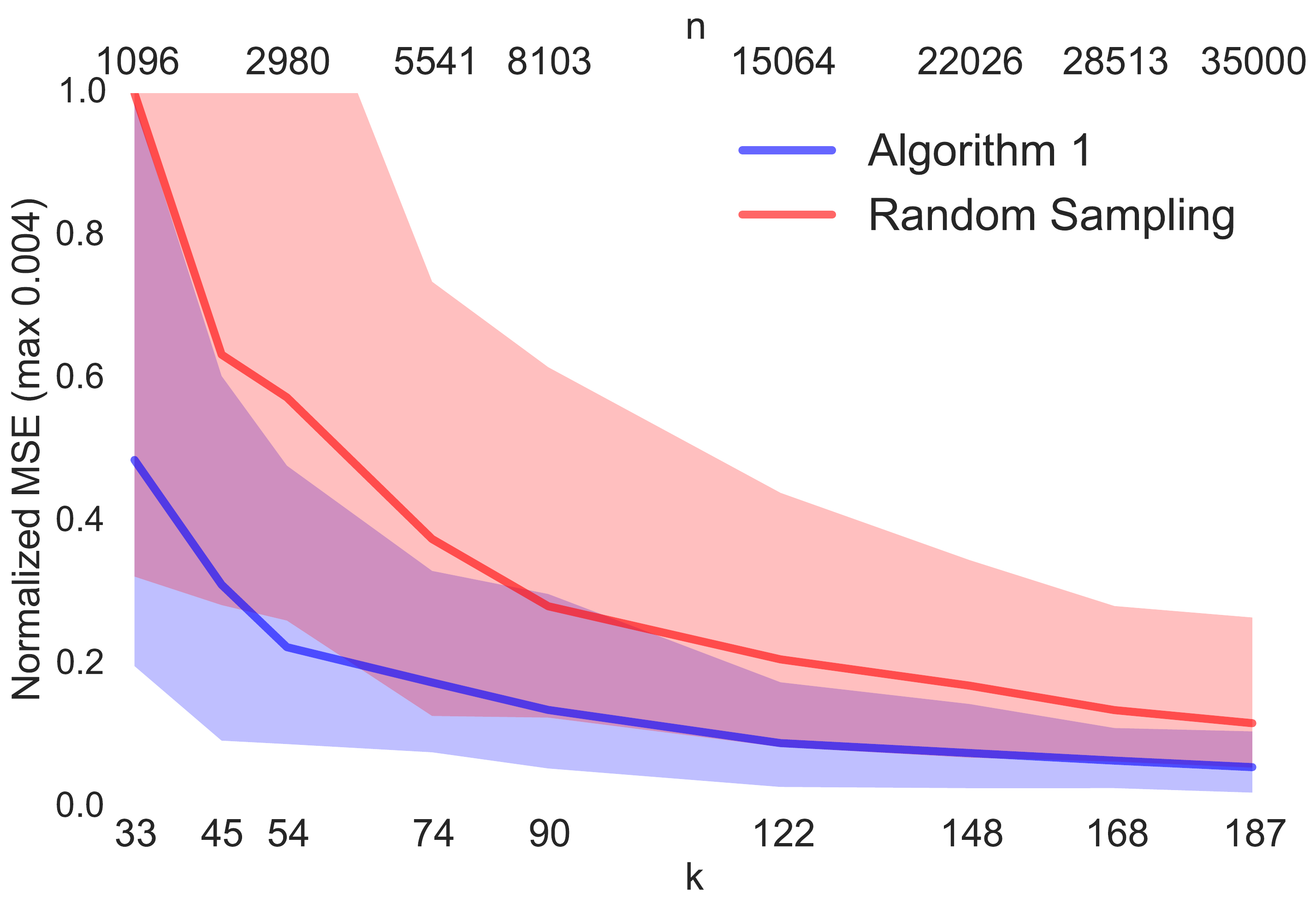}}\quad
  \subfigure[Lasso; $n = 5000, k = 150, d = 70$.]{\includegraphics[width=0.45 \columnwidth]{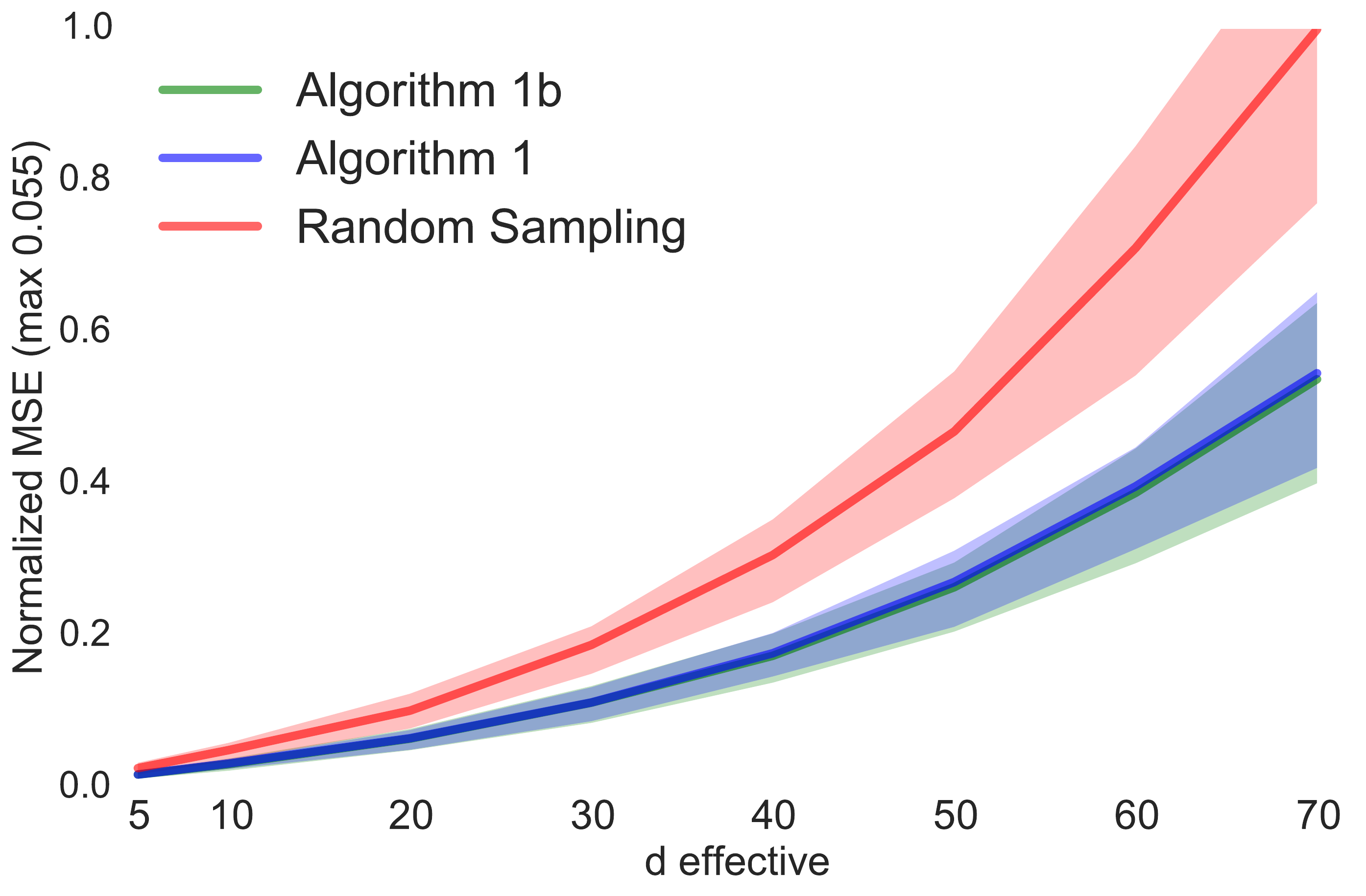}}
    \subfigure[Lasso; $n = 1000, k = 100, d_{\mathrm{eff}} = 7$]{\includegraphics[width=0.45 \columnwidth]{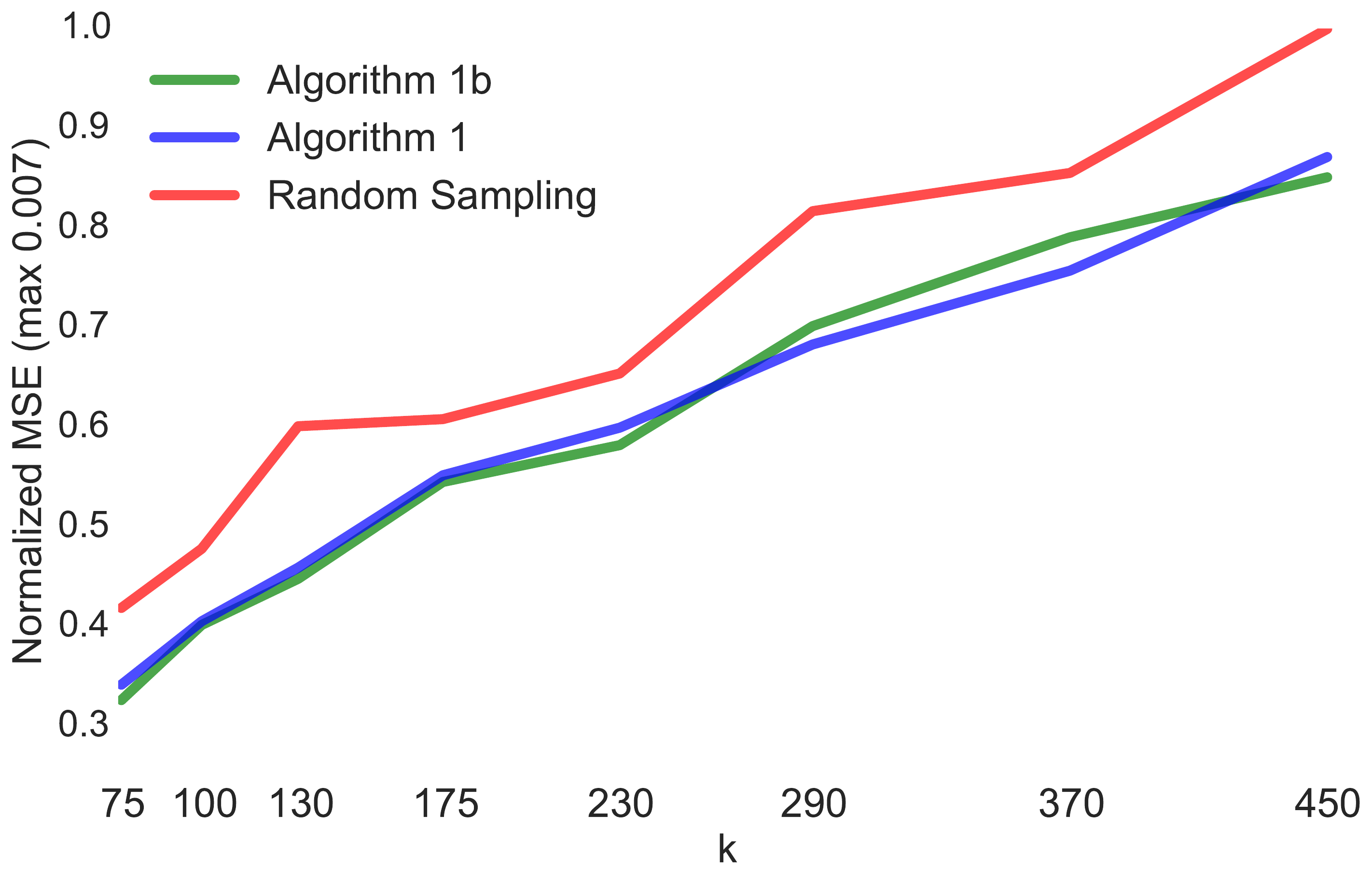}}
     \caption{$\mathrm{MSE}$ of regularized estimators, $\lambda = 0.01$; white Gaussian obs. The $(0.05, 0.95)$ confidence intervals in (a), and $(0.25, 0.75)$ in (b).}
 \label{regularized_plots}
\end{figure*}

We repeated the first experiment from the linear model simulations, using the ridge estimator with $\lambda = 0.01$.
Figure \ref{regularized_plots} (a) shows that the average MSEs of Algorithms \ref{alg:threshold} and 1b strongly outperform the results of random sampling.
Their variance is less than 30\% that of random sampling in all cases. 

\

We performed two experiments with Lasso estimators to investigate the behavior of our algorithms in the presence of \emph{sparse} models.
We do \emph{not} test  Algorithm 2 here, but only simple thresholding approaches.
First, we fixed $n = 5000$, $k = 150$, $d = 70$ and white Gaussian data.
The dimension of the latent subspace, or effective dimension of the model, ranges from $d_{\text{eff}} = 5$ to $d_{\text{eff}} = 70$.
Results are shown in Figure \ref{regularized_plots} (b).
Algorithm \ref{alg:threshold} and Algorithm 1b strongly improve the performance of random sampling, while their variance is at most half that of random sampling.

\

In the second experiment, we fixed $d_{\text{eff}} = 7$, and progressively increased the dimension of the space $d$ from $d = 70$ to $d = 450$.
Also, we kept fixed $n = 1000$ and $k = 100$.
Results are shown in Figure \ref{regularized_plots} (c).

\

Thresholding algorithms consistently decrease the MSE of the lasso estimator with respect to random sampling, even though we are adding a large number of purely noisy dimensions.
The reason is simple.
While these algorithms do not actively try to find the latent subspace (Algorithm 2 does), their observations will be, on average, larger in those dimensions too. 
There may be ways to leverage this fact, like batched approaches where weights $\xi$ are updated by giving more importance to promising dimensions.

\subsection{Real World Datasets}

The Combined Cycle Power dataset has 9568 observations.
The outcome is the net hourly electrical energy output of the plant, and it has $d = 4$ covariates: temperature, pressure, humidity, and exhaust vacuum.
In Figure \ref{rworld-sim1}, we see the phenomenon explained in the main paper (for large $k$, the gain vanishes).
In this case, and after adding all second order interactions, active learning solves the problem.
Random sampling with interactions is not shown as the error was much larger.

\begin{figure}[htp]
\vskip 0.2in
\begin{center}
\centerline{\includegraphics[width=0.6 \columnwidth]{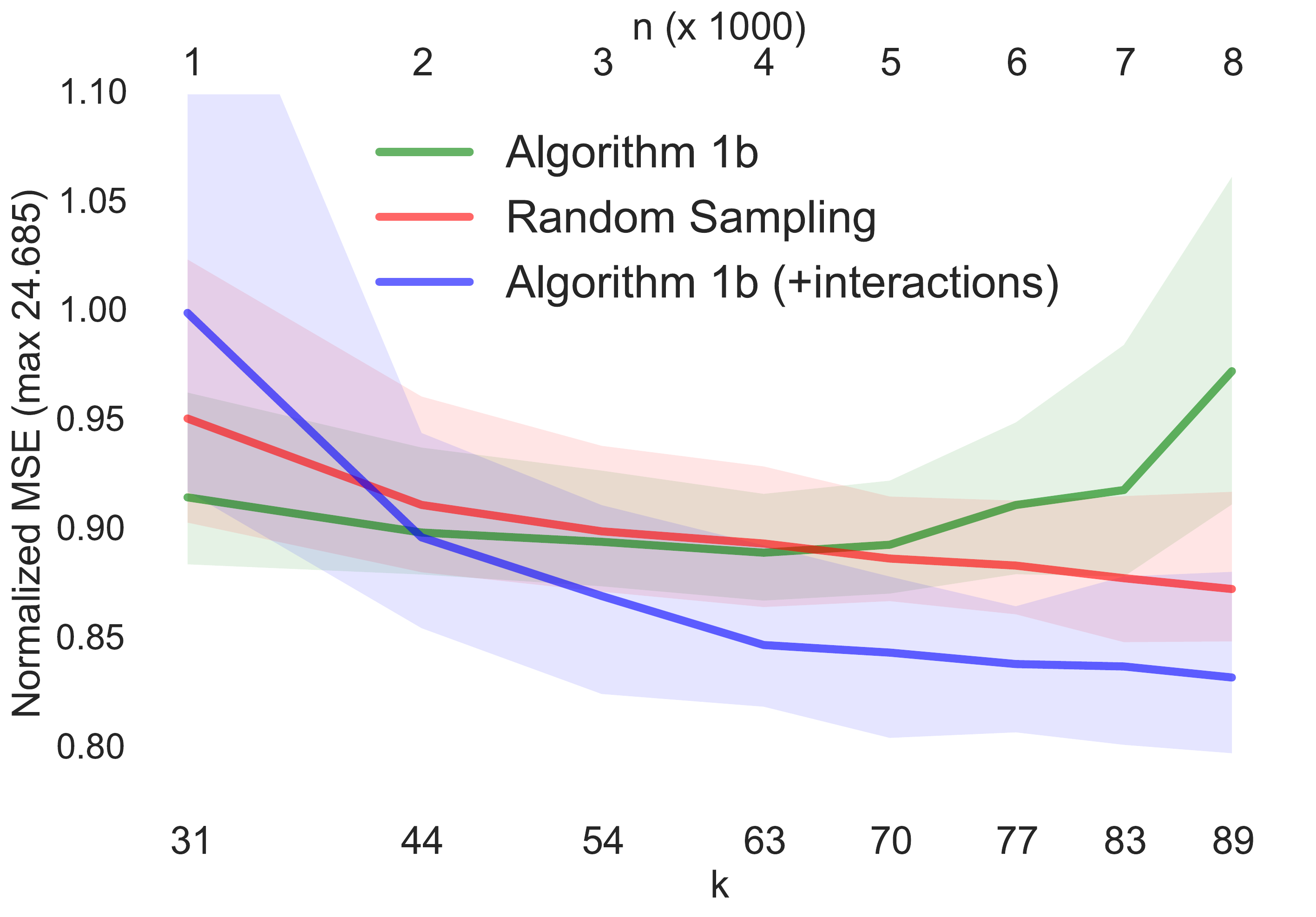}}
\caption{Combined Cycle Power (150 iters).}
\label{rworld-sim1}
\end{center}
\vskip -0.2in
\end{figure}

\

In addition, in Figure \ref{scatterplots} we show the scatterplots of the datasets used in the paper (we omitted the YearPredictionMSD dataset as $d = 90$).

\begin{figure*}[htp]
  \centering
  \subfigure[Protein Structure Dataset.]{\includegraphics[width=0.45 \columnwidth]{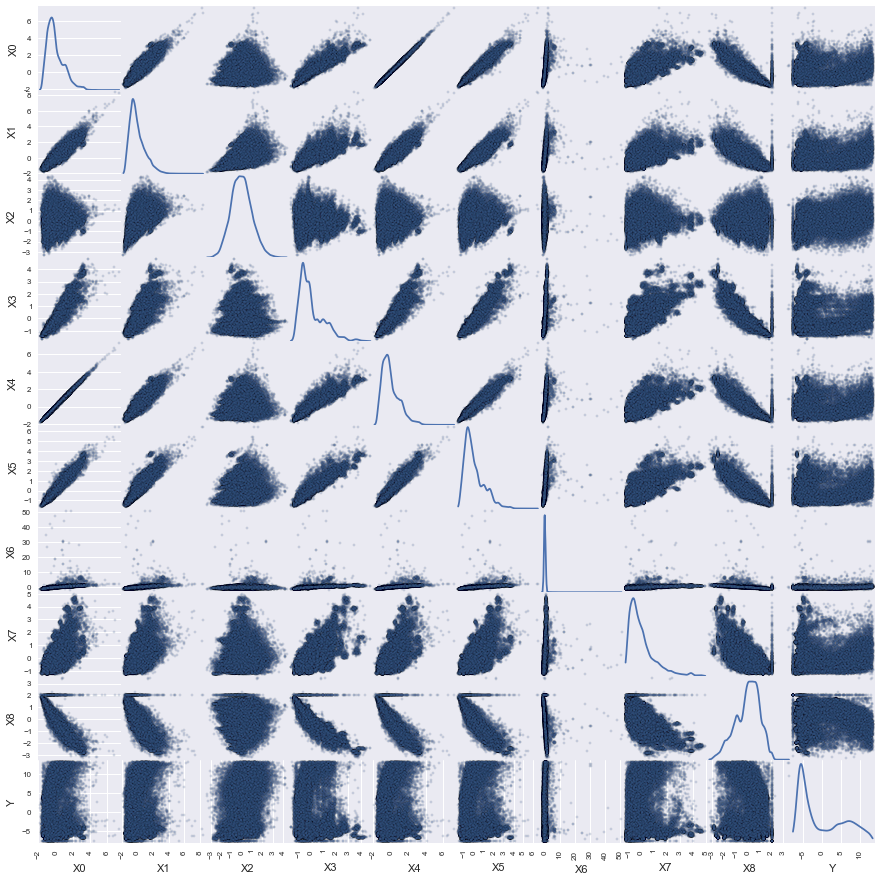}}
  \quad
  \subfigure[Combined Cycle Power Plant Dataset.]{\includegraphics[width=0.45 \columnwidth]{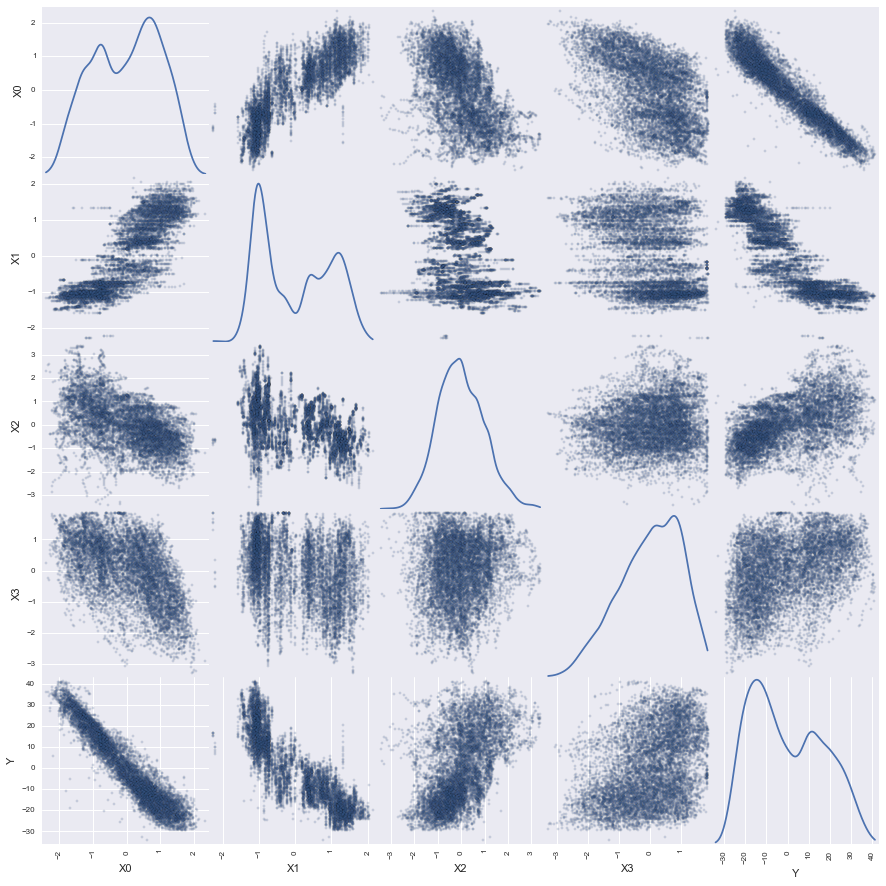}}
  \subfigure[Bike Sharing Dataset.]{\includegraphics[width=0.45 \columnwidth]{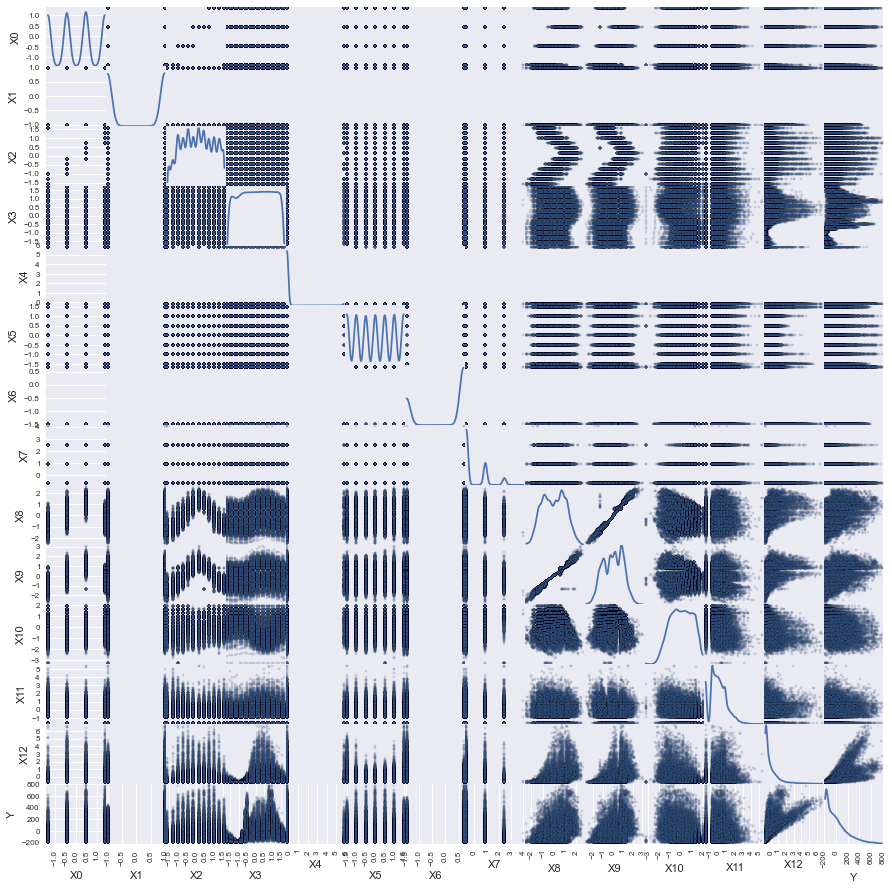}}
 \caption{Scatter Plots of Real World Datasets.}
 \label{scatterplots}
\end{figure*}

\end{document}